\documentclass{article}
\pdfoutput=0
\usepackage{amsfonts}
\usepackage{amsmath, amsthm, amssymb}
\usepackage{epsfig} 
\usepackage{subfigure}
\usepackage{graphicx} 
\usepackage{epsfig} 
\usepackage{subfigure}
\usepackage[numbers]{natbib}
\usepackage{geometry}
\geometry{left=3.2cm,right=3.2cm,top=3.15cm,bottom=3.1cm}

\usepackage{algorithm}
\usepackage{algorithmic}
\usepackage{booktabs}
\usepackage{amsthm}
\usepackage{amsfonts}
\usepackage{amssymb}
\usepackage{amsmath}
\title{A Theoretical Analysis of NDCG Type Ranking Measures}

\author{Yining Wang ({antoniowyn@gmail.com}) \\
        Institute for Interdisciplinary Information Sciences, Tsinghua University,\\
        Beijing, P.R.China
        \and
        Liwei Wang ({wanglw@cis.pku.edu.cn}) \\
        School of Electronics Engineering and Computer Science,\\
        Peking University \\
        Beijing, P.R.China
        \and
        Yuanzhi Li ({invinciblec.lee@gmail.com}) \\
        Institute for Interdisciplinary Information Sciences, Tsinghua University,\\
        Beijing, P.R.China
        \and
        Di He ({wolfink@gmail.com}) \\
        School of Electronics Engineering and Computer Science,\\
        Peking University \\
        Beijing, P.R.China
        \and
        Tie-Yan Liu ({Tie-Yan.Liu@microsoft.com})\\
        Microsoft Research Asia,\\
        Beijing, P.R. China
        \and
        Wei Chen ({wche@microsoft.com}) \\
        Microsoft Research Asia,\\
        Beijing, P.R. China
}

\begin{document}
\maketitle
\newcommand{\fix}{\marginpar{FIX}}
\newcommand{\new}{\marginpar{NEW}}


\newtheorem{definition}{Definition}
\newtheorem{theorem}{Theorem}
\newtheorem{proposition}{Proposition}
\newtheorem{corollary}{Corollary}
\newtheorem{conjecture}{Conjecture}
\newtheorem{remark}{Remark}
\newtheorem{claim}{Claim}
\newtheorem{lemma}{Lemma}

\newcommand{\ud}{\mathrm{d}}
\newcommand{\li}{\mathrm{Li}}
\newcommand{\sli}{\mathrm{li}}
\newcommand{\Span}{\mathrm{span}}
\newcommand{\mc}{\mathcal}

\maketitle

\begin{abstract} A central
problem in ranking is to design a ranking measure for evaluation of
ranking functions. In this paper we study, from a theoretical perspective,
the widely used Normalized Discounted Cumulative Gain (NDCG)-type ranking measures. Although there are extensive empirical studies of NDCG, little is known about its theoretical properties. We first show that, whatever the ranking function is, the
standard NDCG which adopts a logarithmic discount, converges to $1$
as the number of items to rank goes to infinity. On the first sight,
this result is very surprising. It seems to imply that NDCG cannot differentiate good and bad
ranking functions, contradicting to the empirical success of NDCG in
many applications. In order to have a deeper understanding of ranking measures in general, we propose a notion referred to as \emph{consistent distinguishability}. This notion captures the intuition that a ranking measure should have such a property: For every pair of substantially different ranking functions, the ranking measure can decide which one is better in a \emph{consistent} manner on almost all datasets. We show that NDCG with logarithmic discount has consistent distinguishability although it converges to the same limit for all ranking functions. We next characterize the set of all feasible discount functions for NDCG according to the concept of consistent distinguishability. Specifically we show that whether NDCG has consistent distinguishability depends on how fast the discount decays, and $r^{-1}$ is a critical point. We then turn to the cut-off version of NDCG,
i.e., NDCG@k. We analyze the distinguishability of NDCG@k for various choices of k and the discount functions. Experimental results on real Web search datasets agree well with the theory.\\
\end{abstract}

\section{Introduction}\label{section:Introduction}

Ranking has been extensively studied in information retrieval,
machine learning and statistics. It plays a central role in various
applications such as search engine, recommendation system, expert
finding, to name a few. In many situations one wants to have, by
learning, a good ranking function \citep{crammer2001pranking,
rankboost, Joachims02}. Thus a fundamental problem is how to design
a ranking measure to evaluate the performance of a ranking function.

Unlike classification and regression for which there are simple and
natural performance measures, evaluating ranking functions has
proved to be more difficult. Suppose there are $n$ objects to rank.
A ranking evaluation measure must induce a total order on the $n!$
possible ranking results. There seem to be many ways to define
ranking measures and several evaluation measures have been proposed
\citep{chapelle2009expected, turpin2006user, baeza1999modern,
agarwal2004generalization, rudin2009p}. In fact, as pointed out by
some authors, there is no single optimal ranking measure that works
for any application \citep{croft2010search}.

The focus of this work is the Normalized Discounted Cumulative Gain
(NDCG) which is one of the most popular evaluation measures in Web
search \citep{NDCG1, NDCG2}. NDCG has two advantages compared to
many other measures. First, NDCG allows each retrieved document has
graded relevance while most traditional ranking measures only allow
binary relevance. That is, each document is viewed as either
relevant or not relevant by previous ranking measures, while there
can be degrees of relevancy for documents in NDCG. Second, NDCG
involves a discount function over the rank while many other measures
uniformly weight all positions. This feature is particularly
important for search engines as users care top ranked documents much
more than others.

The importance of NDCG as well as other ranking measures in modern
search engines is not limited as evaluation metrics. Currently
ranking measures are also used as guidance for design of ranking
functions due to works from the learning to rank area. Although
early results of learning to rank often reduce ranking problem to
classification or regression \citep{crammer2001pranking, rankboost,
Joachims02, nallapati2004discriminative, balcan2008robust}, recently
there is evidence that learning a ranking function by optimizing a
ranking measure such as NDCG is a promising approach
\citep{valizadegan2009learning, yue2007support}. However, using the
ranking measure as objective function to optimize is computationally
intractable. Inspired by approaches in classification, some state of
the art algorithms optimize a surrogate loss instead
\citep{quoc2007learning, xia2008listwise}.

In the past a few years, there is rapidly growing interest in
studying consistency of learning to rank algorithms that optimize
surrogate losses. Such studies are motivated by the research of
consistency of surrogate losses for classification
\citep{zhang2004statistical1, bartlett2006convexity,
zhang2004statistical2, tewari2007consistency}, which is a
well-established theory in machine learning. Consistency of ranking
is more complicated than classification as there are more than one
possible ranking measures. One needs to study consistency with
respect to a specific ranking measure. That is, whether the
minimization of the surrogate leads to optimal predictions according
to the risk defined by the given evaluation measure.

The research of consistency for ranking was initiated in
\citep{cossock2008statistical, duchi2010consistency}. In fact,
\cite{duchi2010consistency} showed that no convex surrogate loss can
be consistent with the Pairwise Disagreement (PD) measure. This
result was further generalized in \citep{buffoni2011learning,
Clacutement_NIPS_12}, where non-existence of convex surrogate loss
with Average Precision and Expected Reciprocal Rank were proved.

In contrast to the above negative results, \cite{ravikumar2011ndcg}
showed that there do exist NDCG consistent surrogates. Furthermore,
by using a slightly stronger notion of NDCG consistency they showed
that any NDCG consistent surrogate must be a Bregman distance. In a
sense, these results mean that NDCG is a good ranking measure from a
learning-to-rank point of view.

NDCG is a normalization of the Discounted Cumulative Gain (DCG)
measure. (For formal definition of both DCG and NDCG, please see
Section \ref{section:Preliminaries}.) DCG is a weighted sum of the
degree of relevancy of the ranked items. The weight is a decreasing
function of the rank (position) of the object, and therefore called
discount. The original reason for introducing the discount is that
the probability that a user views a document decreases with respect
to its rank. NDCG normalizes DCG by the Ideal DCG (IDCG), which is
simply the DCG measure of the best ranking result. Thus NDCG measure
is always a number in $[0,1]$. Strictly speaking, NDCG is a family
of ranking measures, since there is flexibility in choosing the
discount function. The logarithmic discount $\frac{1}{\log(1+r)}$,
where $r$ is the rank, dominated the literature and applications. We
will refer to NDCG with logarithmic discount as the \emph{standard}
NDCG. Another discount function appeared in literature is $r^{-1}$,
which is called Zipfian in Information Retrieval \citep{CIKM09}.
Search engine systems also use a cut-off top-k version of NDCG. That
is, the discount is set to be zero for ranks larger than $k$. Such
NDCG measure is usually referred to as NDCG@k.

Given the importance and popularity of NDCG, there have been
extensive studies on this measure, mainly in the field of
Information Retrieval \citep{al2007relationship, CIKM09, aslam2005maximum, voorhees2001evaluation, sakai2006evaluating}. All these research are conducted from an empirical perspective by doing experiments on benchmark datasets.
Although these works gained insights
about NDCG, there are still important issues unaddressed. We list a
few questions that naturally arise.

\begin{itemize}
\item As pointed out in \citep{croft2010search}, there has not been
any theoretically sound justification for using a logarithmic
($\frac{1}{\log(1+r)}$) discount other than the fact that it is a
smooth decay.

\item Is it possible to characterize the class of discount functions
that are feasible for NDCG?

\item For the standard NDCG@k, the discount is a combination of a
very slow logarithmic decay and a hard cut-off. Why don't simply use
a smooth discount that decays fast?
\end{itemize}

In this paper, we study the NDCG type ranking measures and address
the above questions from a theoretical perspective. The goal of our
study is twofold. First, we aim to provide a better understanding
and theoretical justification of NDCG as an evaluation measure.
Second, we hope that our results would shed light and be useful for
further research on learning to rank based on NDCG. Specifically we
analyze the behavior of NDCG as the number of objects to rank
getting large. Asymptotics, including convergence and asymptotic
normality, of many traditional ranking measures have been studied in
depth in statistics, especially for Linear Rank Statistics and
measures that are U-statistics \citep{hajek1967theory,
kendall1938new}. \cite{Clemencon_NIPS_08} observed that ranking
measures such as Area under the ROC Curve (AUC), P-Norm Push and DCG
can be viewed as Conditional Linear Rank Statistics. That is,
conditioned on the relevance degrees of the items, these measures
are Linear Rank Statistics \citep{hajek1967theory}. They show
uniform convergence based on an orthogonal decomposition of the
measure. The convergence relies on the fact that the measure can be
represented as a (conditional) average of a fixed score-generating
function. Part of our work consider the convergence of NDCG and are
closely related to \citep{Clemencon_NIPS_08}. However, their results
do not apply to our problem, because the score-generating function
for NDCG is not fixed, it changes with the number of objects.

\subsection{Our Results}

Our study starts from an analysis of the standard NDCG (i.e., the
one using logarithmic discount). The first discovery is that for
\emph{every} ranking function, the NDCG measure converges to $1$ as
the number of items to rank goes to infinity. This result is surprising. On the first sight
it seems to mean that the widely used
standard NDCG cannot differentiate good and bad ranking systems when
the data is of large size. This problem may be serious because huge dataset is common in
applications such as Web search.

To have a deeper understanding of NDCG, we first study what are the desired properties a good ranking measure should have. In this paper we propose a notion referred to as \emph{consistent distinguishability}, which we believe that every ranking measure needs to have. Before describing the definition of consistent distinguishability, let us see a motivating example. Suppose we want to select, from two ranking functions $f_1, f_2$, a better one on ranking ``sea'' images (that is, if an image contains sea, we hope it is ranked near the top). Since there are billions of sea images on the web, a commonly used method is to randomly draw, say, a million data and evaluate the two functions on them. A crucial assumption underlying this approach is that the evaluation result will be ``stable'' on large datasets. That is, if on this randomly drawn dataset $f_1$ is better than $f_2$ according to the ranking measure, then with high probability over the random draw of another large dataset, $f_1$ should still be better than $f_2$. In other words, $f_1$ is \emph{consistently} better than $f_2$ according to the ranking measure.

Our definition of consistent distinguishability captures the above intuition. It requires that for two substantially different ranking functions, the ranking measure can decide which one is better consistently on almost all datasets. (See Definition \ref{definition:comparable_whp} for formal description.) In a broader sense, consistent distinguishability is a desired property to all performance statistics (not only to ranking). For classification and regression, this property trivially holds because of the simplicity of the evaluation measures. For ranking however, things are much more complicated. It is not a priori clear whether important ranking measures such as NDCG have consistent distinguishability.


Our next main result shows that although the standard NDCG always
converges to $1$, it can consistently distinguishes every pair of substantially
different ranking functions. Therefore, if one ignores the numerical scaling problem, standard NDCG is a good ranking measure.

We then study NDCG with other possible discount. We characterize the class of discount functions that are feasible for
NDCG. It turns out that the Zipfian $r^{-1}$ is a critical point. If
a discount function decays slower than $r^{-1}$, the resulting NDCG
measure has strong power of consistent distinguishability. If a
discount decays substantially faster than $r^{-1}$, then it does not
have this desired property. Even more, such ranking measures do
not converge as the number of objects to rank goes to infinity.

Interestingly, this characterization result also provides a better understanding of
the cut-off version NDCG@k. In particular, it gives a theoretical
explanation to the previous question that why popular NDCG@k uses a
combination of slow logarithmic decay and a hard cut-off as its
discount rather than a smooth discount which decays fast.

Finally we consider how to choose the cut-off threshold for NDCG@k
from the distinguishability point of view. We analyze the behavior
of the measure for various choices of $k$ as well as the discount.
We suggest that choosing $k$ as certain function of the size of the
dataset may be appropriate.

The rest of this paper is organized as follows. Section
\ref{section:Preliminaries} provides basic notions and definitions.
Section \ref{section:Main_Results} contains the main theorems and
key lemmas for the distinguishability theorem. The experimental results are given in \ref{section:experiments}. All proofs are given in Appendix
\ref{Section:key_lemmas}-\ref{section:proof_comparable_r^beta}.

\begin{section}{Preliminaries} \label{section:Preliminaries}

Let $\mathcal{X}$ be the instance space, and let $x_1,\ldots,x_n$
($x_i \in \mathcal{X}$) be $n$ objects to rank. Let $\mathcal{Y}$ be
a finite set of degrees of relevancy. The simplest case is
$\mathcal{Y}= \{0,1\}$, where $0$ corresponds to ``irrelevant'' and
$1$ corresponds to ``relevant''. Generally $\mathcal{Y}$ may contain
more numbers; and for $y \in \mathcal{Y}$, the larger $y$ is, the
more relevant it represents. Let $f$ be a ranking
function\footnote{The ranking function we defined is often called
scoring function in literature; and ranking function has a more
general definition: For fixed $n$, a general ranking function can be
any permutation on $[n]$. However, scoring functions are used by
most search engines. Also in this paper we study the behavior of the
ranking measure of a fixed ranking function as $n$ grows, so we
focus on scoring functions. But note that Theorem
\ref{theorem:converge_2_1} and Theorem \ref{prop_unbound} hold for
any sequence of general ranking functions.}. We assume that $f$ is a
mapping from $\mathcal{X}$ to $\mathbb{R}$. For each object $x \in
\mathcal{X}$, $f$ gives it a score $f(x)$. For $n$ objects
$x_1,\ldots,x_n$, $f$ ranks them according to their scores
$f(x_1),\ldots,f(x_n)$. The resulting ranking list, denoted by
$x^f_{(1)},\ldots,x^f_{(n)}$, satisfies $f\left(x^f_{(1)}\right)\ge
\ldots \ge f\left(x^f_{(n)}\right)$.


Let $y_1,\ldots,y_n$ ($y_i \in \mathcal{Y}$) be the degree of
relevancy associated with $x_1,\ldots,x_n$. We will denote by
$S_n=\{(x_1,y_1),\ldots,(x_n,y_n)\}$ the set of data to rank. As in
existing literature \citep{rankboost, clemenccon2008ranking}, we
assume that $(x_1,y_1),\ldots,(x_n,y_n)$ are i.i.d. sample drawn
from an underlying distribution $P_{XY}$ over $\mathcal{X} \times
\mathcal{Y}$. Also let $y^f_{(1)},\ldots,y^f_{(n)}$ be the
corresponding relevancy of $x^f_{(1)},\ldots,x^f_{(n)}$.

The following is the formal definition of NDCG. Here we give a
slightly simplified version tailored to our problem.

\begin{definition}
Let $D(r)$ ($r \ge 1$) be a discount function. Let $f$ be a ranking
function, and $S_n$ be a dataset. The Discounted Cumulative Gain
(DCG) of $f$ on $S_n$ with discount $D$ is defined
as\footnote{Usually DCG is defined as $\mathrm{DCG}_D(f,S_n) =
\sum_{r=1}^{n} G(y^f_{(r)}) D(r)$, where $G$ is a monotone
increasing function (e.g., $G(y)= 2^y -1$). Here we omit $G$ for
notational simplicity. This does not lose any generality as we can
assume that $\mc{Y}$ changes to $G(\mc{Y})$.}
\begin{equation}\label{definition:DCG}{
\mathrm{DCG}_D(f,S_n) =\sum_{r=1}^{n} y^f_{(r)} D(r).}
\end{equation}
Let the Ideal DCG defined as $\mathrm{IDCG}_D(S_n) =
\max_{f'}\sum_{r=1}^{n} y^{f'}_{(r)} D(r)$ be the DCG value of the
best ranking function on $S_n$.


The NDCG of $f$ on $S_n$ with discount $D$ is defined as
\begin{equation}\label{definition:NDCG}{
\mathrm{NDCG}_D(f,S_n)
=\frac{\mathrm{DCG}_D(f,S_n)}{\mathrm{IDCG}_D(S_n)}.}
\end{equation}
\end{definition}

We call NDCG \emph{standard}, if its associated discount function is
the inverse logarithm decay $D(r) = \frac{1}{\log(1+r)}$.
Note that the base of the logarithm does not matter for NDCG, since
constant scaling will cancel out due to normalization. We will
assume it is the natural logarithm throughout this paper.

An important property of eq.(\ref{definition:NDCG}) is that if a
ranking function $f'$ preserves the order of the ranking function
$f$, then $\mathrm{NDCG}_D(f',S_n) = \mathrm{NDCG}_D(f,S_n)$ for all
$S_n$. Here by preserving order we mean that for $\forall x,x' \in
\mathcal{X}$, $f(x)>f(x')$ implies $f'(x)>f'(x')$, and vice versa.
Thus the ranking measure NDCG is not just defined on a single
function $f$, but indeed defined on an equivalent class of ranking
functions which preserve order of each other.

Below we will frequently use a special ranking function $\tilde{f}$
that preserves the order of $f$.

\begin{definition}
Let $f$ be a ranking function. We call $\tilde{f}$ the canonical
version of $f$, which is defined as
\begin{equation*}\small{
\tilde{f}(x) = \Pr_{X \sim P_X}[f(X) \le f(x)].}
\end{equation*}
\end{definition}

The canonical $\tilde{f}$ has the following properties, which can be
easily proved by the definition.

\begin{lemma}
For every ranking function $f$, its canonical version $\tilde{f}$
preserves the order of $f$. In addition, $\tilde{f}(X)$ has uniform
distribution on $[0,1]$.
\end{lemma}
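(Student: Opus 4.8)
The plan is to reduce both claims to elementary properties of the cumulative distribution function (CDF) of the pushforward random variable $Y := f(X)$. Writing $F(t) = \Pr_{X \sim P_X}[f(X) \le t]$ for the CDF of $Y$, the definition of the canonical version reads simply $\tilde{f}(x) = F(f(x))$, i.e.\ $\tilde{f} = F \circ f$. Thus everything follows from the structure of $F$, which is non-decreasing and right-continuous.

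For the order-preservation claim I would argue via the monotonicity of $F$. The backward direction is immediate: if $\tilde{f}(x) > \tilde{f}(x')$, then $F(f(x)) > F(f(x'))$, and since $F$ is non-decreasing this forces $f(x) > f(x')$ (were $f(x) \le f(x')$, monotonicity would give $F(f(x)) \le F(f(x'))$). The forward direction $f(x) > f(x') \Rightarrow \tilde{f}(x) > \tilde{f}(x')$ requires $F$ to be strictly increasing between $f(x')$ and $f(x)$, i.e.\ $\Pr[f(x') < f(X) \le f(x)] > 0$; I would obtain this under the standing assumption that $f(X)$ is a continuous (atomless) random variable with full support on the range of $f$, so that $F$ assigns positive mass to every nondegenerate interval inside that range.

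For the uniformity claim the key tool is the probability integral transform. I would compute the CDF of $U := \tilde{f}(X) = F(Y)$ directly: for $u \in [0,1]$ set $q(u) = \inf\{t : F(t) \ge u\}$, the generalized inverse (quantile). Using the monotonicity of $F$ together with its continuity (so that $F(q(u)) = u$), one shows that the events $\{U \le u\}$ and $\{Y \le q(u)\}$ coincide up to a null set, whence $\Pr[U \le u] = F(q(u)) = u$ for every $u \in [0,1]$. This is exactly the statement that $\tilde{f}(X)$ is uniform on $[0,1]$.

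The main obstacle is not the computation but the degenerate cases: if $f(X)$ has atoms then $F$ has jumps and $F(Y)$ is no longer uniform, while if $F$ has flat stretches then strict order preservation can fail. The clean statement therefore rests on $f(X)$ being atomless; I would make this assumption explicit (or note that it is implied by the standing assumptions on $P_X$) and carry out the generalized-inverse bookkeeping carefully so that the set identity $\{F(Y) \le u\} = \{Y \le q(u)\}$ holds up to probability zero.
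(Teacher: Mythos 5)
Your proof is correct and is exactly the standard probability-integral-transform argument the paper has in mind (the paper omits the proof entirely, saying only that the lemma ``can be easily proved by the definition''). Your caveat is also well taken: the lemma as stated silently requires $f(X)$ to be atomless (and, for strict order preservation, the CDF $F$ to be strictly increasing on the range of $f$), regularity that the paper only makes explicit in later theorems where it assumes $f(X)$ has a positive density on $[a,b]$.
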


Finally, we point out that although originally the discount $D(r)$
is defined on positive integers $r$, below we will often treat
$D(r)$ as a function of a real variable. That is, we view $r$ take
nonnegative real values. We will also consider derivative and
integral of $D(r)$, denoted by $D'(r)$ and $\int D(r) \ud r$
respectively.

\end{section}

\section{Main Results}\label{section:Main_Results}

In this section, we give the main results of the paper. In Section
\ref{subsection:standard_NDCG} we study the standard NDCG, i.e.,
NDCG with logarithmic discount. In Section
\ref{subsection:feasible_discount} we consider feasible discount
other than the standard logarithmic one. We analyze the top-k
cut-off version NDCG@k in Section \ref{subsection_NDCG@k}. For
clarity reasons, some of the results in Section
\ref{subsection:standard_NDCG}, \ref{subsection:feasible_discount},
and \ref{subsection_NDCG@k} are given for the simplest case that the
relevance score is binary. Section \ref{subsection:General_Cases}
provides complete results for the general case.

\subsection{Standard NDCG}\label{subsection:standard_NDCG}

To study the behavior of the standard NDCG, we first consider the
limit of this measure when the number of objects to rank goes to
infinity. As stated in Section \ref{section:Preliminaries}, we
assume the data are i.i.d. drawn from some fixed underlying
distribution. Surprisingly, it is easy to show that for every
ranking function, standard NDCG converges to $1$ almost surely.

\begin{theorem} \label{theorem:converge_2_1}
Let $D(r) = \frac{1}{\log (1+r)}$. Then for every ranking function $f$,
\begin{equation*}\small{
\mathrm{NDCG}_D(f,S_n) \rightarrow 1,~~~~~~~a.s.}
\end{equation*}
\end{theorem}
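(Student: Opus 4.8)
The plan is to sandwich $\mathrm{NDCG}_D(f,S_n)$ between $1$ and a quantity that tends to $1$, using only two facts: that the discount $D(r)=1/\log(1+r)$ is nonnegative and decreasing, and that $\log$ grows so slowly that $D$ is essentially constant across any block of positions that is polynomially close to $n$. Since $\mathrm{DCG}_D(f,S_n)\le\mathrm{IDCG}_D(S_n)$ by definition of the ideal ranking, we always have $\mathrm{NDCG}_D(f,S_n)\le 1$, so it suffices to bound the ratio from below, i.e.\ to lower bound the numerator and upper bound the denominator and show they match asymptotically.

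First I would obtain a universal lower bound on the numerator. Because $D$ is decreasing and every relevance value is nonnegative, for \emph{any} ranking function $f$,
\begin{equation*}
\mathrm{DCG}_D(f,S_n)=\sum_{r=1}^{n} y^f_{(r)}D(r)\ge D(n)\sum_{r=1}^{n} y^f_{(r)}=\frac{1}{\log(1+n)}\sum_{i=1}^{n} y_i.
\end{equation*}
This argument uses only that $f$ reorders the multiset $\{y_i\}$, so it applies to an arbitrary permutation, matching the footnote's claim that the theorem holds for general ranking functions. For the denominator I would split the ideal ranking at a threshold $T=T(n)$: the top $T$ positions contribute at most $(\max\mathcal{Y})\,T\,D(1)$, while the remaining positions contribute at most $D(T)\sum_i y_i$. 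Choosing $T=n/(\log n)^2$ makes the head term $O\!\left(n/(\log n)^2\right)$, which is negligible against $n/\log n$, while $D(T)=1/\log(2+T)\sim 1/\log n$ agrees asymptotically with $D(n)=1/\log(1+n)$. Hence $\mathrm{IDCG}_D(S_n)\le(1+o(1))\frac{1}{\log n}\sum_i y_i$.

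Combining the two bounds and invoking the strong law of large numbers, $\frac1n\sum_i y_i\to\mathbb{E}[Y]$ almost surely, both the numerator's lower bound and the denominator's upper bound equal $(1+o(1))\frac{n\,\mathbb{E}[Y]}{\log n}$ on the almost sure event where the average converges; the remaining $o(1)$ terms are deterministic. Therefore the ratio tends to $1$ a.s.\ whenever $\mathbb{E}[Y]>0$ (the degenerate case $\mathbb{E}[Y]=0$, where all labels are $0$ a.s.\ and NDCG is the undefined $0/0$, is excluded). The one delicate point—and the crux of why the logarithmic discount behaves this way—is the threshold step: one must pick $T$ large enough that $\log T\sim\log n$, so that $D(T)$ and $D(n)$ coincide asymptotically, yet small enough relative to $\sum_i y_i=\Theta(n)$ that the heavily weighted head positions become asymptotically irrelevant. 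Both are simultaneously possible precisely because $\log$ varies so slowly; the same step would break for a polynomially decaying discount, which foreshadows the $r^{-1}$ critical threshold established later in the paper.
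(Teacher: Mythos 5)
Your proof is correct, and it takes a mildly but genuinely different route from the paper's. Both arguments rest on the same core observation---that the logarithmic discount is so flat that numerator and denominator are both $(1+o(1))\frac{\sum_i y_i}{\log n}$---but the executions differ. The paper works in the binary case, where $\mathrm{IDCG}_D(S_n)=\sum_{r=1}^{l}D(r)$ with $l$ the number of relevant items; it controls $l$ by a Chernoff bound, lower-bounds the numerator by pushing all relevant items to the bottom positions, evaluates both sums via the logarithmic-integral asymptotics $\mathrm{li}(t)\sim t/\log t$, and upgrades to almost sure convergence by Borel--Cantelli. You instead use the cruder numerator bound $D(n)\sum_i y_i$, upper-bound $\mathrm{IDCG}$ by an elementary head/tail split at $T=n/(\log n)^2$, and invoke the strong law of large numbers directly. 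Your route avoids the $\mathrm{li}$ asymptotics entirely, covers general nonnegative $\mathcal{Y}$ and arbitrary permutation-valued ranking functions in one stroke (the paper defers this to a ``generalization is straightforward'' remark), and isolates exactly where slow decay of $D$ enters, which indeed foreshadows the $r^{-1}$ threshold of Theorem \ref{prop_unbound}. The paper's route, in exchange, yields an explicit exponential tail bound $2e^{-2n^{1/3}}$ at each fixed $n$, a quantitative ingredient it reuses in the distinguishability analysis, whereas the SLLN gives no rate. Your dismissal of the degenerate case $\mathbb{E}[Y]=0$, where NDCG is $0/0$, is no less rigorous than the paper's own one-line treatment of $p=0$.
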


The proof is given in Appendix
\ref{section:proof_feasible_discount}.\\

At the first glance, the above result is quite negative for standard
NDCG. It seems to say that in the limiting case, standard NDCG
cannot differentiate ranking functions. However, Theorem
\ref{theorem:converge_2_1} only considers the limits. To have a
better understanding of NDCG, we need to make a deeper analysis of
its power of distinguishability. In particular, Theorem \ref{theorem:converge_2_1} does not rule out the possibility that the standard NDCG can consistently distinguish substantially different ranking functions. Below we give the formal definition that two ranking functions are consistently distinguishable by a ranking measure $\mc{M}$.





\begin{definition}\label{definition:comparable_whp}
Let $(x_1,y_1),(x_2,y_2),\ldots$ be i.i.d. instance-label pairs
drawn from the underlying distribution $P_{XY}$ over $\mc{X} \times
\mc{Y}$. Let $S_n=\{(x_1,y_1),\ldots,(x_n,y_n)\}$. A pair of ranking
functions $f_0$, $f_1$ is said to be \emph{consistently distinguishable} by a ranking measure $\mc{M}$, if there exists
a negligible function\footnote{A negligible function
$\mathrm{neg}(N)$ means that for $\forall c, \mathrm{neg}(N)<
N^{-c}$ for sufficiently large $N$.} $\mathrm{neg}(N)$ and $b \in
\{0,1\}$ such that for every sufficiently large $N$, with
probability $1-\mathrm{neg}(N)$,
\begin{equation*}\small{
\mc{M}(f_b,S_{n}) > \mc{M}(f_{1-b},S_{n}) ,}
\end{equation*}
holds for all $n \ge N$ simultaneously.
\end{definition}

Consistent distinguishability is appealing. One would like a
ranking measure $\mc{M}$ to have the property that every two
substantially different ranking functions are consistently
distinguishable by $\mc{M}$. The next theorem shows that standard NDCG does have such a desired
property. For clarity, here we state the
theorem for the simple binary relevance case, i.e., $\mc{Y} =
\{0,1\}$. It is easy to extend the result to the general case that
$\mc{Y}$ is any finite set.

\begin{theorem}\label{theorem:comparable}
For every pair of ranking functions
$f_0, f_1$, let $\overline y^{f_i}(s) = \Pr[Y=1|\tilde{f}_i(X)=s]$,
$i=0,1$. Assume $\overline y^{f_0}(s)$ and $\overline y^{f_1}(s)$
are H\"{o}lder continuous in $s$. Then, unless $\overline y^{f_0}(s)
= \overline y^{f_1}(s)$ almost everywhere on $[0,1]$, $f_0$ and
$f_1$ are consistently distinguishable by standard
NDCG.
\end{theorem}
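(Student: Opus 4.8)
My plan is to reduce the NDCG comparison to a comparison of the unnormalized DCG, then show that the \emph{expected} DCG gap is polynomially large in $n$ (up to logarithmic factors) with a fixed sign, so that it dominates the statistical fluctuations uniformly over all large $n$. First I would note that $\mathrm{IDCG}_D(S_n)$ depends only on the multiset of labels $\{y_1,\dots,y_n\}$, not on the ranking function; hence on a common dataset the inequality $\mathrm{NDCG}_D(f_0,S_n)>\mathrm{NDCG}_D(f_1,S_n)$ is equivalent to $\mathrm{DCG}_D(f_0,S_n)>\mathrm{DCG}_D(f_1,S_n)$ (the IDCG being positive once a relevant item appears, which holds with overwhelming probability). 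So it suffices to study the single random variable $Z_n:=\mathrm{DCG}_D(f_0,S_n)-\mathrm{DCG}_D(f_1,S_n)$, and to choose $b$ and $N$ so that $Z_n$ (or $-Z_n$) stays positive for all $n\ge N$ with probability $1-\mathrm{neg}(N)$.

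Next I would compute $\mathbb{E}[Z_n]$. Writing $g(s)=\overline y^{f_0}(s)-\overline y^{f_1}(s)$ and using that the item placed at rank $r$ by $\tilde f_i$ carries the $r$-th largest of $n$ i.i.d.\ $\mathrm{Unif}[0,1]$ canonical scores, a $\mathrm{Beta}(n{+}1{-}r,r)$ variable concentrating at $1-r/n$, H\"older continuity (with exponent $\gamma$) gives $\mathbb{E}[y^{f_i}_{(r)}]=\overline y^{f_i}(1-r/n)+O(n^{-\gamma/2})$, so that $\mathbb{E}[Z_n]=\sum_{r=1}^n \phi(r/n)D(r)+(\text{error})$ with $\phi(t):=g(1-t)$. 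The crucial structural fact is $\int_0^1\phi=\int_0^1 g=0$: since $\tilde f_i(X)$ is uniform, $\int_0^1\overline y^{f_i}(s)\,\ud s=\Pr[Y=1]$ for both $i$, so the two relevance marginals coincide. This cancellation is exactly why the problem is delicate, since the leading $\Theta(n/\log n)$ part of each DCG is identical; it also explains, consistently, the convergence to $1$ in Theorem \ref{theorem:converge_2_1}.

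The heart of the argument is the refined asymptotics of $\Delta_n:=\sum_r\phi(r/n)/\log(1+r)$ subject to $\int_0^1\phi=0$. On the bulk range $r\in[\sqrt n,n]$ one has $\log(1+r)=\log n+\log(r/n)+o(1)$ with $|\log(r/n)/\log n|\le\tfrac12$, so the geometric expansion $1/\log(1+r)=\sum_{j\ge0}(-1)^j(\log(r/n))^j/(\log n)^{j+1}$ is valid and can be integrated term by term, while $r<\sqrt n$ together with the Beta/H\"older errors contribute only $O(\sqrt n/\log n)$, which I would bound separately. The $j=0$ term vanishes because $\int\phi=0$, yielding $\Delta_n\sim(-1)^{k^*}\frac{n}{(\log n)^{k^*+1}}\int_0^1\phi(t)(\log t)^{k^*}\,\ud t$, where $k^*\ge1$ is the smallest index with a nonzero log-moment. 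Such a $k^*$ must exist: the substitution $t=e^{-u}$ turns $\int_0^1\phi(t)(\log t)^k\,\ud t$ into the $k$-th moment of $\psi(u)e^{-u}$ with $\psi(u)=\phi(e^{-u})$; as $\psi$ is bounded it lies in $L^2([0,\infty),e^{-u}\ud u)$, where the Laguerre polynomials are complete, so vanishing of all moments would force $\psi\equiv0$, contradicting $\phi\not\equiv0$. Thus $\mathbb{E}[Z_n]$ has a fixed sign for all large $n$ with magnitude $\Theta(n/(\log n)^{k^*+1})$, and I choose $b$ according to that sign.

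Finally I would establish concentration and the uniform conclusion. Altering one sample $(x_i,y_i)$ moves one item in each ranking and shifts the intervening items by one position, so by telescoping (as $D$ decreases) each DCG, hence $Z_n$, changes by $O(D(1))=O(1)$; McDiarmid then gives $\Pr[Z_n\le0]\le\exp(-\Omega(\mathbb{E}[Z_n]^2/n))=\exp(-\Omega(n/(\log n)^{2k^*+2}))$. Summing over $n\ge N$ leaves a bound of the same super-polynomially small order, i.e.\ a negligible function of $N$, which yields the simultaneous statement for all $n\ge N$. I expect the main obstacle to be this refined asymptotic analysis of $\Delta_n$: one must exploit the exact cancellation $\int\phi=0$, prove that some higher log-moment survives (cleanest via completeness of the Laguerre system) to pin down the sign, and control both the non-uniformity of the $1/\log n$ expansion near the top ranks and the order-statistic and H\"older approximation errors, showing all of them are $o(n/(\log n)^{k^*+1})$.
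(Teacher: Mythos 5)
Your proposal is correct, and its analytic heart is the same as the paper's: both arguments hinge on the exact cancellation of the leading term (the $j=0$ log-moment of $\Delta y$ vanishes because $\tilde f_i(X)$ is uniform), an asymptotic expansion of $1/\log(1+r)$ in powers of $1/\log n$ to isolate the first surviving log-moment, completeness of the Laguerre system to guarantee such a moment exists with a fixed sign, and a concentration bound that is dominated by the resulting $\Theta(1/\log^{K}n)$-scale gap uniformly over $n\ge N$. Where you differ is in the bookkeeping around that core. First, you compare unnormalized DCGs directly, observing that $\mathrm{IDCG}_D(S_n)$ is a common positive denominator on a shared sample; the paper instead normalizes by the deterministic proxy $F(np)=\li(1+np)$ and pays for it with an extra Chernoff bound on the number of relevant items (Claim \ref{prop_dcg_NDCG}). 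Second, your concentration step is a single application of McDiarmid's bounded-differences inequality to $Z_n$ (the $O(1)$ per-coordinate bound via the telescoping of the shifted ranks is correct), whereas the paper runs a two-stage argument: Chernoff concentration of the order statistics $\tilde x_{(r)}$ around $1-r/n$, a H\"older-continuity transfer, and then Hoeffding for the labels conditional on the positions (Lemma \ref{lemma:lemma3_of_comparable}). Your route is arguably cleaner in that H\"older continuity is used only in the bias estimate for $\mathbb{E}[Z_n]$ and not in the concentration itself. One small imprecision: the combined order-statistic/H\"older error in $\mathbb{E}[Z_n]$ is $O(n^{1-\gamma/2}/\log n)$ rather than $O(\sqrt{n}/\log n)$ when the H\"older exponent $\gamma<1$, but since any polynomial decay beats every power of $\log n$ this is still $o(n/(\log n)^{k^*+1})$ and does not affect the conclusion.
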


The proof is given in Appendix \ref{Section:key_lemmas}.\\

Theorem \ref{theorem:comparable} provides theoretical justification
for using standard NDCG as a ranking measure, and answers the first question raised in Introduction. Although standard NDCG converges
to the same limit for all ranking functions, it is still a good ranking measure with strong consistent distinguishability (if we ignore the numerical scaling issue).

\subsection{Characterization of Feasible Discount
Functions}\label{subsection:feasible_discount}

In the previous section we demonstrate that standard NDCG is a good
ranking measure. In both literatures and real applications, standard
NDCG is dominant. However, there is no known theoretical evidence
that the logarithmic function is the only feasible discount, or it
is the optimal one. In this subsection, we will investigate other
discount functions. We study the asymptotic behavior and
distinguishability of the induced  NDCG measures and compare to the
standard NDCG. Finally, we will characterize the class of discount
functions which we think are feasible for NDCG. For the sake of
clarity, the results in this subsection are given for the simplest
case that $\mc{Y} = \{0,1\}$. Complete results will be given in
Section \ref{subsection:General_Cases}.

Standard NDCG utilizes the logarithmic discount which decays slowly.
In the following we first consider a discount that decays a little
faster. Specifically we consider $D(r) = r^{-\beta}$ ($0< \beta <
1$). 
Let us first investigate the limit of the ranking measure as the
number of objects goes to infinity.

\begin{theorem}\label{theorem:r^-alpha}
Assume $D(r) = r^{-\beta}$ where $\beta\in(0,1)$. Assume also $p=
\Pr[Y=1] > 0$ and $\overline{y}^f(s) = \Pr[Y=1|\tilde{f}(X)=s]$ is a
continuous function. Then
\begin{equation}{
\mathrm{NDCG}_D(f,S_n) \overset{p}{\to}
\frac{(1-\beta)\int_0^1{\overline{y}^f(s)\cdot (1-s)^{-\beta}\ud
s}}{p^{1-\beta}}.} \label{power_converge}
\end{equation}
\end{theorem}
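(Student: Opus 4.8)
The plan is to use the order-invariance of NDCG to reduce to canonical scores, analyze the numerator $\mathrm{DCG}_D$ and denominator $\mathrm{IDCG}_D$ separately after normalizing by $n^{1-\beta}$, and finally combine the two limits. Since $\mathrm{NDCG}_D$ depends only on the order induced by $f$, I would replace $f$ by its canonical version $\tilde f$; by the Lemma the scores $U_i := \tilde f(x_i)$ are i.i.d.\ uniform on $[0,1]$, and conditionally on $U_i$ the label $Y_i$ is Bernoulli with mean $\overline{y}^f(U_i)$. In the binary case the ideal ranking places all relevant documents first, so $\mathrm{IDCG}_D = \sum_{r=1}^m r^{-\beta}$ with $m=\sum_i Y_i$. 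The strong law gives $m/n \to p$ a.s., and the elementary partial-sum estimate $\sum_{r=1}^m r^{-\beta} = \frac{m^{1-\beta}}{1-\beta}(1+o(1))$ then yields $\mathrm{IDCG}_D/n^{1-\beta} \to p^{1-\beta}/(1-\beta)$ a.s. Because $p>0$, the normalized denominator converges to a strictly positive constant, which is what will license the final ratio step.

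For the numerator, write the ranked labels as $y^f_{(r)}$; conditionally on the scores these are independent $\mathrm{Bernoulli}(\overline{y}^f(U_{(r)}))$, where $U_{(1)}\ge\cdots\ge U_{(n)}$ are the descending order statistics. I would first remove the label noise. The conditional mean is $\mu_n := \sum_{r=1}^n \overline{y}^f(U_{(r)})\,r^{-\beta}$, and the conditional variance is at most $\sum_{r=1}^n r^{-2\beta}$, which is $o(n^{2(1-\beta)})$ for \emph{every} $\beta \in (0,1)$ (whether $2\beta$ is below, equal to, or above $1$). Since this deterministic bound is independent of the scores, a conditional Chebyshev inequality gives $(\mathrm{DCG}_D - \mu_n)/n^{1-\beta} \overset{p}{\to} 0$.

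It then remains to show $\mu_n/n^{1-\beta} = \frac1n\sum_{r=1}^n \overline{y}^f(U_{(r)})\,(r/n)^{-\beta} \overset{p}{\to} \int_0^1 \overline{y}^f(1-s)\,s^{-\beta}\,\ud s$, which equals the claimed numerator $L:=\int_0^1 \overline{y}^f(s)(1-s)^{-\beta}\,\ud s$ after the substitution $t=1-s$. Here I would replace $U_{(r)}$ by its nominal value $1-r/n$: the DKW inequality gives $\sup_r \left|U_{(r)}-(1-r/n)\right| \to 0$ in probability (uniformly over \emph{all} ranks, including the top), and since $\overline{y}^f$ is uniformly continuous on $[0,1]$ this makes $\sup_r \left|\overline{y}^f(U_{(r)})-\overline{y}^f(1-r/n)\right|$ arbitrarily small. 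Because the total weight $\frac1n\sum_r (r/n)^{-\beta}$ converges to $1/(1-\beta)<\infty$, the resulting error in $\mu_n/n^{1-\beta}$ is negligible. Finally $\frac1n\sum_r \overline{y}^f(1-r/n)(r/n)^{-\beta}$ is a Riemann sum for $\int_0^1 \overline{y}^f(1-s)\,s^{-\beta}\,\ud s$, which converges because the integrand has only an integrable singularity at $s=0$ (boundedness of $\overline{y}^f$ and $\beta<1$ guarantee integrability, and the leading $r=1$ term is $O(n^{\beta-1})$). Combining $\mathrm{DCG}_D/n^{1-\beta}\overset{p}{\to}L$ with $\mathrm{IDCG}_D/n^{1-\beta}\to p^{1-\beta}/(1-\beta)>0$ and taking the ratio gives $\mathrm{NDCG}_D \overset{p}{\to} (1-\beta)L/p^{1-\beta}$, which is exactly \eqref{power_converge}.

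The step I expect to be most delicate is the control near the top of the ranking (small $r$), where the discount weight $r^{-\beta}$ is largest precisely where the order statistics $U_{(r)}$ fluctuate most around $1-r/n$. The key is that one must use \emph{additive} concentration (DKW) rather than relative concentration of $U_{(r)}$, since the latter degrades at the edge; this, together with the boundedness of $\overline{y}^f$ and the finiteness of the total weight mass $\frac1n\sum_r (r/n)^{-\beta}$, keeps every error term uniformly small without separate edge truncation. Reconciling the singular weight with the uniform order-statistic estimate is the part that needs the most care.
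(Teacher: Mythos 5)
Your proposal is correct and follows essentially the same route as the paper: the paper's proof goes through Lemma \ref{lemma:mnclose} (a variant of Lemma \ref{lemma:lemma3_of_comparable}), which likewise concentrates the order statistics around $1-r/n$ (via Chernoff plus a union bound, where you use DKW), invokes uniform continuity of $\overline{y}^f$, removes the label noise by a concentration inequality (Hoeffding, where you use conditional Chebyshev), compares the resulting sum to the integral defining the pseudo-expectation $N^f_D(n)$, and normalizes IDCG by $F(np)\sim (np)^{1-\beta}/(1-\beta)$ using the law of large numbers. These are only cosmetic substitutions of equivalent tools, so the argument matches the paper's.
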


The proof will be given in Appendix
\ref{section:proof_feasible_discount}.\\

For $D(r) = r^{-\beta}$ ($\beta \in (0,1)$), NDCG no longer
converges to the same limit for all ranking functions. The limit is
actually a correlation between $\overline{y}^f(s)$ and
$(1-s)^{-\beta}$. For a good ranking function $f$,
$\overline{y}^f(s)=\Pr[Y=1|\tilde{f}(X)=s]$ is likely to be an
increasing function of $s$, and thus has positive correlation with
$(1-s)^{-\beta}$. Therefore, the limit of the ranking measure
already differentiate good and bad ranking functions to some extent.

We next study whether NDCG with polynomial discount has power of
distinguishability as strong as the standard NDCG. That is, we will
see if Theorem \ref{theorem:comparable} holds for NDCG with
$r^{-\beta}$ ($\beta \in (0,1)$).

\begin{theorem}\label{theorem:comparable_r^beta}
Let $D(r) = r^{-\beta}$, $\beta \in (0,1)$. Assume $ p= \Pr[Y=1] >
0$. For every pair of ranking functions $f_0$, $f_1$, denote
$\overline{y}^{f_i}(s) = \Pr[Y=1|\tilde{f}_i(X)=s]$, $i=0,1$, and
$\Delta y(s) = \overline{y}^{f_0}(s) - \overline{y}^{f_1}(s)$.
Suppose at least one of the following two conditions hold: 1)
$\int_0^1 \Delta y(s) (1-s)^{-\beta} \ud s \neq 0$; 2)
$\overline{y}^{f_0}(s)$, $\overline{y}^{f_1}(s)$ are H\"{o}lder
continuous with H\"{o}lder continuity constant $\alpha$ satisfying
$\alpha
> 3(1- \beta)$, and $\Delta y (1) \neq 0$. Then $f_0$ and
$f_1$ are strictly distinguishable with high probability by NDCG
with discount $D(r)$.
\end{theorem}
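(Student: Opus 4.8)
The plan is to reduce the whole statement to the sign of a single scalar. Since $f_0$ and $f_1$ are evaluated on the \emph{same} sample $S_n$, they share the same ideal gain, so
\[
\mathrm{NDCG}_D(f_0,S_n)-\mathrm{NDCG}_D(f_1,S_n)=\frac{\mathrm{DCG}_D(f_0,S_n)-\mathrm{DCG}_D(f_1,S_n)}{\mathrm{IDCG}_D(S_n)},
\]
and since $\mathrm{IDCG}_D(S_n)>0$ the sign of the left-hand side is the sign of the numerator $\Delta_n:=\mathrm{DCG}_D(f_0,S_n)-\mathrm{DCG}_D(f_1,S_n)$. Thus the theorem becomes: under either condition, $\Delta_n$ keeps a constant sign with high probability. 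As a preliminary step I would record that $\mathrm{IDCG}_D(S_n)=\sum_{r\le m}r^{-\beta}=\Theta(n^{1-\beta})$ with failure probability exponentially small in $n$, via a Chernoff bound on $m=\#\{i:y_i=1\}\sim\mathrm{Bin}(n,p)$ (this is where $p>0$ is used). I would also work through the canonical representation $\tilde f_i(X)\sim U[0,1]$, so that the item occupying rank $r$ under $f_i$ carries relevance probability $\overline y^{f_i}$ evaluated at the $r$-th largest order statistic of $n$ i.i.d.\ uniforms.

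Condition~1 is the easy regime. Here $\int_0^1\Delta y(s)(1-s)^{-\beta}\,ds\neq0$ means the two limits $L_0,L_1$ of Theorem~\ref{theorem:r^-alpha} differ. I would upgrade the convergence in probability of that theorem to a quantitative concentration statement, combining the Chernoff bound on $m$, a Hoeffding/Bernstein bound for the label noise conditioned on positions, and a Dvoretzky--Kiefer--Wolfowitz bound converting position fluctuations into $\overline y$-fluctuations through Hölder continuity, to obtain $\Pr\big[|\mathrm{NDCG}_D(f_i,S_n)-L_i|>\varepsilon\big]\le\mathrm{neg}(n)$. Taking $\varepsilon=|L_0-L_1|/3$ and a union bound over all $n\ge N$ (the per-$n$ failures are summable) then gives that the larger-limit function wins simultaneously for all $n\ge N$, which is exactly consistent distinguishability.

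Condition~2 is the substantive case: the leading $n^{1-\beta}$ term cancels because $c:=\int_0^1\Delta y(s)(1-s)^{-\beta}\,ds=0$, so I must extract the next-order term of $\mathbb E[\Delta_n]$. Writing $\mathbb E[\Delta_n]=n\int_0^1\Delta y(s)\,\psi_n(s)\,ds$ with $\psi_n(s)=\mathbb E\big[(1+\mathrm{Bin}(n-1,1-s))^{-\beta}\big]$, the key observation is that $\psi_n(s)$ matches $n^{-\beta}(1-s)^{-\beta}$ in the bulk but saturates to a finite value near $s=1$, the discrepancy living on a window of width $\sim1/n$ at the top of the list. Rescaling $1-s=t/n$ and using $\mathrm{Bin}(n-1,t/n)\to\mathrm{Poisson}(t)$, the window contributes $\Delta y(1)\,\kappa_\beta$ with $\kappa_\beta=\int_0^\infty\big[\mathbb E(1+\mathrm{Poisson}(t))^{-\beta}-t^{-\beta}\big]\,dt$, a nonzero universal constant. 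Hence, once $c=0$, $\mathbb E[\Delta_n]\to\kappa_\beta\,\Delta y(1)$, a nonzero constant whose sign is determined by $\Delta y(1)$. This matched-asymptotics expansion near the singular endpoint is where the Hölder hypothesis enters: the error of replacing $\Delta y$ by $\Delta y(1)$ across the window and of replacing order statistics by their means is bounded by $\sum_r r^{-\beta}(\mathrm{Var}\,U_{(r)})^{\alpha/2}$, and a sufficiently large $\alpha$ forces this to be $o(1)$.

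The hard part, which I expect to be the main obstacle, is the fluctuation control that turns this constant-order mean gap into a high-probability sign. The difficulty is genuine, since both $\mathbb E[\Delta_n]$ and the crude second-moment bound for $\Delta_n$ are of order one, so Chebyshev alone is useless. My plan is to split $\Delta_n-\mathbb E[\Delta_n]$ into a position-fluctuation part and a conditionally independent label-noise part, and to show---this is exactly where the strong requirement $\alpha>3(1-\beta)$ is spent---that the fluctuations governing the sign are again localized in the top-of-the-list window and can be bounded at scale $o(1)$; the extra factor of $(1-\beta)$ beyond what the mean expansion needs is the price of controlling the \emph{variance} of the singular top terms rather than merely their expectation. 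Combining an $o(1)$ fluctuation bound with the constant-order signed gap yields $\mathrm{sgn}(\Delta_n)=\mathrm{sgn}(\kappa_\beta\,\Delta y(1))$ with high probability, and dividing by the positive, well-concentrated $\mathrm{IDCG}_D(S_n)$ finishes the proof.
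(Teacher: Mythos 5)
Your reduction to the sign of $\Delta_n$, your treatment of Condition~1, and your computation of the expected gap under Condition~2 are all sound and essentially parallel the paper. In particular, your matched-asymptotics/Poisson derivation of a constant-order endpoint contribution $\kappa_\beta\,\Delta y(1)$ to $\mathbb E[\Delta_n]$ is a finer version of what the paper does with its ``pseudo-expectation'': there, Lemma~\ref{lemma:separation_r^-beta} uses $\int_0^1\Delta y(s)(1-s)^{-\beta}\,\ud s=0$ to rewrite the normalized gap as $-\frac{1-\beta}{p^{1-\beta}}\int_0^{1/n}\Delta y(1-s)s^{-\beta}\,\ud s\approx -\frac{\Delta y(1)}{p^{1-\beta}}n^{-(1-\beta)}$, i.e.\ the surviving term is exactly the truncation of the integral over the top-of-list window, a nonzero multiple of $\Delta y(1)$ at scale $n^{-(1-\beta)}$ after normalization. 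Your route and the paper's agree on the scale and on the role of $\Delta y(1)$.

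The genuine gap is the step you yourself flag as the main obstacle: the high-probability fluctuation control is announced as a plan but never carried out, and the plan as stated is not the mechanism that actually closes the argument. Your proposal to show that ``the fluctuations governing the sign are localized in the top-of-the-list window'' is doubtful: the conditional label noise $\sum_i(y_i-\overline y(x_i))\bigl(D(\mathrm{rank}_0(i))-D(\mathrm{rank}_1(i))\bigr)$ and the position fluctuations receive contributions from the entire list and do not cancel between $f_0$ and $f_1$ just because the leading deterministic terms do. What the paper does instead (Lemma~\ref{lemma:concentration_r^-beta}, adapted from Lemma~\ref{lemma:lemma3_of_comparable}) is a \emph{global} bound: the uniform order statistics $\tilde f(x^f_{(r)})$ concentrate within $n^{-1/3}$ of $1-r/n$ uniformly over $r$ (Chernoff plus union bound), H\"older continuity with exponent $\alpha$ converts this positional error into a perturbation $Cn^{-\alpha/3}$ of each $\overline y^f$ value, and summing against the discount and normalizing by $F(np)=\Theta(n^{1-\beta})$ yields a relative fluctuation $O(n^{-\alpha/3})$ around the pseudo-expectation, on top of a conditional Hoeffding term for the labels. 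The hypothesis $\alpha>3(1-\beta)$ is then spent precisely on making $n^{-\alpha/3}=o(n^{-(1-\beta)})$, i.e.\ fluctuation scale beaten by separation scale --- not, as you suggest, on controlling the variance of the singular top terms. Without an executed argument of this kind your proof does not go through; you would need to either reproduce the order-statistics-plus-H\"older bound or find a rigorous substitute for your localization claim, and in doing so you should also confront the label-noise term, whose natural size $\bigl(\sum_r D(r)^2\bigr)^{1/2}$ is not obviously $o(1)$ relative to the constant-order mean gap for small $\beta$.
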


The proof will be given in Appendix
\ref{section:proof_comparable_r^beta}.\\

Theorem \ref{theorem:comparable_r^beta} involves two conditions.
Satisfying either of them leads to strictly distinguishable with
high probability. The first condition simply means that
$\mathrm{NDCG}_D(f_0,S_n)$ and $\mathrm{NDCG}_D(f_1,S_n)$ converge
to different limits and therefore the two functions are
consistently distinguishable in the strongest sense. The second condition deals
with the case that $\mathrm{NDCG}_D(f_0,S_n)$ and
$\mathrm{NDCG}_D(f_1,S_n)$ converge to the same limit. Comparing the
distinguishability of NDCG with $r^{(-\beta)}$ discount with the
standard NDCG, in most cases $r^{(-\beta)}$ discount has stronger
distinguishability than standard NDCG (i.e., when the measures of
two ranking functions converge to different limits). On the other hand, if we
consider the worst case, standard NDCG is better, because it
requires less conditions for consistent distinguishability.


We next study the Zipfian discount $D(r) = r^{-1}$. The following
theorem describes the limit of the ranking measure.

\begin{theorem}\label{theorem:r^-1}\label{thm_r1}
Assume $D(r) = r^{-1}$. Assume also $ p= \Pr[Y=1] > 0$ and
$\overline{y}^f(s) = \Pr[Y=1|\tilde{f}(X)=s]$ is a continuous
function. Then
\begin{equation}{
\mathrm{NDCG}_D(f,S_n) \overset{p}{\to} \Pr[Y=1|\tilde{f}(X)=1].}
\end{equation}
\end{theorem}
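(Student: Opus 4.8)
The plan is to exploit the fact that the Zipfian weights $D(r)=r^{-1}$ place an overwhelming fraction of their total mass on the very top ranks, so that only the behaviour of $\overline{y}^f$ near $s=1$ survives in the limit. First I would invoke the order-invariance of NDCG together with the fact that $\tilde f(X)$ is uniform on $[0,1]$ to replace $f$ by its canonical version $\tilde f$; thus I may assume the scores $\tilde f(x_1),\dots,\tilde f(x_n)$ are i.i.d.\ uniform on $[0,1]$. Writing $U_{(1)}\ge\cdots\ge U_{(n)}$ for their decreasing order statistics and $Y_{[r]}$ for the relevance attached to the item of score $U_{(r)}$, we have $\mathrm{DCG}_D(\tilde f,S_n)=\sum_{r=1}^n Y_{[r]}/r$, and the key structural fact is that, conditioned on all the scores, the $Y_{[r]}$ are independent with $Y_{[r]}\sim\mathrm{Bernoulli}(\overline{y}^f(U_{(r)}))$.

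Second, I would pin down the denominator. Since $D$ is decreasing, the ideal ordering puts all relevant items first, so with $m=\sum_i y_i$ relevant items one gets $\mathrm{IDCG}_D(S_n)=\sum_{r=1}^m 1/r = H_m$, the $m$-th harmonic number. The law of large numbers gives $m/n\to p>0$ a.s., whence $H_m=\log m+O(1)=\log n+\log p+O(1)$ and therefore $\mathrm{IDCG}_D(S_n)/\log n\to 1$ almost surely.

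Third, and this is the heart of the argument, I would show $\mathrm{DCG}_D(\tilde f,S_n)/\log n\overset{p}{\to}\overline{y}^f(1)$. I would first handle the conditional mean $\frac{1}{\log n}\sum_{r=1}^n\overline{y}^f(U_{(r)})/r$. Fix $\delta>0$; by continuity of $\overline{y}^f$ at $1$ choose $\eta$ with $|\overline{y}^f(s)-\overline{y}^f(1)|<\delta$ for $s>1-\eta$, then pick $\epsilon<\eta$. A binomial tail bound shows that with high probability $U_{(\lfloor\epsilon n\rfloor)}>1-\eta$, so all of the top $\epsilon n$ order statistics lie in the region where $\overline{y}^f$ is within $\delta$ of $\overline{y}^f(1)$. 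On this event the top block contributes $\overline{y}^f(1)\sum_{r\le\epsilon n}1/r \pm \delta\sum_{r\le\epsilon n}1/r=(\overline{y}^f(1)\pm\delta)(\log n+O(1))$, while the remaining ranks contribute at most $\sum_{\epsilon n<r\le n}1/r=\log(1/\epsilon)+O(1)=O(1)$ since $\overline{y}^f\le 1$. Dividing by $\log n$ and letting $\delta\to 0$ shows the conditional mean tends to $\overline{y}^f(1)$. Finally, the conditional variance equals $\sum_{r=1}^n\mathrm{Var}(Y_{[r]})/r^2\le\sum_{r\ge1}1/r^2=O(1)=o((\log n)^2)$, so Chebyshev's inequality (applied conditionally, then in expectation) transfers the convergence from the conditional mean to $\mathrm{DCG}_D/\log n$ itself.

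Combining the two halves by Slutsky's theorem yields $\mathrm{NDCG}_D(f,S_n)=\mathrm{DCG}_D/\mathrm{IDCG}_D\overset{p}{\to}\overline{y}^f(1)=\Pr[Y=1\mid\tilde f(X)=1]$. I expect the main obstacle to be the splitting step: one must choose the threshold $\epsilon n$ small enough that the order statistics there are provably near $1$, yet large enough that $\sum_{r\le\epsilon n}1/r$ already captures a $1-o(1)$ fraction of the total logarithmic mass. This is exactly where the Zipfian discount differs from $r^{-\beta}$ ($\beta<1$) of Theorem~\ref{theorem:r^-alpha}: there the total weight is spread and the limit is a genuine $(1-s)^{-\beta}$-weighted average of $\overline{y}^f$, whereas at $\beta=1$ the weight collapses onto the top ranks and only the endpoint value $\overline{y}^f(1)$ remains.
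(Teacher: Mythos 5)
Your proof is correct and follows essentially the same route as the paper's: the paper handles this theorem only as a ``minor modification'' of Theorem \ref{theorem:r^-alpha}, i.e., it shows $\mathrm{NDCG}_D$ concentrates around the normalized pseudo-expectation $N^f_D(n)=\frac{1}{\log(np)}\int_{1/n}^{1}\overline{y}^f(1-t)\,t^{-1}\,\ud t$ (order-statistic concentration for the scores, a concentration bound for the labels conditional on the scores, and a Chernoff bound for the normalizer $F(np)$) and then computes the limit. Your splitting of the harmonic sum at rank $\epsilon n$ is precisely the ``simple calculation'' of $\lim_n N^f_D(n)$ that the paper leaves implicit, and your use of a conditional-variance/Chebyshev bound in place of the paper's Hoeffding bound is immaterial since only convergence in probability is claimed.
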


The proof of Theorem \ref{theorem:r^-1} will be given in Appendix
\ref{section:proof_feasible_discount}.\\

The limit of NDCG with Zipfian discount depends only on the
performance of the ranking function for the top ranks. The relevancy
of lower ranked items does not affect the limit.

The next logical step would be analyzing the power of
distinguishability of NDCG with Zipfian discount. However we are not
able to prove that consistent distinguishability holds for this ranking measure. The techniques developed for
distinguishability theorems given above does not apply to the
Zipfian discount. Although we cannot disprove it distinguishability,
we suspect that Zipfian does not have strong consistent distinguishability
power.

Finally, we consider discount functions that decay substantially
faster than $r^{-1}$. We will show that with these discount, NDCG
does not converge as the number of objects tends to infinity. More
importantly, such NDCG does not have the desired consistent distinguishability
property.

\begin{theorem}\label{prop_unbound}
Let $\mathcal X$ be instance space. For any $x\in\mathcal X$, let
$y_x^* = \mathrm{argmax}_{y\in\mathcal Y}{\Pr(Y=y|X=x)}$. Assume
that there is an absolute constant $\delta > 0$ such that for every
$x\in\mathcal X$, $\Pr(Y=y|X=x)\geq \delta\cdot \Pr(Y=y_x^*|X=x)$
for all $y\in\mathcal Y$. If $\sum_{r=1}^{\infty}{D(r)} \leq B$ for
some constant $B>0$, then $\mathrm{NDCG}_D(f,S_n)$ does not converge
in probability for any ranking function $f$. In particular, if $D(r)
\le r^{-(1+\epsilon)}$ for some $\epsilon >0$,
$\mathrm{NDCG}_D(f,S_n)$ does not converge. Moreover, every pair of
ranking functions are not consistently distinguishable by NDCG with such discount.
\end{theorem}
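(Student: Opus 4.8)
The plan is to separate the two assertions and to use throughout the summability $\sum_{r\ge 1}D(r)\le B$, which concentrates both $\mathrm{DCG}$ and $\mathrm{IDCG}$ on the top-ranked items; note that $D(r)\le r^{-(1+\epsilon)}$ already forces $\sum_r D(r)<\infty$, so the ``in particular'' clause is a special case. For the non-convergence statement I would first check that $\mathrm{IDCG}_D(S_n)$ converges to a positive constant: the non-degeneracy hypothesis integrates to $\Pr[Y=y]\ge\delta/|\mc Y|>0$ for every label $y$, so the number of items carrying the top label $y_{\max}$ tends to infinity almost surely, and since the ideal list fills the first ranks with these while $\sum_{r>m}D(r)\to 0$, one gets $\mathrm{IDCG}_D(S_n)\to y_{\max}\sum_{r\ge 1}D(r)$ a.s. It therefore suffices to show $\mathrm{DCG}_D(f,S_n)$ fails to converge in probability, since convergence of $\mathrm{NDCG}=\mathrm{DCG}/\mathrm{IDCG}$ together with that of $\mathrm{IDCG}$ would force $\mathrm{DCG}$ to converge. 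Conditioning on everything except the label $y^f_{(1)}$ of the top item, write $\mathrm{DCG}_D(f,S_n)=y^f_{(1)}D(1)+A$ with $A$ frozen; non-degeneracy makes $y^f_{(1)}=y_{\max}$ and $y^f_{(1)}=y_{\min}$ each occur with conditional probability $\ge\delta/|\mc Y|$, so the conditional law of $\mathrm{DCG}$ carries two atoms at distance $(y_{\max}-y_{\min})D(1)>0$. No constant lies within half that distance of both atoms, so $\mathrm{DCG}$ cannot concentrate, contradicting convergence in probability.

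For the non-distinguishability claim the structural fact I would exploit is that $\mathrm{IDCG}_D(S_n)$ depends only on the multiset of labels in $S_n$, hence is the \emph{same} for $f_0$ and $f_1$; consequently $\mathrm{NDCG}_D(f_0,S_n)>\mathrm{NDCG}_D(f_1,S_n)$ is equivalent to $\Delta_n:=\mathrm{DCG}_D(f_0,S_n)-\mathrm{DCG}_D(f_1,S_n)>0$. Writing $\Delta_n=\sum_i y_i w_i$ with $w_i=D(\mathrm{rank}_{f_0}(x_i))-D(\mathrm{rank}_{f_1}(x_i))$, both rank sequences are permutations of $1,\dots,n$, so $\sum_i w_i=0$ and $\Delta_n$ is a contrast. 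I would then reduce ``not consistently distinguishable'' to the two-sided anti-concentration $\liminf_N\Pr[\Delta_N>0]>0$ and $\liminf_N\Pr[\Delta_N<0]>0$: for the guess $b=0$ one has $\Pr[\exists\,n\ge N:\Delta_n\le 0]\ge\Pr[\Delta_N\le 0]$, which stays bounded below by a positive constant and hence exceeds any negligible $\mathrm{neg}(N)$, so the event required by Definition \ref{definition:comparable_whp} cannot hold; the guess $b=1$ is ruled out symmetrically. Order-equivalent pairs are trivial, since then $\Delta_n\equiv 0$ and strict inequality never occurs.

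To establish the anti-concentration I would again lean on summability. Fixing $\epsilon>0$ and $K$ with $\sum_{r>K}D(r)<\epsilon$, the contrast $\Delta_n$ agrees, up to an error at most $2y_{\max}\epsilon$, with its truncation $g=\sum_{r\le K}y^{f_0}_{(r)}D(r)-\sum_{r\le K}y^{f_1}_{(r)}D(r)$ to the top-$K$ ranks of the two functions, and $g=\sum_i y_i c_i$ depends on the labels of at most $2K$ objects, with $\sum_i c_i=0$ (and hence $\max g=-\min g=\tfrac12(y_{\max}-y_{\min})\sum_i|c_i|$). Assigning $y_{\max}$ to every object that $f_0$ ranks above $f_1$ and $y_{\min}$ to the rest drives $g$ to its maximum, and the opposite assignment to its negative; each such configuration pins down only $\le 2K$ labels, so by conditional independence and non-degeneracy it occurs with probability at least $(\delta/|\mc Y|)^{2K}>0$. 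Once $\sum_i|c_i|$ is bounded below by a fixed positive number and $\epsilon$ is chosen with $2y_{\max}\epsilon$ smaller than half the resulting gap, both $\Pr[\Delta_n>0]$ and $\Pr[\Delta_n<0]$ are bounded below for all large $n$.

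The main obstacle is precisely the uniformity hidden in the last sentence: I must secure a single positive gap valid for \emph{all} large $n$, i.e. guarantee that the discrepancy between the top-$K$ orderings of $f_0$ and $f_1$ does not wash out as $n\to\infty$. Since the top order statistics of $\tilde f_0(X)$ and $\tilde f_1(X)$ stabilize, the pattern of top-$K$ disagreement should converge, which is what pins $\sum_i|c_i|$ away from $0$; making this rigorous, and disposing of the borderline case in which $f_0$ and $f_1$ agree on the entire top region and differ only at deep, heavily discounted ranks (so that the contrast is squeezed into the $O(\epsilon)$ tail and one must still exhibit a sign change), is the delicate step. The per-configuration probabilities are routine; controlling the gap uniformly in $n$ is where the real work lies.
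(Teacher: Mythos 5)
Your argument for the non-convergence part is correct and takes a mildly different route from the paper's: you first show $\mathrm{IDCG}_D(S_n)\to y_{\max}\sum_{r}D(r)$ a.s.\ and then exhibit two conditional atoms of $\mathrm{DCG}_D(f,S_n)$ separated by $(y_{\max}-y_{\min})D(1)$ by freezing everything except the label of the top-ranked item, whereas the paper works directly with $\mathrm{NDCG}$ and two explicit label configurations on the top $m$ ranks (where $m$ is chosen so that $\sum_{r\le m}D(r)\ge\tfrac23\sum_{r}D(r)$): all top-$m$ labels maximal forces $\mathrm{NDCG}\ge 2/3$, all top-$m$ labels minimal with enough relevant items elsewhere forces $\mathrm{NDCG}\le 1/2$, each event having probability bounded below by a fixed power of $\delta$ thanks to the non-degeneracy hypothesis. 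Both routes exploit the same mechanism --- summability of $D$ concentrates the measure on a bounded number of top positions whose labels retain conditional randomness --- and both, like the paper, rule out convergence in probability to a \emph{constant}. This part is fine.

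The gap is in the ``moreover'' clause, and it is exactly where you locate it. Your truncation argument needs $\sum_i|c_i|$ bounded away from zero uniformly in $n$, and this can genuinely fail for non-order-equivalent pairs: if $f_0$ and $f_1$ induce the same ordering on the upper part of the score distribution and differ only lower down, then for every fixed $K$ the two top-$K$ lists coincide with probability tending to one, all $c_i$ vanish, and the entire difference $\Delta_n$ is squeezed into the tail $\sum_{r>K}$, where its sign is no longer controlled by flipping $O(K)$ labels. Producing two-sided anti-concentration of $\Delta_n$ in that regime is the real content of the claim, and your proposal does not supply it. To be fair, the paper's own treatment of this clause is a one-sentence assertion that the two marginal bounds make the measures ``overlap'' with constant probability; marginal anti-concentration of each $\mathrm{NDCG}_D(f_i,S_n)$ does not by itself control the sign of the difference (consider $M_0=M_1+\varepsilon$), so the paper leaves the same case open. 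A complete argument should either construct joint label configurations forcing each sign of $\Delta_n$ --- which works cleanly whenever the top-$m$ lists of $f_0$ and $f_1$ differ, since the disagreeing items' labels can then be set independently --- or treat the coinciding-top case separately, e.g.\ by locating the first rank at which the two ordered lists diverge and redoing the atom argument there; that step is delicate because this rank may grow with $n$ while $D(r)$ need not dominate its own tail $\sum_{r'>r}D(r')$ for general summable $D$.
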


The proof is given in Appendix
\ref{section:proof_feasible_discount}.\\

Now we are able to \emph{characterize} the feasible discounts for
NDCG according to the results given so far. The logarithmic
$\frac{1}{\log(1+r)}$ and polynomial $r^{-\beta}$ ($\beta \in
(0,1)$) are feasible discount functions for NDCG. For different
ranking functions, standard NDCG converges to the same limit while
the $r^{-\beta}$ ($\beta \in (0,1)$) one converges to different
limits in most cases. However, if we ignore the numerical scaling
issue, both logarithmic and $r^{-\beta}$ ($\beta \in (0,1)$)
discount have consistent distinguishability. The Zipfian
$r^{-1}$ discount is on the borderline. It is not clear whether it
has strong power of distinguishability. Discount that decays faster
than $r^{-(1+\epsilon)}$ for some $\epsilon > 0$ is not appropriate
for NDCG when the data size is large.

\subsection{Cut-off Versions of NDCG}\label{subsection_NDCG@k}

In this section we study the top-$k$ version of NDCG, i.e.,
NDCG@$k$. For NDCG@$k$, the discount function is set as $D(r) = 0$
for all $r>k$. The motivation of using NDCG@$k$ is to pay more
attention to the top-ranked results. Logarithmic discount is also
dominant for NDCG@k. We will call this measure standard NDCG@k. As
already stated in Introduction, a natural question of standard
NDCG@k is why use a combination of a very low logarithmic decay and
a hard cut-off as the discount function. Why not simply use a smooth
discount with fast decay, which seems more natural. In fact, this
question has already been answered by Theorem \ref{prop_unbound}.
NDCG with such discount does not have strong power of
distinguishability.

We next address the issue that how to choose the cut-off threshold
$k$. It is obvious that setting $k$ as a constant independent of $n$
is not appropriate, because the partial sum of the discount is
bounded and according to Theorem \ref{prop_unbound} the ranking
measure does not converge. So $k$ must grow unboundedly as $n$ goes
to infinity. Below we investigate the convergence and
distinguishability of NDCG@k for various choices of $k$ and the
discount function. For clarity reason we assume here $\mc{Y} =
\{0,1\}$, and general results will be given in Section
\ref{subsection:General_Cases}. The proofs of all theorems in this
section will be given in Appendix
\ref{section:proof_feasible_discount}. We fist consider the case
$k=o(n)$.

\begin{theorem}
Let $\mathcal Y=\{0,1\}$. Assume $D(r)$ is a discount function and
$\sum_{r=1}^{\infty}{D(r)}$ is unbounded. Suppose $k = o(n)$ and $k
\rightarrow \infty$ as $n \rightarrow \infty$. Let $\tilde D(r) =
D(r)$ for all $r\leq k$ and $\tilde D(r) = 0$ for all $r > k$.
Assume also that $p= \Pr[Y=1]
> 0$ and $\overline{y}^f(s) = \Pr[Y=1|\tilde{f}(X)=s]$ is a continuous
function. Then
\begin{equation}{
\mathrm{NDCG}_{\tilde D}(f,S_n)\overset{p}{\to}
\Pr[Y=1|\tilde{f}(X)=1].}
\end{equation}
\label{theorem:kon}
\end{theorem}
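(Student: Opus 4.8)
The plan is to reduce everything to the canonical version $\tilde f$, which leaves $\mathrm{NDCG}_{\tilde D}$ unchanged (order preservation) and, by the canonical-version lemma of Section~\ref{section:Preliminaries}, makes $\tilde f(X)$ uniform on $[0,1]$. I would then condition on the realized scores $u_i=\tilde f(x_i)$. Conditionally on these, the labels $y_i$ are independent $\mathrm{Bernoulli}(\overline y^f(u_i))$, and the item occupying rank $r$ in the list produced by $f$ is exactly the one carrying the $r$-th largest score, i.e.\ the order statistic $U_{(n-r+1)}$ of $u_1,\dots,u_n$. Writing $S_k=\sum_{r=1}^k D(r)$, the strategy is to show $\mathrm{DCG}_{\tilde D}(f,S_n)/S_k\to\overline y^f(1)$ and $\mathrm{IDCG}_{\tilde D}(S_n)=S_k$, both with high probability, and take the ratio.

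The denominator is the easy part. Since $\mathcal Y=\{0,1\}$ and $D$ is a nonincreasing discount, the ideal ranking places all relevant items first, so $\mathrm{IDCG}_{\tilde D}(S_n)=\sum_{r=1}^{\min(m,k)}D(r)$, where $m=|\{i:y_i=1\}|$. Because $p=\Pr[Y=1]>0$ and $k=o(n)$, a Chernoff bound gives $m\ge pn/2>k$ with probability $1-\mathrm{neg}(n)$, hence $\mathrm{IDCG}_{\tilde D}(S_n)=S_k$ on this event. For the numerator, $\mathrm{DCG}_{\tilde D}(f,S_n)=\sum_{r=1}^k y^f_{(r)}D(r)$, where conditionally the $y^f_{(r)}$ are independent $\mathrm{Bernoulli}(\overline y^f(U_{(n-r+1)}))$. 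The key geometric fact is that the top $k$ order statistics all cluster near $1$: since $k=o(n)$, for every fixed $\eta>0$ the smallest of them, $U_{(n-k+1)}$, exceeds $1-\eta$ with probability $1-\mathrm{neg}(n)$, because the number of samples landing in $[1-\eta,1]$ is $\mathrm{Binomial}(n,\eta)$ with mean $n\eta\gg k$. Using continuity of $\overline y^f$ at $s=1$, on this event every conditional success probability $\overline y^f(U_{(n-r+1)})$ lies within $\epsilon$ of $\overline y^f(1)=\Pr[Y=1|\tilde f(X)=1]$, so the conditional expectation of $\mathrm{DCG}_{\tilde D}$ is within $\epsilon S_k$ of $\overline y^f(1)\,S_k$.

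It remains to add a concentration step. Conditionally, the variance of $\mathrm{DCG}_{\tilde D}$ is $\sum_{r=1}^k \overline y^f(U_{(n-r+1)})\bigl(1-\overline y^f(U_{(n-r+1)})\bigr)D(r)^2\le \sum_{r=1}^k D(r)^2\le D(1)\,S_k$, where the last inequality uses monotonicity $D(r)\le D(1)$. Since $\sum_r D(r)$ is unbounded and $k\to\infty$, we have $S_k\to\infty$, so $\mathrm{Var}/S_k^2\le D(1)/S_k\to0$; Chebyshev's inequality then yields $\mathrm{DCG}_{\tilde D}(f,S_n)/S_k\to\overline y^f(1)$ in probability once $\epsilon$ is let to $0$. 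Dividing the numerator estimate by $\mathrm{IDCG}_{\tilde D}(S_n)=S_k$ gives $\mathrm{NDCG}_{\tilde D}(f,S_n)\overset{p}{\to}\Pr[Y=1|\tilde f(X)=1]$.

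The main obstacle I anticipate is the simultaneous control of all top $k$ order statistics together with the two-sided $\epsilon$-argument, since the weights $D(r)$ may be highly nonuniform (most of the mass in $S_k$ could sit on a few leading ranks, or be spread out). The clean way around this is to bound only $U_{(n-k+1)}$ (all other top statistics are larger, hence also close to $1$) and to observe that the Chebyshev step succeeds precisely because monotonicity forces $\sum_{r=1}^k D(r)^2\le D(1)\,S_k$ while divergence of $\sum_r D(r)$ forces $S_k\to\infty$; these two facts together make the relative fluctuation vanish regardless of the shape of $D$.
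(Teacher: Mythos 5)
Your proof is correct, and it is more self-contained than what the paper actually provides: the paper never writes out a proof of this theorem, instead asserting that it follows from the proof of Theorem \ref{theorem:r^-alpha} by ``minor modifications,'' i.e.\ via the pseudo-expectation machinery of Definition \ref{definition:pseudo_expectation} (approximate $\mathrm{NDCG}_{\tilde D}$ by $N^f_{\tilde D}(n)=\tilde N^f_{\tilde D}(n)/F_{\tilde D}(np)$ using uniform order-statistic concentration $\tilde x_{(r)}\approx 1-r/n$, a Riemann-sum comparison, and Hoeffding as in Lemmas \ref{lemma:lemma3_of_comparable} and \ref{lemma:mnclose}, then evaluate the limit of the integral ratio). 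Your route shares the skeleton --- condition on the canonical scores, exploit conditional independence of the labels, use continuity of $\overline{y}^f$, and finish with a concentration inequality --- but exploits the regime $k=o(n)$ to bypass all of that machinery: the IDCG is identified \emph{exactly} as $S_k=\sum_{r=1}^k D(r)$ on a high-probability event rather than approximated by an integral; only the single order statistic $U_{(n-k+1)}$ needs to be controlled (monotonicity handles the rest), and only continuity of $\overline{y}^f$ at $s=1$ is used rather than a global Riemann-sum estimate; and Chebyshev replaces Hoeffding, with the nice observation that monotonicity gives $\sum_{r\le k}D(r)^2\le D(1)S_k$ while divergence of $\sum_r D(r)$ forces $S_k\to\infty$, so the relative fluctuation vanishes for \emph{any} admissible discount shape. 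What the paper's heavier approach buys is uniformity across all five convergence theorems (including $k=cn$, where the top-$k$ scores do not cluster at $1$ and one genuinely needs the integral $\int\overline{y}^f(1-s)D(ns)\,\ud s$); what yours buys is a short, elementary, and fully explicit argument for this particular statement. The only small point worth making explicit in a write-up is that the Chebyshev bound is conditional on the score vector but holds uniformly over it, so it integrates to the unconditional statement.
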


The limit of NDCG@k where $k=o(n)$ is exactly the same as NDCG with
Zipfian discount. Also like the Zipfian, the distinguishability
power of this NDCG@k measure is not clear.

We next consider the case $k = cn$ for some constant $c\in(0,1)$. We
study the standard logarithmic and the polynomial discount
respectively in the following two theorems.

\begin{theorem}
Assume $D(r) = \frac{1}{\log(1+r)}$ and $\mathcal Y=\{0,1\}$. Let $k
= cn$ for some constant $c\in (0,1)$. Define the cut-off discount
function $\tilde D$ as $\tilde D(r) = D(r)$ if $r\leq k$ and $\tilde
D(r) = 0$ otherwise. Assume also $p= \Pr[Y=1] > 0$ and
$\overline{y}^f(s) = \Pr[Y=1|\tilde{f}(X)=s]$ is a continuous
function. Then
\begin{equation}{
\mathrm{NDCG}_{\tilde D}(f,S_n)\overset{p}{\to} \frac{c}{\min\{c,
p\}}\cdot\Pr[Y=1|\tilde{f}(X)\geq 1-c].} \label{thm_kcn_NDCG_eq}
\end{equation}
\label{theorem:kcn_NDCG}
\end{theorem}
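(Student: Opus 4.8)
The plan is to treat the numerator $\mathrm{DCG}_{\tilde D}(f,S_n)$ and the denominator $\mathrm{IDCG}_{\tilde D}(S_n)$ separately, exploiting the fact that the logarithmic discount decays so slowly that, over the surviving positions $r \in [1,cn]$, it is essentially flat at the value $1/\log n$. The key technical step is an approximation lemma: for any $m = \Theta(n)$ one has $\sum_{r=1}^{m} 1/\log(1+r) = \tfrac{m}{\log n}(1+o(1))$ (which follows from $\sum_{r=1}^m 1/\log(1+r)\sim m/\log m$ together with $\log m = \log n + O(1)$), and more generally, for any weights $w_r\in\{0,1\}$,
\[
\left| \sum_{r=1}^{cn} w_r \frac{1}{\log(1+r)} - \frac{1}{\log n}\sum_{r=1}^{cn} w_r \right| \le \sum_{r=1}^{cn}\left|\frac{1}{\log(1+r)} - \frac{1}{\log n}\right| = o\!\left(\frac{n}{\log n}\right).
\]
I would prove the last bound by splitting the range at $r = \epsilon' n$: on the bulk $[\epsilon' n, cn]$ one has $\log(1+r) = \log n + O(1)$, so each summand is $O((\log n)^{-2})$ and the partial sum is $o(n/\log n)$; on the head $[1,\epsilon' n]$ the absolute difference is crudely bounded by the discount itself, whose sum is $O(\epsilon' n/\log n)$, negligible after letting $\epsilon'\to 0$ following $n\to\infty$. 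The upshot is that both $\mathrm{DCG}$ and $\mathrm{IDCG}$ equal $1/\log n$ times a pure count of relevant documents, up to $o(n/\log n)$, so the troublesome $1/\log n$ factor cancels in the ratio.

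For the numerator I would count relevant items among the top $cn$ positions ranked by $f$. Because $\tilde f(X)$ is uniform on $[0,1]$ (by the lemma on the canonical version), the empirical $(1-c)$-quantile of $\{\tilde f(x_i)\}$ converges to $1-c$, so the top $cn$ objects are, up to an $o(n)$ boundary discrepancy controlled by the continuity of $\overline{y}^f$, exactly those with $\tilde f(x_i)\ge 1-c$. The law of large numbers then gives
\[
\frac{1}{n}\sum_{r=1}^{cn} y^f_{(r)} \overset{p}{\to} \Pr[Y=1,\ \tilde f(X)\ge 1-c] = c\cdot\Pr[Y=1\mid \tilde f(X)\ge 1-c],
\]
using $\Pr[\tilde f(X)\ge 1-c]=c$. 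Combined with the approximation lemma, $\mathrm{DCG}_{\tilde D}(f,S_n) = \tfrac{n}{\log n}\big(c\,\Pr[Y=1\mid\tilde f(X)\ge 1-c] + o_p(1)\big)$.

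For the denominator, the ideal ranking fills the top positions with all relevant items, so with $N_1$ the total number of relevant documents in $S_n$ one has $\mathrm{IDCG}_{\tilde D}(S_n)=\sum_{r=1}^{\min(N_1,cn)} 1/\log(1+r)$. Since $N_1/n\to p$, we get $\min(N_1,cn)/n\to\min(p,c)$, and the approximation lemma yields $\mathrm{IDCG}_{\tilde D}(S_n)=\tfrac{n}{\log n}(\min(p,c)+o_p(1))$. Taking the ratio, the $n/\log n$ factors cancel and
\[
\mathrm{NDCG}_{\tilde D}(f,S_n) \overset{p}{\to} \frac{c\,\Pr[Y=1\mid\tilde f(X)\ge 1-c]}{\min(p,c)} = \frac{c}{\min\{c,p\}}\cdot\Pr[Y=1\mid\tilde f(X)\ge 1-c],
\]
as claimed. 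I expect the main obstacle to be making the discount approximation uniform and turning the various ``$\approx$'' into genuine convergence in probability: one must control the head of the discount sum and the boundary region near the $(1-c)$-quantile simultaneously, and ensure the error terms in numerator and denominator are $o_p(n/\log n)$ so that they vanish after division by the $\Theta(n/\log n)$ main terms. The continuity of $\overline{y}^f$ and $p>0$ (keeping the denominator bounded away from $0$) are precisely what is needed to close the argument.
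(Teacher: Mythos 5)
Your proposal is correct, but it follows a genuinely different route from the paper's. The paper does not write out a dedicated proof of this theorem: it proves Theorem \ref{theorem:r^-alpha} in detail via the pseudo-expectation machinery --- concentration of the order statistics $\tilde f(x^f_{(r)})$ around $1-r/n$, a conditional Hoeffding bound for the labels, a Riemann-sum comparison of $\sum_r \overline y^f(1-r/n)D(r)$ with $\tilde N^f_D(n)=\int_1^n\overline y^f(1-s/n)D(s)\,\ud s$, and normalization by $F(np)$ --- and then declares the cut-off logarithmic case a ``minor modification.'' You instead exploit a feature specific to this theorem: over the surviving range $r\in[1,cn]$ the logarithmic discount is asymptotically flat at $1/\log n$, so both $\mathrm{DCG}_{\tilde D}$ and $\mathrm{IDCG}_{\tilde D}$ collapse (deterministically, up to $o(n/\log n)$) to $1/\log n$ times a pure count of relevant documents, and the problem reduces to the law of large numbers for $\#\{i: y_i=1,\ \tilde f(x_i)\ge 1-c\}$ plus an $O_p(\sqrt n)$ symmetric-difference correction at the empirical $(1-c)$-quantile. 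This is more elementary (no conditional Hoeffding, no pseudo-expectation integral) and in fact uses less than the stated hypotheses --- the continuity of $\overline y^f$ plays no essential role in your argument, and your parenthetical attribution of the boundary control to that continuity is a slight misdirection: what actually controls the boundary is the uniformity of $\tilde f(X)$ and the $O_p(n^{-1/2})$ concentration of the empirical quantile. The trade-off is generality: your flatness lemma is special to the logarithmic discount and would not yield Theorem \ref{theorem:kcn_poly} (where $D(r)=r^{-\beta}$ genuinely varies across $[1,cn]$ and the limit retains the integral $\int_{1-c}^1\overline y^f(s)(1-s)^{-\beta}\,\ud s$), whereas the paper's pseudo-expectation scheme handles all five convergence theorems uniformly.
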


\begin{theorem}
Assume $D(r) = r^{-\beta}$ and $\mathcal Y=\{0,1\}$, where
$\beta\in(0,1)$. Let $k=cn$ for some constant $c\in (0,1)$. Define
the cut-off discount function $\tilde D(r) = D(r)$ if $r\leq k$ and
$\tilde D(r) = 0$ otherwise. Assume also $p= \Pr[Y=1] > 0$ and
$\overline{y}^f(s) = \Pr[Y=1|r(X)=s]$ is a continuous function. Then
\begin{equation}{
\mathrm{NDCG}_{\tilde D}(f,S_n)\overset{p}{\to}
\frac{1-\beta}{(\min\{c,p\})^{1-\beta}}\cdot
\int_{1-c}^1{\overline{y}^f(s)\cdot (1-s)^{-\beta}\ud s}.}
\label{thm_kcn_NDCG}
\end{equation}
\label{theorem:kcn_poly}
\end{theorem}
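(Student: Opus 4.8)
The plan is to treat the numerator $\mathrm{DCG}_{\tilde D}(f,S_n)$ and the denominator $\mathrm{IDCG}_{\tilde D}(S_n)$ separately, to show that each has a deterministic limit after rescaling by $n^{1-\beta}$, and then to combine them by Slutsky's theorem (the denominator limit is positive since $\min\{c,p\}>0$). This follows the template of the proof of Theorem \ref{theorem:r^-alpha}; the only structural changes are that the rank sum in the numerator is now truncated at $k=cn$, and that the ideal ranking in the denominator can place at most $k$ relevant items among the top positions.

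For the denominator I would first observe that the best ranking stacks all relevant items at the top ranks, so with $N=\sum_{i=1}^n Y_i$ the number of relevant items one has $\mathrm{IDCG}_{\tilde D}(S_n)=\sum_{r=1}^{\min\{N,k\}} r^{-\beta}$. Since $N/n\to p$ almost surely by the law of large numbers and $k/n=c$, we get $\min\{N,k\}/n\to\min\{c,p\}$, and the elementary estimate $\sum_{r=1}^{an} r^{-\beta}=\frac{(an)^{1-\beta}}{1-\beta}(1+o(1))$ (valid for $\beta<1$) yields
\begin{equation*}
n^{-(1-\beta)}\,\mathrm{IDCG}_{\tilde D}(S_n)\overset{p}{\to}\frac{(\min\{c,p\})^{1-\beta}}{1-\beta}.
\end{equation*}

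For the numerator I would pass to the canonical version $\tilde f$, so that $U_i=\tilde f(X_i)$ are i.i.d.\ uniform on $[0,1]$ and, conditioned on $U_i=s$, $Y_i$ is Bernoulli with mean $\overline{y}^f(s)$. The rank of item $i$ equals $n(1-U_i)$ up to the fluctuation of the empirical CDF, and the cut-off $r\le k$ corresponds to $U_i\ge 1-c$. Replacing $(\mathrm{rank}_i)^{-\beta}$ by $(n(1-U_i))^{-\beta}$ turns the numerator into $n^{1-\beta}$ times the i.i.d.\ average of $g(U_i,Y_i)=Y_i(1-U_i)^{-\beta}\mathbf{1}[U_i\ge 1-c]$, whose mean is exactly $\int_{1-c}^1\overline{y}^f(s)(1-s)^{-\beta}\ud s$. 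The law of large numbers then gives
\begin{equation*}
n^{-(1-\beta)}\,\mathrm{DCG}_{\tilde D}(f,S_n)\overset{p}{\to}\int_{1-c}^1\overline{y}^f(s)(1-s)^{-\beta}\ud s,
\end{equation*}
and dividing the two displays produces the claimed limit.

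The main obstacle is the singularity of $(1-s)^{-\beta}$ at $s=1$, i.e.\ the top ranks $r=O(1)$ where the discount is largest but the approximation $\mathrm{rank}_i\approx n(1-U_i)$ is weakest, since there the empirical-CDF error (of order $n^{-1/2}$) is comparable to $1-U_i$ itself. I would control this by truncation: split off the top $m=m(n)$ ranks with $m\to\infty$ but $m=o(n)$, bound their total contribution crudely by $\sum_{r=1}^{m}r^{-\beta}=O(m^{1-\beta})=o(n^{1-\beta})$ so that it vanishes after normalization, and apply the law of large numbers together with uniform empirical-process control (e.g.\ Dvoretzky--Kiefer--Wolfowitz) only on the bulk region $U_i\in[1-c,\,1-m/n]$, where the relative rank error is negligible and the rescaled discounts are uniformly close to $(1-U_i)^{-\beta}$. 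Continuity of $\overline{y}^f$ and $p>0$ enter exactly as in Theorem \ref{theorem:r^-alpha}, to justify the Riemann-sum convergence in the bulk and to guarantee $N/n\to p$.
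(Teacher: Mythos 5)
Your proposal is correct, and it reaches the right limit by a route that is recognizably in the same family as the paper's but organized differently. The paper proves this theorem as a ``minor modification'' of Theorem \ref{theorem:r^-alpha}, whose engine is Lemma \ref{lemma:mnclose}: one keeps the discount attached to the deterministic rank index, writes $\mathrm{DCG}_{\tilde D}=\sum_{r\le k} y^f_{(r)}D(r)$, concentrates the order statistics $\tilde f(x^f_{(r)})$ around $1-r/n$, replaces the conditional means $\overline{y}^f(\tilde f(x^f_{(r)}))$ by $\overline{y}^f(1-r/n)$ using continuity, controls the label fluctuations by Hoeffding conditioned on the positions, and finally passes from the Riemann sum $\sum_r \overline{y}^f(1-r/n)D(r)$ to the pseudo-expectation integral (Claim \ref{Claim:for_comparable_lemma}); the normalization by $\mathrm{IDCG}$ is handled exactly as in your denominator step. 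You instead transport the discount onto each item via its random rank, approximating $D(\mathrm{rank}_i)$ by $(n(1-U_i))^{-\beta}$, which collapses the position randomness and the label randomness into a single law of large numbers for the i.i.d.\ summands $Y_i(1-U_i)^{-\beta}\mathbf{1}[U_i\ge 1-c]$ (whose mean is finite since $\beta<1$, even though the variance may not be). The price of this reorganization is exactly the obstacle you identify: the singularity of $(1-s)^{-\beta}$ at $s=1$ makes the rank approximation unreliable for the top $O(\sqrt{n\log n})$ items, a difficulty the paper's decomposition never meets because there the discount is always evaluated at the integer rank. Your truncation at $m(n)$ ranks with $\sqrt{n\log n}\ll m\ll n$, crude bound $\sum_{r\le m}r^{-\beta}=O(m^{1-\beta})=o(n^{1-\beta})$ on both the true and idealized top contributions, and DKW control on the bulk does resolve it (one should also note that the $O(\sqrt{n\log n})$ boundary items near $U_i=1-c$ contribute only $O(\sqrt{n\log n}\,(cn)^{-\beta})=o(n^{1-\beta})$, but this is routine). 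What your approach buys is a more self-contained and probabilistically transparent argument -- a single LLN plus Slutsky -- while the paper's buys uniformity across all the discount/cut-off variants it treats with one set of lemmas and avoids the top-rank singularity entirely.
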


The consistent distinguishability of the two measures considered in Theorem
\ref{theorem:kcn_NDCG} and Theorem \ref{theorem:kcn_poly} are
similar to their corresponding full NDCG respectively. To be
precise, for NDCG@k ($k=cn$) with logarithmic discount and NDCG@k
with $r^{-\beta}$ ($\beta \in (0,1)$) discount, consistent
distinguishability holds under the condition
given in Theorem \ref{theorem:comparable} and Theorem
\ref{theorem:comparable_r^beta} respectively. Hence these two
cut-off versions NDCG are feasible ranking measures.

\subsection{Results for General $\mathcal{Y}$}\label{subsection:General_Cases}
Some theorems given so far assume $\mathcal{Y}=\{0,1\}$. Here we
give complete results for the general case that $|\mathcal{Y}| \ge
2$, and $\mathcal{Y}=\{\mathfrak{y}_1,\ldots,\mathfrak{y}_{ |
\mathcal{Y} | }\}$. We only state the theorems and omit the proofs,
which are straightforward modifications of the special case $\mc{Y}=
\{0,1\}$. The case $D(r)=\frac{1}{\log (1+r)}$ has already been
included in Theorem \ref{theorem:converge_2_1}. It always converges
to $1$ whatever the ranking function is. We next consider
$r^{-\beta}$ decay.

\begin{theorem}
Assume $D(r)=r^{-\beta}$ with $\beta \in (0,1)$. Suppose that
$\mathcal{Y}=\{\mathfrak{y}_1,\ldots,\mathfrak{y}_{ | \mathcal{Y} |
}\}$, where $\mathfrak{y}_1 > \ldots >
\mathfrak{y}_{|\mathcal{Y}|}$. Assume $f(X) \in [a,b]$; $f(X)$ has a
probability density function such that $\mathbb P(f(X)=s) >0$ for
all $s \in [a,b]$; $\Pr(Y=\mathfrak{y}_j)>0$ and
$\Pr(Y=\mathfrak{y}_j|\tilde{f}(X)=s)$ is a continuous function of
$s$ for all $j$. Then
\begin{equation*}
\mathrm{NDCG}_D(f,S_n) \xrightarrow{p}
\frac{(1-\beta)\int_{0}^{1}\mathbb{E}[Y|\tilde{f}(X)=s](1-s)^{-\beta}ds}{\sum_{j=1}^{|\mathcal{Y}|}\mathfrak{y}_j(R_j^{1-\beta}-R_{j-1}^{1-\beta})}
\end{equation*}
where $R_0=0$; $R_j = \Pr(Y \ge \mathfrak{y}_j)$.
\end{theorem}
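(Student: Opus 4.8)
The plan is to analyze the numerator $\mathrm{DCG}_D(f,S_n)$ and the denominator $\mathrm{IDCG}_D(S_n)$ separately after dividing both by the common normalizing factor $n^{1-\beta}$, to establish convergence in probability of each, and then to form the ratio via the continuous mapping theorem (equivalently Slutsky, since the denominator limit is a nonzero constant). This mirrors the proof of Theorem~\ref{theorem:r^-alpha}; the only genuinely new ingredient is the block decomposition of the ideal ranking when $|\mc Y| > 2$.

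For the numerator I would pass to the canonical version $\tilde f$, so that $\tilde f(X)$ is uniform on $[0,1]$. The essential fact is that the item placed at rank $r$ by $f$ has canonical score $\tilde f(x^f_{(r)})$ close to $1-r/n$, since $\tilde f(x^f_{(r)})$ is the $(n-r+1)$-st uniform order statistic and concentrates at $(n-r+1)/(n+1)$. Writing $\frac{1}{n^{1-\beta}}\mathrm{DCG}_D(f,S_n) = \frac{1}{n}\sum_{r=1}^n y^f_{(r)}(r/n)^{-\beta}$ and replacing $r/n$ by $1-s$, this becomes a Riemann sum whose continuous limit is $\int_0^1 \mathbb E[Y\mid\tilde f(X)=s](1-s)^{-\beta}\,\ud s$. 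Making this rigorous requires (i) concentration of the empirical relevance at each rank around $\mathbb E[Y\mid\tilde f(X)=s]$, which uses continuity of $s\mapsto\mathbb E[Y\mid\tilde f(X)=s]$ together with $\mathbb E[Y\mid\tilde f(X)=s] = \sum_j \mathfrak y_j\Pr(Y=\mathfrak y_j\mid\tilde f(X)=s)$, so that the linearity in $Y$ reduces the argument to the binary estimate; and (ii) control of the region of small $r$ (equivalently $s$ near $1$), where the weight $(1-s)^{-\beta}$ has an integrable singularity.

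For the denominator the ideal ranking sorts the sample by decreasing relevance. By the law of large numbers the number of items with relevance exactly $\mathfrak y_j$ is $nP_j + o(n)$ with $P_j=\Pr(Y=\mathfrak y_j)$, so in the ideal order the $\mathfrak y_j$-items occupy the contiguous rank block from about $nR_{j-1}$ to $nR_j$, where $R_j=\Pr(Y\ge\mathfrak y_j)=\sum_{i\le j}P_i$ and $R_0=0$. Hence
\begin{equation*}
\frac{1}{n^{1-\beta}}\mathrm{IDCG}_D(S_n) = \frac{1}{n^{1-\beta}}\sum_{j=1}^{|\mc Y|}\mathfrak y_j\!\!\sum_{r=nR_{j-1}+1}^{nR_j}\!\!r^{-\beta} + o(1),
\end{equation*}
and approximating each inner block sum by $\int_{nR_{j-1}}^{nR_j} r^{-\beta}\,\ud r = \tfrac{n^{1-\beta}}{1-\beta}\big(R_j^{1-\beta}-R_{j-1}^{1-\beta}\big)$ yields the limit $\frac{1}{1-\beta}\sum_j \mathfrak y_j\big(R_j^{1-\beta}-R_{j-1}^{1-\beta}\big)$ in probability. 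Dividing the numerator limit by this denominator limit produces the factor $(1-\beta)$ in front of the integral and gives the stated expression.

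The main obstacle is the same as in the binary case: the weight $(1-s)^{-\beta}$ blows up as $s\to 1$, i.e.\ at the top ranks $r=1,2,\dots$, precisely where the order-statistic approximation $\tilde f(x^f_{(r)})\approx 1-r/n$ is least accurate and the empirical relevances fluctuate most. Since $\beta<1$ the singularity is integrable, so I would split the sum at a threshold rank $r=\epsilon n$, bound the head $r\le\epsilon n$ uniformly (its normalized contribution is $O(\epsilon^{1-\beta})$, hence negligible as $\epsilon\to 0$), and apply the Riemann-sum and concentration argument on the tail where the approximation is uniform. The denominator block decomposition needs no such care, because each positive $R_j$ keeps the relevant ranks bounded away from $0$ and the boundary effects between consecutive blocks contribute only $o(n^{1-\beta})$.
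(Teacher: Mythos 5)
Your proposal is correct and takes essentially the same route as the paper: the paper proves the binary case (Theorem~\ref{theorem:r^-alpha}) by comparing $\mathrm{NDCG}_D$ to a pseudo-expectation built from exactly the order-statistic concentration and Riemann-sum approximation you describe, and then states that the general-$\mathcal{Y}$ version follows by "straightforward modification," which is precisely the block decomposition of the ideal ranking you supply for the denominator. The only cosmetic difference is that you normalize numerator and denominator by $n^{1-\beta}$ and combine via Slutsky, whereas the paper folds the $\mathrm{IDCG}\approx F(np)$ step into a single concentration lemma.
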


The next theorem is for top-k type NDCG measures, where $k=o(n)$.

\begin{theorem}
Suppose that $\mathcal{Y}=\{\mathfrak{y}_1,\ldots,\mathfrak{y}_{|
\mathcal{Y} |}\}$, where $\mathfrak{y}_1 > \ldots >
\mathfrak{y}_{|\mathcal{Y}|}$. Assume $ D(r)$ and $k$ grow
unboundedly and $k/n=o(1)$. For any $n$, let $\tilde D(r) = D(r)$ if
$r \le k$ and $\tilde D(r)=0$ otherwise. Assume $f(X) \in [a,b]$;
$f(X)$ has a probability density function such that $\mathbb
P(f(X)=s) >0$ for all $s \in [a,b]$; $\Pr(Y=\mathfrak{y}_j)>0$ and
$\Pr(Y=\mathfrak{y}_j|f(X)=s)$ is a continuous function of $s$ for
all $j$. Then
\begin{equation*}
\mathrm{NDCG}_{\tilde D}(f,S_n) \xrightarrow{p}
\frac{1}{\mathfrak{y}_1} \cdot \mathbb{E}[Y|\tilde{f}(X)=1].
\end{equation*}
\end{theorem}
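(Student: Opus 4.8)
This statement is the general-relevance analogue of Theorem \ref{theorem:kon}, so I would follow the same template: analyze the numerator $\mathrm{DCG}_{\tilde D}(f,S_n)=\sum_{r=1}^{k}y^f_{(r)}D(r)$ and the denominator $\mathrm{IDCG}_{\tilde D}(S_n)$ separately after normalizing each by $S_k:=\sum_{r=1}^{k}D(r)$, and then take the ratio via Slutsky. Write $p_1=\Pr(Y=\mathfrak{y}_1)>0$ and $\overline{y}^f(s)=\mathbb{E}[Y\mid\tilde f(X)=s]$. Since $f$ has a density with full support on $[a,b]$, the canonical map $\tilde f$ is a strictly increasing homeomorphism of $[a,b]$ onto $[0,1]$, so the assumed continuity of $\Pr(Y=\mathfrak{y}_j\mid f(X)=\cdot)$ transfers to continuity of $\overline{y}^f$ near $s=1$, with $\overline{y}^f(1)=\mathbb{E}[Y\mid\tilde f(X)=1]$. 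I would use repeatedly that $\tilde f(X)$ is uniform on $[0,1]$ and that $\tilde f$ preserves the order of $f$, so the top-$k$ items under $f$ are exactly the $k$ largest order statistics of the uniform sample $\tilde f(x_1),\dots,\tilde f(x_n)$.

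\textbf{The denominator.} In the ideal ranking the leading positions are filled by items of maximal relevance $\mathfrak{y}_1$. The count $N_1=\#\{i:y_i=\mathfrak{y}_1\}$ is $\mathrm{Binomial}(n,p_1)$ with mean $np_1=\Theta(n)$, whereas $k=o(n)$; hence $k\le np_1/2$ for large $n$, and a Chernoff/law-of-large-numbers bound gives $\Pr(N_1<k)\to 0$. On the event $\{N_1\ge k\}$ the first $k$ ideal positions all carry relevance $\mathfrak{y}_1$, so $\mathrm{IDCG}_{\tilde D}(S_n)=\mathfrak{y}_1\,S_k$ with probability tending to $1$.

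\textbf{The numerator (the crux).} I would condition on the scores $\tilde f(x_1),\dots,\tilde f(x_n)$. The sorting permutation is then determined (ties have probability zero under the density assumption), and the labels $y^f_{(r)}$ are conditionally independent, bounded, with conditional means $\overline{y}^f(s_{(r)})$, where $s_{(1)}\ge\cdots\ge s_{(k)}$ are the $k$ largest uniform order statistics. Fix $\varepsilon>0$ and, by continuity of $\overline{y}^f$ at $1$, choose $\eta>0$ with $|\overline{y}^f(s)-\overline{y}^f(1)|<\varepsilon$ on $[1-\eta,1]$. Because $k=o(n)$, the $k$-th largest order statistic has mean $\tfrac{n-k+1}{n+1}\to 1$ and concentrates, so the event $A=\{s_{(k)}\ge 1-\eta\}$ has $\Pr(A)\to 1$; on $A$ every conditional mean lies within $\varepsilon$ of $\overline{y}^f(1)$, whence $S_k^{-1}\sum_{r=1}^{k}\overline{y}^f(s_{(r)})D(r)$ is within $\varepsilon$ of $\overline{y}^f(1)$. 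For concentration of the normalized DCG about this conditional mean I would bound the conditional variance by $\sum_{r=1}^{k}\mathrm{Var}(y^f_{(r)})D(r)^2\le C\sum_{r=1}^{k}D(r)^2$ with $C$ depending only on $\mathcal{Y}$; dividing by $S_k^2$ and using that $D$ is bounded by $D(1)$ yields the decisive estimate $S_k^{-2}\sum_{r=1}^{k}D(r)^2\le D(1)/S_k\to 0$, since $\sum_r D(r)$ is unbounded forces $S_k\to\infty$. Chebyshev's inequality then gives $S_k^{-1}\mathrm{DCG}_{\tilde D}(f,S_n)\xrightarrow{p}\overline{y}^f(1)$.

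\textbf{Conclusion and main obstacle.} Combining the two parts through Slutsky's theorem gives $\mathrm{NDCG}_{\tilde D}(f,S_n)=\mathrm{DCG}_{\tilde D}/\mathrm{IDCG}_{\tilde D}\xrightarrow{p}\overline{y}^f(1)/\mathfrak{y}_1=\mathbb{E}[Y\mid\tilde f(X)=1]/\mathfrak{y}_1$, as claimed. I expect the numerator step to be the hard part, since it requires handling two sources of randomness at once: one must force the top-$k$ scores into a small neighborhood of $1$ via order-statistic concentration, and, conditionally on the scores, establish a weighted law of large numbers for the reordered labels. The quantitative engine is the ratio $\sum_{r=1}^{k}D(r)^2\big/\big(\sum_{r=1}^{k}D(r)\big)^2\to 0$, which is exactly what divergence of $\sum_r D(r)$ supplies and which would fail for a summable discount, mirroring the dichotomy underlying Theorem \ref{prop_unbound}.
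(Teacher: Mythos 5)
Your proposal is correct and follows essentially the same route as the paper, which omits this proof as a ``straightforward modification'' of the binary case (Theorem \ref{theorem:kon}, itself reduced to the machinery of Lemma \ref{lemma:mnclose}/Lemma \ref{lemma:lemma3_of_comparable}): condition on the scores, use order-statistic concentration to push the top-$k$ positions into a neighborhood of $s=1$ where continuity of $\overline{y}^f$ applies, establish conditional concentration of the weighted label sum using divergence of $\sum_r D(r)$, and identify $\mathrm{IDCG}_{\tilde D}$ with $\mathfrak{y}_1\sum_{r\le k}D(r)$ since $k=o(n)\ll np_1$. Your use of Chebyshev in place of the paper's Hoeffding bound, and the explicit normalization by $S_k$ rather than the pseudo-expectation's $F(np)$, are immaterial variations on the same argument.
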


The last two theorems are for top-$k$, where $k/n=c$. We consider
both logarithm discount and polynomial discount separately.

\begin{theorem}\label{theorem:complete_kon}
Suppose that $\mathcal{Y}=\{\mathfrak{y}_1,\ldots,\mathfrak{y}_{|
\mathcal{Y} |}\}$, where $\mathfrak{y}_1 > \ldots >
\mathfrak{y}_{|\mathcal{Y}|}$. Let $k/n=c$ for some constant $c>0$.
Let $D(r)=\frac{1}{\log(1+r)}$. For any $n$, let $\tilde D(r) =
D(r)$ if $r \le k$ and $\tilde D(r)=0$ otherwise. Assume $f(X) \in
[a,b]$; $f(X)$ has a probability density function such that $\mathbb
P(f(X)=s)>0$ for all $s \in [a,b]$; $\Pr(Y=\mathfrak{y}_j)>0$ and
$\Pr(Y=\mathfrak{y}_j|f(X)=s)$ is a continuous function of $s$ for
all $j$. Then
\begin{equation*}
\mathrm{NDCG}_{\tilde D}(f,S_n) \xrightarrow{p} \frac{c \cdot
\mathbb{E}[Y|\tilde{f}(X) \ge
1-c]}{\sum_{j=1}^{t}\mathfrak{y}_j(R_j-R_{j-1})+\mathfrak{y}_{t+1}(c-R_t)}.
\end{equation*}
where $R_0=0$; $R_j = \mathbb P(Y \ge \mathfrak{y}_j)$; t is defined
by $R_t < c \le R_{t+1}$.
\end{theorem}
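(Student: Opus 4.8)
The plan is to analyze the numerator $\mathrm{DCG}_{\tilde D}(f,S_n)$ and the denominator $\mathrm{IDCG}_{\tilde D}(S_n)$ separately, in each case after multiplying by the common scale factor $\frac{\log n}{n}$, and then to take the ratio so that this factor cancels. This mirrors the argument for the binary case in Theorem \ref{theorem:kcn_NDCG}; the only genuinely new ingredient is the bookkeeping over the relevance levels $\mathfrak{y}_1 > \cdots > \mathfrak{y}_{|\mathcal{Y}|}$ in the ideal ranking. The backbone of both estimates is a \emph{slowly varying discount} reduction: since $\log(1+r)$ varies slowly, for any fixed $\epsilon>0$ the factor $\frac{\log n}{\log(1+r)}$ tends to $1$ uniformly over $r\in[\epsilon n, cn]$, while the contribution of the ranks $r\le \epsilon n$ is $O(\epsilon)$, because $\sum_{r\le \epsilon n}\frac{1}{\log(1+r)}\sim \frac{\epsilon n}{\log n}$ (logarithmic-integral asymptotics) and the gains $y^f_{(r)}$ are bounded by $\mathfrak{y}_1$. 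Hence $\frac{\log n}{n}\mathrm{DCG}_{\tilde D}(f,S_n)=\frac1n\sum_{r=1}^{cn}y^f_{(r)}+o_P(1)$, and likewise for the ideal ranking.

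For the numerator I would then reduce $\frac1n\sum_{r=1}^{cn}y^f_{(r)}$ to an integral. By Glivenko--Cantelli the empirical distribution of $\tilde f(X_i)$ converges uniformly to the uniform law on $[0,1]$, so the set of the top $cn$ items ranked by $f$ coincides, up to $o(n)$ exceptions, with $\{i:\tilde f(X_i)\ge 1-c\}$. A conditional law of large numbers then gives
\[
\frac1n\sum_{r=1}^{cn}y^f_{(r)}\xrightarrow{p}\int_{1-c}^1\mathbb E[Y\mid\tilde f(X)=s]\,\ud s=c\cdot\mathbb E[Y\mid\tilde f(X)\ge 1-c],
\]
where continuity of $\Pr(Y=\mathfrak{y}_j\mid\tilde f(X)=s)$ guarantees that the single boundary threshold $1-c$ contributes nothing in the limit.

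For the denominator the ideal ranking sorts items by decreasing relevance. By the law of large numbers the number of items with relevance exactly $\mathfrak{y}_j$ is $n(R_j-R_{j-1})+o(n)$, so after sorting, the items of relevance $\ge\mathfrak{y}_j$ occupy the top $nR_j+o(n)$ positions. The definition $R_t<c\le R_{t+1}$ identifies which levels fall inside the cut-off: the top $cn$ positions contain all $n(R_j-R_{j-1})$ items of each level $\mathfrak{y}_1,\dots,\mathfrak{y}_t$ and exactly $n(c-R_t)$ items of level $\mathfrak{y}_{t+1}$. Applying the same discount reduction yields
\[
\frac{\log n}{n}\,\mathrm{IDCG}_{\tilde D}(S_n)\xrightarrow{p}\sum_{j=1}^{t}\mathfrak{y}_j(R_j-R_{j-1})+\mathfrak{y}_{t+1}(c-R_t).
\]

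Finally I would combine the two estimates by Slutsky's theorem: the ratio of numerator to denominator converges in probability to the claimed limit, the scale factor $\frac{n}{\log n}$ having cancelled, which requires the limiting denominator to be strictly positive; this holds since its leading term equals $\mathfrak{y}_1 R_1>0$ (when $t\ge 1$) or $\mathfrak{y}_1 c>0$ (when $t=0$) under the assumptions $\mathfrak{y}_1>0$ and $\Pr(Y=\mathfrak{y}_j)>0$. I expect the main obstacle to be making the slowly varying discount reduction fully rigorous — in particular controlling the small-rank block uniformly and showing that the empirical quantile at level $1-c$ matches the top-$cn$ index set up to a negligible fraction — rather than the relevance-level bookkeeping, which is routine once the counts are known to concentrate.
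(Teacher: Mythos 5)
Your proposal is correct, but it follows a genuinely different route from the paper's. The paper derives this general-$\mathcal{Y}$ statement as a ``straightforward modification'' of the binary case (Theorem \ref{theorem:kcn_NDCG}), which in turn is proved by a minor modification of the proof of Theorem \ref{theorem:r^-alpha}: the argument there runs through the pseudo-expectation of Definition \ref{definition:pseudo_expectation} --- concentration of the order statistics $\tilde f(x^f_{(r)})$ around $1-r/n$, a Riemann-sum approximation of $\sum_r \overline{y}^f(1-r/n)D(r)$ by the integral $\tilde N^f_D(n)$, Hoeffding's inequality for the conditional label fluctuations, and a Chernoff bound identifying $\mathrm{IDCG}$ with $F(np)$ --- after which one computes $\lim_n N^f_D(n)$. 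You instead strip the discount out first, using the slow variation of $\log(1+r)$ to show $\tfrac{\log n}{n}\mathrm{DCG}_{\tilde D}(f,S_n)=\tfrac1n\sum_{r\le cn}y^f_{(r)}+o_P(1)$ after discarding an $O(\epsilon)$ small-rank block, and then apply Glivenko--Cantelli plus the ordinary law of large numbers to the undiscounted sum over the threshold set $\{i:\tilde f(X_i)\ge 1-c\}$; the ideal side is handled by the same reduction plus counting items per relevance level. Both arguments are sound: the two delicate points you flag (the double limit $n\to\infty$ then $\epsilon\to 0$, and the $o_P(n)$ symmetric difference between the top-$cn$ set and the threshold set) go through because $Y$ is bounded, and continuity of the conditional probabilities is not even essential for the boundary step. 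What each buys: your route is more elementary and transparent for this particular theorem, but it is tailored to the logarithmic discount with a proportional cut-off, where the discount is asymptotically constant over $[\epsilon n,cn]$; it would not transfer to the $r^{-\beta}$ variants, whereas the paper's integral representation handles all discounts uniformly and also yields the quantitative rates reused in the distinguishability theorems. One small point in your favor: you make explicit that positivity of the limiting denominator requires $\mathfrak{y}_1>0$, which the paper leaves implicit.
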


\begin{theorem}
Let $ D(r)=r^{-\beta}$ with $\beta \in (0,1)$, and $\tilde D(r) =
D(r)$ if $r \le k$ and $\tilde D(r)=0$ otherwise. Using the same
notions and under the same conditions as in Theorem
\ref{theorem:complete_kon}
\begin{equation*}
\mathrm{NDCG}_{\tilde D}(f,S_n) \xrightarrow{p}
\frac{(1-\beta)\int_{1-c}^{1}\mathbb{E}[Y|\tilde{f}(X)=s](1-s)^{-\beta}ds}{\sum_{j=1}^{t}\mathfrak{y}_j(R_j^{1-\beta}-R_{j-1}^{1-\beta})+\mathfrak{y}_{t+1}(c^{1-\beta}-R_t^{1-\beta})}.
\end{equation*}
\end{theorem}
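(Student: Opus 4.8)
The plan is to normalize both $\mathrm{DCG}_{\tilde D}(f,S_n)$ and $\mathrm{IDCG}_{\tilde D}(S_n)$ by $n^{1-\beta}$, show that each converges in probability to a deterministic constant, and then conclude by Slutsky's theorem that the ratio $\mathrm{NDCG}_{\tilde D}(f,S_n)$ tends to the ratio of the two limits. As in the proof of Theorem \ref{theorem:kcn_poly}, I would first replace $f$ by its canonical version $\tilde f$: this preserves order (hence leaves NDCG unchanged) and gives $\tilde f(X)\sim\mathrm{Unif}[0,1]$, which lets me identify the item occupying rank $r=un$ with a sample whose canonical score concentrates at the quantile $1-u$.

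For the numerator, write
\[
\frac{1}{n^{1-\beta}}\mathrm{DCG}_{\tilde D}(f,S_n)=\frac{1}{n}\sum_{r=1}^{cn} y^f_{(r)}\,(r/n)^{-\beta}.
\]
Since the rank-$r$ item sits at score $s\approx 1-r/n$ and $y^f_{(r)}$ has conditional mean $\mathbb E[Y\mid\tilde f(X)=s]$, this is a weighted empirical average that should converge to $\int_0^c \mathbb E[Y\mid\tilde f(X)=1-u]\,u^{-\beta}\,\ud u$; the substitution $s=1-u$ turns this into the claimed numerator $\int_{1-c}^1\mathbb E[Y\mid\tilde f(X)=s](1-s)^{-\beta}\,\ud s$. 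The delicate point, and the main technical obstacle, is that the weight $u^{-\beta}$ blows up at $u=0$ (the top ranks), so I would split the sum at a small $u=\varepsilon$. On $[\varepsilon,c]$, order-statistic concentration together with continuity of $s\mapsto\mathbb E[Y\mid\tilde f(X)=s]$ gives Riemann-sum convergence; on $[0,\varepsilon]$ I bound the contribution by $M\cdot n^{\beta-1}\sum_{r\le\varepsilon n}r^{-\beta}\le M\varepsilon^{1-\beta}/(1-\beta)$, where $M=\max_j|\mathfrak y_j|$, which is small precisely because $\beta<1$ makes $u^{-\beta}$ integrable. Letting $\varepsilon\to 0$ after $n\to\infty$ closes the argument.

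For the denominator I would compute $\mathrm{IDCG}_{\tilde D}(S_n)$ exactly as in the general-$\mathcal Y$ ideal-DCG analyses underlying Theorem \ref{theorem:complete_kon}. The ideal ranking places all items of relevance $\mathfrak y_1$ first, then all of $\mathfrak y_2$, and so on; by the law of large numbers the number of items with relevance at least $\mathfrak y_j$ concentrates at $nR_j$, so the block of positions receiving value $\mathfrak y_j$ is $(nR_{j-1},nR_j]$. Because the cut-off $cn$ falls inside the block of value $\mathfrak y_{t+1}$ (by $R_t<c\le R_{t+1}$),
\[
\frac{1}{n^{1-\beta}}\mathrm{IDCG}_{\tilde D}(S_n)\xrightarrow{p}\sum_{j=1}^t\mathfrak y_j\int_{R_{j-1}}^{R_j}u^{-\beta}\,\ud u+\mathfrak y_{t+1}\int_{R_t}^{c}u^{-\beta}\,\ud u,
\]
and evaluating $\int_{R_{j-1}}^{R_j}u^{-\beta}\ud u=\tfrac{1}{1-\beta}(R_j^{1-\beta}-R_{j-1}^{1-\beta})$ (and similarly for the last term) yields $\tfrac{1}{1-\beta}$ times the stated denominator.

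Finally, dividing the numerator limit by the denominator limit cancels the two factors of $n^{1-\beta}$, and the stray $\tfrac{1}{1-\beta}$ in the denominator becomes the $(1-\beta)$ prefactor of the claimed limit. The denominator limit is a strictly positive constant (at least $\mathfrak y_1 R_1^{1-\beta}/(1-\beta)>0$, since $\mathfrak y_1>0$ and each relevance level has positive probability), so Slutsky's theorem gives convergence in probability of the ratio. Apart from the singularity handling in the numerator, every step is a routine adaptation of the binary case Theorem \ref{theorem:kcn_poly} combined with the general-$\mathcal Y$ IDCG block computation.
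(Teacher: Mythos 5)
Your proposal is correct and matches the paper's approach: the paper omits the proof of this general-$\mathcal{Y}$ theorem as a ``straightforward modification'' of the binary case, whose own proof (via Theorem \ref{theorem:r^-alpha} and Lemma \ref{lemma:mnclose}) is exactly the scheme you describe --- order-statistic concentration turning the DCG into a Riemann sum for $\int \mathbb{E}[Y|\tilde f(X)=1-u]u^{-\beta}\,\ud u$, a law-of-large-numbers block computation for the IDCG, and division of the two limits. Your explicit handling of the integrable singularity at $u=0$ and of the block structure $R_t<c\le R_{t+1}$ supplies precisely the details the paper leaves implicit.
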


\section{Experimental Results}\label{section:experiments}

All theoretical results in this paper are proved under the
assumption that the objects to rank are i.i.d. data. Often in real
applications the data are not strictly i.i.d or even not random.
Here we conduct experiments on a real dataset --- Web search data.
The aim is to see to what extent the behavior of the ranking
measures on real datasets agree with our theory obtained under the
i.i.d. assumption.

The dataset we use contains click-through log data of a search
engine. We collected the clicked documents for 40 popular queries as
test set, which are regarded as $40$ independent ranking tasks. In
each task, there are $5000$ Web documents with clicks. To avoid
heavy work of human labeling, we simply label each document by its
click number according to the following rule. We assign relevancy
$y=2$ to documents with more than $1000$ clicks, $1$ to those with
$100$ to $1000$ clicks, and $0$ to the rest. In each task, we
extracted $40$ features for each item representing its relevance to
the given query. A detail is how to construct $S_n$. In our
theoretical analysis we assume $S_n$ contains i.i.d. data. Since the
goal of the experiments is to see how our theory works for real
applications, we construct $S_n$ as follows. For each query, there
are totally $5000$ documents which we denote by
$x_1,\ldots,x_{5000}$. Assume each document has a generating time.
Without loss of generality we assume $x_1$ was generated earliest
and $x_{5000}$ latest. We set $S_n=\{(x_1,y_1),\dots,(x_n,y_n)\}$
for each $1\le n \le 5000$. Such a construction simulates that in
reality there may be increasing number of documents needed to rank
by a search engine over time. We use three ranking functions in the
experiments: a trained RankSVM model \citep{RSVM}, a trained ListNet
model \citep{listnet}, and a function chosen randomly. To be
concrete, the random function is constructed as follows. For each $x
\in \mc{X}$, we set $f(x)$ by choosing a number uniformly random
from $[-1,1]$. For the trained models (i.e., listNet and RankSVM),
parameters are learned from a separate large training set construct
in the same manner as the test set. Clearly, ListNet and RankSVM are
relatively good ranking functions and the random function is bad.

We analyze the following typical NDCG type ranking measures by
experiments:
\begin{itemize}
\item Standard NDCG: $D(r) =
\frac{1}{\log(1+r)}$. See Figure \ref{Figure_1/log(1+r)}, Theorem
\ref{theorem:converge_2_1} and Theorem \ref{theorem:comparable}.

\item NDCG with a feasible discount function: $D(r) =
r^{-1/2}$. See Figure \ref{Figure_r^(-1/2)}, Theorem
\ref{theorem:r^-alpha} and Theorem \ref{theorem:comparable_r^beta}.

\item NDCG with too fast decay: $D(r) =
2^{-r}$. See Figure \ref{Figure_2^(-r)} and Theorem
\ref{prop_unbound}.

\item NDCG@k: $k=n/5$; $D(r) =
\frac{1}{\log(1+r)}$. See Figure \ref{Figure_NDCG@k=n/5} and Theorem
\ref{theorem:kcn_NDCG}.
\end{itemize}

\begin{figure}
  \centering
      \includegraphics[width=0.8\textwidth]{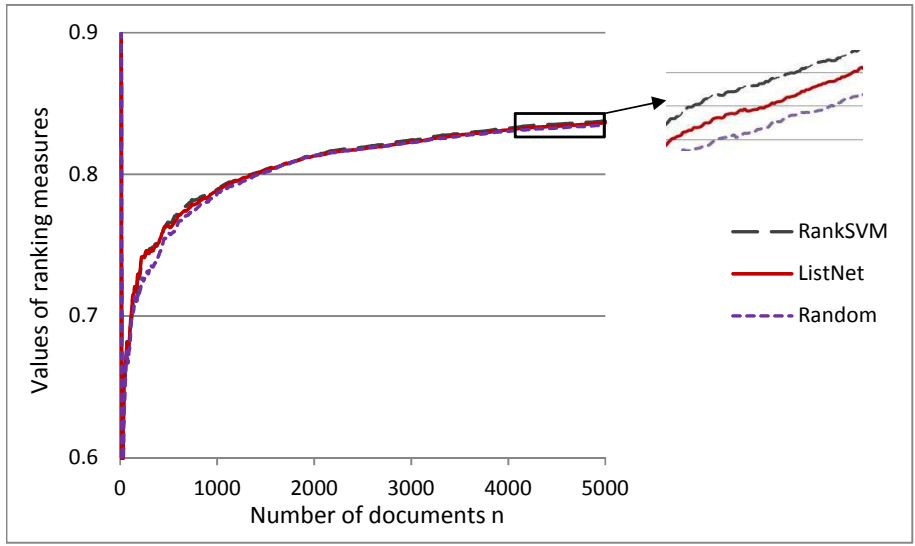} \caption{Standard NDCG: Converges to the same limit but distinguishes well the ranking functions.}\label{Figure_1/log(1+r)}
\end{figure}

\begin{figure}
  \centering
      \includegraphics[width=0.8\textwidth]{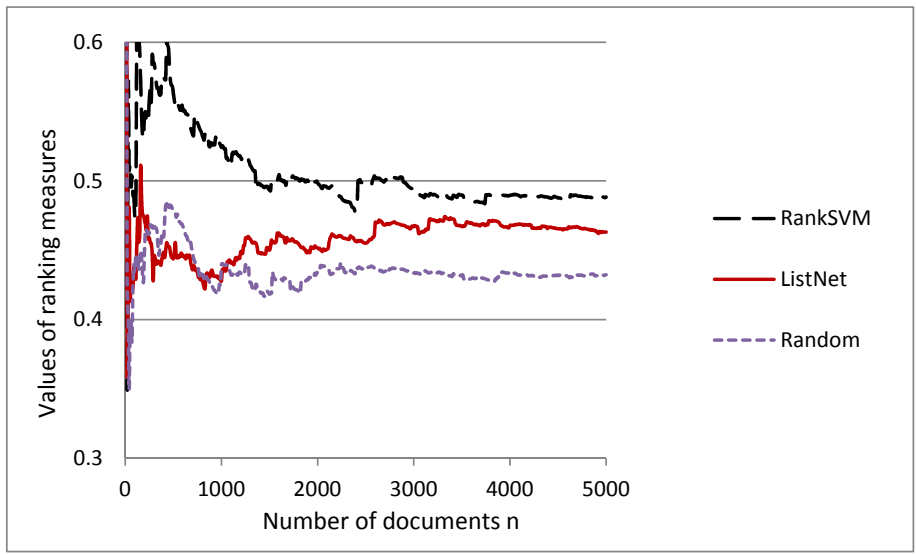} \caption{NDCG with feasible discount $D(r) = r^{-1/2}$: converges to different limits and distinguishes well the ranking functions.}\label{Figure_r^(-1/2)}
\end{figure}

\begin{figure}
  \centering
      \includegraphics[width=0.8\textwidth]{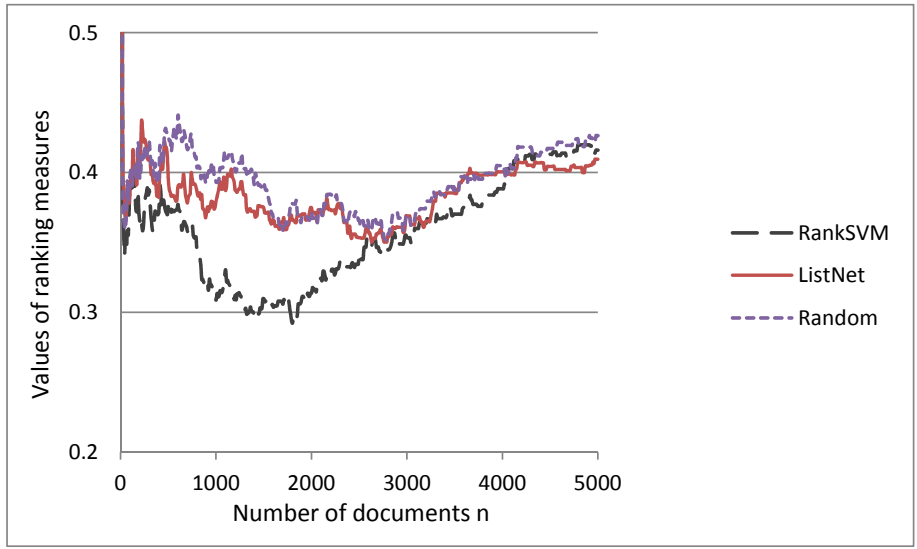} \caption{NDCG with too fast decay $D(r) = 2^{-r}$: does not converge; does not have good distinguishability power either.}\label{Figure_2^(-r)}
\end{figure}

\begin{figure}
  \centering
      \includegraphics[width=0.8\textwidth]{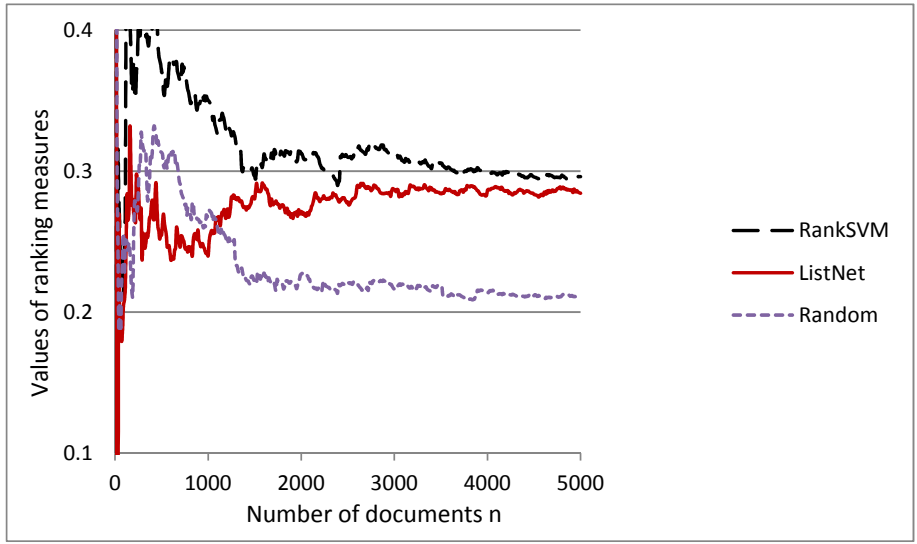} \caption{NDCG@$k$ ($D(r) = \frac{1}{\log(1+r)}$, $k=n/5$): distinguishes well the ranking functions.}\label{Figure_NDCG@k=n/5}
\end{figure}

Figure \ref{Figure_1/log(1+r)} agrees well with Theorem
\ref{theorem:converge_2_1} and Theorem \ref{theorem:comparable}. On
the one hand, the NDCG measures of the three ranking functions are
very close and seem to converge to the same limit. On the other
hand, one can see from the enlarged part (we enlarge and stretch the
vertical axis) in the figure that in fact the measures distinguish
well the ranking functions.

Figure \ref{Figure_r^(-1/2)} demonstrates the result of NDCG with
the feasible discount $r^{-1/2}$. In this experiment, it seems that
the ranking measures of the three ranking functions converge to
different limits and therefore distinguish them very well. In our
experimental setting, it is not easy to find two ranking functions
whose NDCG measures converge to the same limit. If one can find such
a pair of ranking functions, it would be interesting to see how well
the measure distinguish them.

Figure \ref{Figure_2^(-r)} shows the behavior of NDCG with a smooth
discount which decays too fast. The measure cannot distinguish the
three ranking functions very well. Even the randomly chosen function
has an NDCG score similar to those of RankSVM and ListNet. From the
figure, it is also likely that the measures do not converge.

Figure \ref{Figure_NDCG@k=n/5} depicts the result of NDCG@k, where
$k$ is a constant proportion of $n$. Before describing the result,
let us first comparing Theorem \ref{theorem:kcn_NDCG} and Theorem
\ref{theorem:converge_2_1}. Note that although the discount are both
the logarithmic one, NDCG@k for $k=cn$ can converge to different
limits for different ranking functions, while standard NDCG always
converges to $1$. Figure \ref{Figure_NDCG@k=n/5} clearly demonstrate
this result.

\section*{Acknowledgement}
Liwei Wang would like to thank Kai Fan and Ziteng Wang for long and helpful discussions.

\bibliographystyle{abbrv}

\bibliography{NDCG}

\appendix

\section{Proof of Theorem \ref{theorem:comparable}: the Key Lemmas}\label{Section:key_lemmas}

In this section we will prove Theorem \ref{theorem:comparable}. In fact we will prove a more complete result. The proof relies on a few key lemmas. In this section we only state these lemmas. Their proofs will be given in Appendix \ref{Section:proof_key_lemmas}. First we give a weaker definition of distinguishability, which guarantees that the ranking measure $\mc{M}$
gives consistent comparison results for two ranking functions only
in expectation.

\begin{definition}\label{definition:comparable_in_expectation}
Fix an underlying distribution $P_{XY}$. A pair of ranking functions
$f_0$, $f_1$ is said to be \emph{distinguishable in
expectation} by a ranking measure $\mc{M}$, if there exist $b \in
\{0,1\}$ and a positive integer $N$ such that for all $n \ge N$,
\begin{equation*}\small{
\mathbb{E}\big[\mc{M}(f_b,S_n)\big] >
\mathbb{E}\big[\mc{M}(f_{1-b},S_n)\big],}
\end{equation*}
where the expectation is over the random draw of $S_n$.
\end{definition}

Now we state a theorem which contains Theorem \ref{theorem:comparable}.

\begin{theorem}\label{theorem:comparable_complete}
Assume that $p=\Pr(Y=1)>0$. For every pair of ranking functions
$f_0, f_1$, Let $\overline y^{f_i}(s) = \Pr[Y=1|\tilde{f}_i(X)=s]$,
$i=0,1$. Unless $\overline y^{f_0}(s) = \overline y^{f_1}(s)$ almost
surely on $[0,1]$, $f_0, f_1$ are distinguishable in
expectation by standard NDCG whose discount is $D(r) = \frac{1}{\log
(1+r)}$.

Moreover, if $\overline y^{f_0}(s)$ and $\overline y^{f_1}(s)$
are H\"{o}lder continuous in $s$, then unless $\overline y^{f_0}(s)
= \overline y^{f_1}(s)$ almost everywhere on $[0,1]$, $f_0$ and
$f_1$ are consistently distinguishable by standard
NDCG.
\end{theorem}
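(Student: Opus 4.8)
The plan is to reduce the NDCG comparison to a DCG comparison, express the expected DCG gap through order statistics, extract its leading term in powers of $1/\log n$, show that this leading coefficient cannot vanish unless $\Delta y \equiv 0$ via a moment argument, and finally add concentration to pass from expectation to consistent distinguishability. First I would observe that $\mathrm{IDCG}_D(S_n)$ depends only on the multiset of labels, not on the ranking function, so $\mathrm{NDCG}_D(f_0,S_n) > \mathrm{NDCG}_D(f_1,S_n)$ if and only if $\mathrm{DCG}_D(f_0,S_n) > \mathrm{DCG}_D(f_1,S_n)$. Hence it suffices to study $\Delta\mathrm{DCG} := \mathrm{DCG}_D(f_0,S_n)-\mathrm{DCG}_D(f_1,S_n)$. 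Letting $U_{(1)}\ge\cdots\ge U_{(n)}$ be the descending order statistics of the i.i.d.\ uniforms $\tilde f_i(X_1),\dots,\tilde f_i(X_n)$, so that $U_{(r)}\sim\mathrm{Beta}(n+1-r,r)$, and using $\Pr[Y=1\mid\tilde f_i(X)=s]=\overline y^{f_i}(s)$, I obtain
\begin{equation*}
\mathbb{E}[\Delta\mathrm{DCG}] = \sum_{r=1}^n D(r)\,\mathbb{E}\big[\Delta y(U_{(r)})\big] = \int_0^1 \Delta y(s)\,W_n(s)\,\ud s ,
\end{equation*}
where $\Delta y=\overline y^{f_0}-\overline y^{f_1}$ and $W_n(s)=\sum_{r=1}^n D(r)\,\beta_{n+1-r,r}(s)$ is a smooth kernel. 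The whole problem becomes determining the sign of the linear functional $\Delta y\mapsto\int_0^1\Delta y\,W_n$.

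Next I would expand this functional. Because $\mathrm{Beta}(n+1-r,r)$ concentrates near $1-r/n$, the kernel satisfies $W_n(s)\approx n\,D((1-s)n)=n/\log\!\big(1+(1-s)n\big)$ away from $s=1$; expanding $\log(1+(1-s)n)=\log n+\log(1-s)+o(1)$ geometrically gives
\begin{equation*}
\mathbb{E}[\Delta\mathrm{DCG}] \;\sim\; n\sum_{k\ge 0}\frac{(-1)^k}{(\log n)^{k+1}}\,a_k , \qquad a_k=\int_0^1 \Delta y(s)\,\big(\log(1-s)\big)^k\,\ud s ,
\end{equation*}
where $a_0=\int_0^1\Delta y\,\ud s=p-p=0$ automatically and each $a_k$ is finite since $\int_0^1|\log(1-s)|^k\,\ud s=k!$. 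If $k_0\ge1$ is the smallest index with $a_{k_0}\neq0$, then $\mathbb{E}[\Delta\mathrm{DCG}]$ has constant sign $\mathrm{sign}\big((-1)^{k_0}a_{k_0}\big)$ for all large $n$, which is exactly distinguishability in expectation; the contribution of the top $O(1)$ ranks (the region $s\to1$ where the approximation breaks) is only $O(1)$, negligible against $n/(\log n)^{k_0+1}$.

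The conceptual core is the claim that such a $k_0$ exists whenever $\Delta y\neq0$ a.e.\ --- i.e.\ that the family $\big\{(\log(1-s))^k\big\}_{k\ge0}$ is rich enough that $\Delta y$ cannot be orthogonal to all of them. Substituting $s=1-e^{-t}$ turns $a_k$ into $(-1)^k\int_0^\infty\phi(t)\,t^k e^{-t}\,\ud t$ with $\phi(t)=\Delta y(1-e^{-t})$, so $a_k=0$ for every $k$ means all moments of the signed measure $g(t)\,\ud t$ vanish, where $g(t)=\phi(t)e^{-t}$. Since $|\Delta y|\le1$ forces $|g(t)|\le e^{-t}$, this measure has finite exponential moments, so its Laplace transform is analytic near the origin with all derivatives (the moments) equal to $0$; hence $g\equiv0$ and $\Delta y\equiv0$ a.e. This determinate moment problem is precisely the source of the hypothesis ``unless $\overline y^{f_0}=\overline y^{f_1}$ a.e.'', and it needs no continuity, which is why the in-expectation conclusion holds in full generality.

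To upgrade to consistent distinguishability under Hölder continuity I would combine the expected gap with concentration. The map $S_n\mapsto\Delta\mathrm{DCG}$ has bounded differences: changing one pair $(x_i,y_i)$ moves that item's rank and slides the others by one position, altering each DCG by $O\!\big(\sum_r(D(r)-D(r+1))\big)=O(D(1))=O(1)$, so McDiarmid gives $\Pr[\,|\Delta\mathrm{DCG}-\mathbb{E}\Delta\mathrm{DCG}|>t\,]\le2\exp(-\Omega(t^2/n))$. As the gap $|\mathbb{E}\Delta\mathrm{DCG}|=\Theta(n/(\log n)^{k_0+1})$ dominates any $t$ of order $\sqrt n\cdot\mathrm{polylog}(n)$, the correct ordering fails at a given $n$ with probability $\exp\!\big(-\Omega(n/(\log n)^{2k_0+2})\big)$; summing this over all $n\ge N$ (the sum is dominated by its first term) and invoking the reduction of the first paragraph yields, with probability $1-\mathrm{neg}(N)$, $\mathrm{NDCG}_D(f_b,S_n)>\mathrm{NDCG}_D(f_{1-b},S_n)$ for all $n\ge N$ simultaneously. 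Hölder continuity enters here precisely to make the kernel expansion quantitative: it bounds the error from replacing $\mathbb{E}[\Delta y(U_{(r)})]$ by $\Delta y(1-r/n)$ over the $\mathrm{Beta}$ spread of width $\sim n^{-1/2}$, so that the leading term $n(-1)^{k_0}a_{k_0}/(\log n)^{k_0+1}$ provably dominates the remainder for every large $n$. I expect the main obstacle to be exactly this step --- controlling the slowly convergent $1/\log n$ expansion and its $\mathrm{Beta}$-smoothing remainder uniformly in $n$, together with the endpoint region $s\to1$ --- with the moment-determinacy lemma being the other indispensable ingredient, since it is what forbids a nonzero $\Delta y$ invisible to every coefficient $a_k$.
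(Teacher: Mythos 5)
Your core argument --- expanding the expected gap as an asymptotic series $\sum_k (-1)^k a_k/(\log n)^{k+1}$ in the moments $a_k=\int_0^1\Delta y(s)\log^k(1-s)\,\ud s$, and showing via the substitution $s=1-e^{-t}$ that all $a_k$ vanish only if $\Delta y=0$ a.e.\ --- is exactly the paper's route (its Lemma \ref{lemma:lemma2_for_comparable} on pseudo-expectations plus Claim \ref{prop_sep}, that $\mathrm{span}\{\log^k x\}$ is dense in $L^2[0,1]$, proved by Laguerre completeness, which is the same moment-determinacy fact you invoke). Where you genuinely diverge is the concentration step: the paper first shows the order statistics $\tilde f(x^f_{(r)})$ concentrate at $1-r/n$, uses H\"{o}lder continuity to replace $\overline y^f(\tilde x_{(r)})$ by $\overline y^f(1-r/n)$, and then applies Hoeffding conditionally (Lemma \ref{lemma:lemma3_of_comparable}); you instead apply McDiarmid to $\Delta\mathrm{DCG}$ directly with $O(1)$ bounded differences. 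Your route is cleaner and, if carried out, would in fact not need the H\"{o}lder hypothesis for the ``moreover'' part at all, since it concentrates $\Delta\mathrm{DCG}$ around its true expectation rather than around a pseudo-expectation; the expansion of $\mathbb E[\Delta\mathrm{DCG}]$ itself requires no smoothness of $\Delta y$, as the paper's Lemma \ref{lemma:lemma1_for_comparable} shows with an error of $\tilde O(n^{2/3})$. Your claim that the kernel-replacement error is only ``$O(1)$ from the top ranks'' is too optimistic --- the Beta spread contributes across all ranks --- but any $o\!\left(n/\log^{k_0+1}n\right)$ error suffices, so this does not break the argument; it is, as you anticipate, where the real work lies.

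The one genuine gap is in your opening reduction as applied to the \emph{in-expectation} statement. The pointwise equivalence $\mathrm{NDCG}_D(f_0,S_n)>\mathrm{NDCG}_D(f_1,S_n)\iff\mathrm{DCG}_D(f_0,S_n)>\mathrm{DCG}_D(f_1,S_n)$ is correct and suffices for the consistent-distinguishability part, but Definition \ref{definition:comparable_in_expectation} compares $\mathbb E[\mathrm{NDCG}]$, i.e.\ the expectation of a ratio whose random denominator $\mathrm{IDCG}(S_n)$ is correlated with the numerator; the sign of $\mathbb E[\Delta\mathrm{DCG}]$ does not by itself determine the sign of $\mathbb E\!\left[\Delta\mathrm{DCG}/\mathrm{IDCG}\right]$. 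You need the additional step the paper supplies in Claim \ref{prop_dcg_NDCG}: by a Chernoff bound on the number of positive labels, $\mathrm{IDCG}(S_n)=F(np)\left(1+O(n^{-1/3})\right)$ with probability $1-2e^{-2n^{1/3}}$, whence $\mathbb E[\Delta\mathrm{NDCG}]=\mathbb E[\Delta\mathrm{DCG}]/F(np)+O(n^{-1/3})$, followed by a check that the leading term $\Theta(1/\log^{k_0}n)$ dominates the $O(n^{-1/3})$ error. This is routine but cannot be omitted.
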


To prove Theorem \ref{theorem:comparable_complete}, we need some notations.

\begin{definition}\label{definition:pseudo_expectation}
Suppose $\mc{Y} = \{0,1\}$. Let $\overline y^{f}(s) =
\Pr[Y=1|\tilde{f}(X)=s]$. Also let $F(t) = \int_1^t D(s) ds$. We
define the \emph{unnormalized pseudo-expectation} $\tilde{N}_D^f(n)$
as
\begin{equation*}{
\tilde{N}_D^f(n) = \int_1^n{\overline{y}^f(1-s/n)D(s)\ud s} =
n\int_{\frac{1}{n}}^1{\overline{y}^f(1-s)D(ns)\ud s}.}
\end{equation*}
Assume that $p=\Pr(Y=1)>0$. Define the \emph{normalized
pseudo-expectation} $N^f_D(n)$ as
\begin{equation*}{
N^f_D(n)=\frac{\tilde{N}_D^f(n)}{F(np)}.}
\end{equation*}
\end{definition}

The proof of the first part of Theorem \ref{theorem:comparable_complete}
(i.e., distinguishable in expectation) relies on the following two
key lemmas, whose proofs will be given in Appendix
\ref{Section:proof_key_lemmas}.

\begin{lemma}\label{lemma:lemma1_for_comparable}
Let $D(r) = \frac{1}{\log (1+r)}$. Assume that $p=\Pr(Y=1)>0$. Then
for every ranking function $f$,
\begin{equation*}{
\left|\mathbb{E}[\mathrm{NDCG}_D(f,S_n)] - N_D^f(n) \right| \le
\tilde{O}\left( n^{-1/3} \right).}
\end{equation*}
\end{lemma}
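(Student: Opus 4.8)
The plan is to write $\mathrm{NDCG}_D(f,S_n) = \mathrm{DCG}_D(f,S_n)/\mathrm{IDCG}_D(S_n)$, control the numerator and denominator separately, and then reassemble the ratio. Set $V_i = \tilde f(x_i)$; by the Lemma in Section~\ref{section:Preliminaries} the canonical version makes $V_1,\dots,V_n$ i.i.d. uniform on $[0,1]$, and conditionally on the $V_i$ the labels are independent Bernoulli with $\Pr[y_i=1\mid V_i]=\overline y^f(V_i)$. Since the item placed at rank $r$ carries the $(n+1-r)$-th smallest value $V_{(n+1-r)}$, this yields the exact identity
\[
  \mathbb E[\mathrm{DCG}_D(f,S_n)] = \sum_{r=1}^n D(r)\,\mathbb E\!\left[\overline y^f\!\left(V_{(n+1-r)}\right)\right] = \int_0^1 \overline y^f(u)\,K_1(u)\,\ud u,
\]
where $K_1(u)=\sum_{r=1}^n D(r)\,\beta_{n+1-r}(u)$ and $\beta_{n+1-r}$ is the $\mathrm{Beta}(n+1-r,r)$ density. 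Crucially this holds for arbitrary measurable $\overline y^f\in[0,1]$, so no continuity of $\overline y^f$ is needed.

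The definition of $\tilde N_D^f(n)$ corresponds to integrating $\overline y^f$ against the idealized kernel $K_2(u)=n\,D(n(1-u))\,\mathbf 1[u<1-1/n]$ obtained from the substitution $s=n(1-u)$, so that $\bigl|\mathbb E[\mathrm{DCG}_D(f,S_n)]-\tilde N_D^f(n)\bigr|\le \|K_1-K_2\|_{L^1}$ because $|\overline y^f|\le 1$. Writing $v=1-u$ and using the identity $\sum_{r=1}^n \beta_{n+1-r}(1-v)=n$, one finds $K_1(1-v)=n\,\mathbb E[D(R+1)]$ with $R\sim\mathrm{Bin}(n-1,v)$, whereas $K_2(1-v)=n\,D(nv)$ on $v>1/n$. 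In the bulk $v\gg 1/n$ the binomial concentrates near its mean $\approx nv$, and a two-term Taylor estimate of $D(r)=1/\log(1+r)$ (using $D'\asymp -1/(r\log^2 r)$, $D''\asymp 1/(r^2\log^2 r)$) gives $|K_1-K_2|=O(1/(v\log^2(nv)))$, whose integral is $O(1)$; the top-rank region $v=O(1/n)$, where the expansion fails, has measure $O(1/n)$ and contributes $O(1)$ via the crude bound $D\le D(1)$. Thus the numerator discrepancy is only $O(1)$ in absolute value, hence $\tilde O(1/n)$ relative to $F(np)\asymp n/\log n$.

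For the denominator, $\mathrm{IDCG}_D(S_n)=\sum_{r=1}^m D(r)$ depends only on the number of relevant items $m=\sum_i y_i\sim\mathrm{Bin}(n,p)$, and differs from $F(m)$ by at most $D(1)=O(1)$. On the event $A=\{|m-np|\le n^{2/3}\}$ monotonicity of $D$ gives $|F(m)-F(np)|\le n^{2/3}D(np-n^{2/3})=O(n^{2/3}/\log n)$, so $\mathrm{IDCG}_D(S_n)=F(np)\bigl(1+O(n^{-1/3})\bigr)$; by Hoeffding $\Pr[A^c]\le 2e^{-2n^{1/3}}$, which is negligible. This $n^{-1/3}$ relative fluctuation of the denominator is the rate-determining term.

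Finally I reassemble the ratio. On $A$, writing $\mathrm{NDCG}=\frac{\mathrm{DCG}}{F(np)}\bigl(1+O(n^{-1/3})\bigr)$ and using $0\le\mathrm{DCG}\le\mathrm{IDCG}\le 2F(np)$ bounds $\bigl|\mathrm{NDCG}\,\mathbf 1_A-\frac{\mathrm{DCG}}{F(np)}\mathbf 1_A\bigr|=O(n^{-1/3})$; on $A^c$ one has $\mathrm{NDCG}\le 1$ and $\frac{\mathrm{DCG}}{F(np)}\le F(n)/F(np)=O(1)$ while $\Pr[A^c]$ is negligible. Taking expectations gives $\mathbb E[\mathrm{NDCG}]=\mathbb E[\mathrm{DCG}]/F(np)+\tilde O(n^{-1/3})$, and combining with the numerator estimate yields $\bigl|\mathbb E[\mathrm{NDCG}]-N_D^f(n)\bigr|\le\tilde O(n^{-1/3})$. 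I expect the main obstacle to be the ratio step itself: $\mathrm{DCG}$ and $\mathrm{IDCG}$ are correlated (both are functions of the labels), so one cannot simply take the ratio of expectations; this is resolved by the good-event decomposition above, exploiting that $\mathrm{NDCG}\in[0,1]$ and $\mathrm{DCG}\le\mathrm{IDCG}$ hold deterministically. The secondary technical point is bounding $\|K_1-K_2\|_{L^1}$ near the top ranks without any smoothness of $\overline y^f$, which is what lets the lemma hold for \emph{every} ranking function.
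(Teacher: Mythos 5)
Your proposal is correct and follows the same overall architecture as the paper's proof: split $\mathrm{NDCG}=\mathrm{DCG}/\mathrm{IDCG}$, compare $\mathbb{E}[\mathrm{DCG}]$ to the pseudo-expectation $\tilde N^f_D(n)$ via concentration of the order statistics of $\tilde f(x_i)$, replace $\mathrm{IDCG}$ by $F(np)$ on a Chernoff good event, and reassemble the ratio using $\mathrm{NDCG}\in[0,1]$ (the paper's Claims 1--4 and eq.~(\ref{approx_eq8}) do exactly your three steps). The one genuinely different ingredient is your treatment of the numerator: the paper only uses that each order statistic lies in an interval $E_{n,r}$ of width $n^{-1/3}$ with high probability and bounds $|D(r)-D(ns)|$ over that interval, which yields the cruder $|\mathbb{E}[\mathrm{DCG}]-\tilde N^f_D(n)|\le\tilde O(n^{2/3})$; you instead use the exact Beta-density representation of the order statistics and compare the two kernels in $L^1$, getting an $O(1)$ discrepancy. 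Your version is sharper and makes transparent that no regularity of $\overline y^f$ is needed, but it buys nothing in the final bound, since both approaches are dominated by the $n^{-1/3}$ fluctuation of the denominator. The only place where your sketch needs a little more care than you give it is the step $|K_1(1-v)-K_2(1-v)|=O(1/(v\log^2(nv)))$: a two-term Taylor expansion of $D(R+1)$ around $nv$ is not valid uniformly because $D''$ blows up near $r=0$, so you must split off the lower tail of the binomial (where $R+1$ is small) and kill it with a Chernoff bound together with $D\le D(1)$; this is routine and does not affect the conclusion.
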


\begin{lemma}\label{lemma:lemma2_for_comparable}
Let $D(r) = \frac{1}{\log (1+r)}$. Assume that $p=\Pr(Y=1)>0$. let
$\overline{y}^{f_i}(s) = \Pr[Y=1|\tilde{f}_i(X)=s]$, $i=0,1$. Unless
$\overline{y}^{f_0}(\cdot)=\overline{y}^{f_1}(\cdot)$ almost
everywhere on $[0,1]$, there must exist a nonnegative integer $K$
and a constant $a \neq 0$, such that
\begin{equation*}{
\Bigg|N_D^{f_0}(n) - N_D^{f_1}(n) - \frac{a}{\log^K n}\Bigg| \leq
O\left(\frac{1}{\log^{K+1} n}\right).}
\end{equation*}
\end{lemma}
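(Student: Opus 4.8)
The plan is to reduce the whole statement to an asymptotic expansion, in powers of $1/\log n$, of a single ratio. Writing $\Delta y(s) = \overline y^{f_0}(s) - \overline y^{f_1}(s)$ and noting that $F(np)$ depends only on $p=\Pr(Y=1)$ and is therefore common to $f_0$ and $f_1$, linearity of $\tilde N_D^f$ in $\overline y^f$ (Definition \ref{definition:pseudo_expectation}) gives
\[
N_D^{f_0}(n) - N_D^{f_1}(n) = \frac{1}{F(np)}\int_1^n \Delta y\!\left(1 - \tfrac sn\right) D(s)\,\ud s =: \frac{\tilde N_\Delta(n)}{F(np)}.
\]
The objects that will govern the expansion are the moments
\[
m_j \;=\; \int_0^1 \Delta y(1-u)\,(-\log u)^j\,\ud u \;=\; \int_0^\infty \Delta y(1-e^{-v})\,v^j e^{-v}\,\ud v, \qquad j \ge 0,
\]
the second form obtained by the substitution $v = -\log u$.

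First I would establish the one nontrivial input: if $\Delta y \neq 0$ on a set of positive measure, then $m_j \neq 0$ for some $j$; let $K$ be the least such index. This is a moment--determinacy statement. Put $h(v) = \Delta y(1-e^{-v})$, which is bounded because $\overline y^{f_i}\in[0,1]$, so $g(v) := h(v)e^{-v/2}$ lies in $L^2([0,\infty))$. If every $m_j$ vanished, then $g$ would be orthogonal to $v^j e^{-v/2}$ for all $j$; since the Laguerre functions $\{L_j(v)e^{-v/2}\}_{j\ge0}$ are a complete orthonormal system of $L^2([0,\infty))$ and span the same subspace as $\{v^j e^{-v/2}\}_{j\ge0}$, this forces $g\equiv0$, hence $h=0$ a.e., and after the change of variables $s=1-e^{-v}$, $\Delta y = 0$ a.e. on $[0,1]$ --- contradicting the hypothesis. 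I expect this step to be the main obstacle, as it is where one must invoke completeness rather than a direct computation.

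Next I would expand the numerator $\tilde N_\Delta(n)$. Fix $\theta\in(0,1)$ and split $\int_1^n = \int_1^{n^\theta}+\int_{n^\theta}^n$. On $[1,n^\theta]$ the argument $1-s/n$ stays within $n^{\theta-1}$ of $1$, so boundedness of $\Delta y$ together with $F(n^\theta)=O(n^\theta/\log n)$ bounds this block by $O(n^\theta/\log n)$, which is polynomially small relative to $F(np)$ and hence negligible against any $1/\log^K n$. On the main region, substituting $u=s/n\in[n^{\theta-1},1]$ gives $\log(1+nu)=\log n + \log u + O(n^{-\theta})$ with $|\log u|/\log n \le 1-\theta<1$, so the geometric expansion
\[
\frac{1}{\log(1+nu)} = \frac{1}{\log n}\sum_{j=0}^{K}\left(\frac{-\log u}{\log n}\right)^{j} + R_K(u,n)
\]
converges; the exact geometric remainder satisfies $|R_K|\le |\log u|^{K+1}/\bigl(\theta\,(\log n)^{K+2}\bigr)$, and since $\int_0^1(-\log u)^{K+1}\ud u=(K+1)!$ its total contribution is $O\!\left(n/(\log n)^{K+2}\right)$. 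Extending each surviving integral from $[n^{\theta-1},1]$ back to $[0,1]$ costs only a polynomially small tail, yielding
\[
\tilde N_\Delta(n) = \frac{n}{\log n}\sum_{j=0}^{K}\frac{m_j}{(\log n)^{j}} + O\!\left(\frac{n}{(\log n)^{K+2}}\right) + O\!\left(n^{\theta}\,\mathrm{poly}(\log n)\right).
\]

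Finally I would treat the denominator and combine. The logarithmic--integral asymptotics give $F(np)=\frac{np}{\log n}\bigl(1+O(1/\log n)\bigr)$, using $\log(np)=\log n+O(1)$. Substituting $m_0=\dots=m_{K-1}=0$ and $m_K\neq0$, the surviving numerator term is $\frac{n\,m_K}{(\log n)^{K+1}}\bigl(1+O(1/\log n)\bigr)$, so
\[
N_D^{f_0}(n)-N_D^{f_1}(n) = \frac{m_K/p}{(\log n)^{K}} + O\!\left(\frac{1}{(\log n)^{K+1}}\right),
\]
which is exactly the claimed estimate with $a=m_K/p\neq0$. Besides the completeness step above, the only other point requiring care is the uniform control of $R_K$ on the main region, which is what ensures the normalized error is genuinely of order $1/\log^{K+1}n$.
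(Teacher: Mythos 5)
Your proposal is correct and follows essentially the same route as the paper: the same reduction to $\tilde N_\Delta(n)/F(np)$, the same expansion of $1/\log(1+ns)$ into powers of $1/\log n$ with log-moment coefficients $\int_0^1 \Delta y(1-u)\log^j u\,\ud u$, and the same Laguerre-completeness argument (the paper's Claim \ref{prop_sep}) to produce the least nonvanishing moment $m_K$. The only deviation is technical and harmless: you control the remainder by cutting at $n^{\theta-1}$ and summing an exact geometric series, where the paper cuts at $2/n$ and uses a Lagrange remainder bounded via Claim \ref{prop_int2}; your version is, if anything, slightly more explicit about the leading constant $a=m_K/p$.
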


Lemma \ref{lemma:lemma1_for_comparable} says that the difference
between the expectation of the NDCG measure of a ranking function
and its pseudo-expectation is relatively small; while Lemma
\ref{lemma:lemma2_for_comparable} says that the difference between
the pseudo-expectations of two essentially different ranking
functions are much larger.

To prove the ``moreover'' part of Theorem
\ref{theorem:comparable_complete} (i.e., consistently distinguishable), we need the following key lemma, whose proof will be
given in Section \ref{Section:proof_key_lemmas}. The lemma states
that with high probability the NDCG measure of a ranking function is
very close to its pseudo-expectation.

\begin{lemma}\label{lemma:lemma3_of_comparable}
Let $D(r)=\frac{1}{\log (1+r)}$. Assume that $p=\Pr(Y=1)>0$. Suppose
the ranking function $f$ satisfies that
$\overline{y}^f(s)=\Pr(Y=1|\tilde{f}(X)=s)$ is H\"{o}lder continuous
with constants $\alpha>0$ and $C>0$. That is,
$|\overline{y}^f(s)-\overline{y}^f(s')| \le C|s-s'|^{\alpha}$ for
all $s,s' \in [0,1]$. Then
\begin{equation*}{
\Pr\bigg[\Big|\mathrm{NDCG}_D(f,S_n)-N^f_D(n)\Big| \ge
5Cp^{-1}n^{-\min(\alpha/3,1)} \bigg] \le
O\left(e^{-n^{1/4}}\right).}
\end{equation*}
\end{lemma}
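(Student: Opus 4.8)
The plan is to show that $\mathrm{NDCG}_D(f,S_n)$ concentrates around the deterministic quantity $N_D^f(n)$ by controlling three sources of error in turn: the fluctuation of the labels given the induced ranking, the fluctuation of the order statistics of the scores, and a final deterministic (Riemann-sum and normalizer) approximation. Throughout I write $u_i = \tilde f(x_i)$; by the property of the canonical version of $f$, the $u_i$ are i.i.d. uniform on $[0,1]$, and the rank of item $i$ is determined by where $u_i$ falls among $u_1,\dots,u_n$. The key structural observation is that, conditioned on $(u_1,\dots,u_n)$, the labels $y_{(1)},\dots,y_{(n)}$ attached to the ranked items are independent Bernoulli variables with means $\overline y^f(u_{(r)})$, where $u_{(r)}$ is the $r$-th largest score; hence $\mathrm{DCG}_D(f,S_n)=\sum_r y_{(r)}D(r)$ is, conditionally, a weighted sum of independent bounded variables, and $\mathrm{IDCG}_D(S_n)=\sum_{r=1}^m D(r)$ with $m=\#\{i:y_i=1\}$.

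\emph{Step 1 (labels given ranks).} First I would condition on $(u_1,\dots,u_n)$ and apply a bounded-difference (McDiarmid) inequality to the labels. Since $\sum_r D(r)^2\le \sum_r D(r)\sim F(n)\sim n/\log n$ and $m$ is a Binomial$(n,p)$ variable, a Chernoff bound gives $|m-np|\le n^{5/8}$ except with probability $O(e^{-n^{1/4}})$, which keeps $\mathrm{IDCG}$ of order $F(np)\sim np/\log n$ and bounded away from $0$; on this event the ratio $\mathrm{DCG}/\mathrm{IDCG}$ has bounded differences of order $\log n/(np)$ in each $y_i$. McDiarmid then concentrates $\mathrm{NDCG}$ around its conditional mean to within $n^{-1/3}$ except with probability $O(e^{-n^{1/4}})$, so that it suffices to compare the conditional mean $\big(\sum_r \overline y^f(u_{(r)})D(r)\big)/\mathrm{IDCG}$ with $N_D^f(n)$.

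\emph{Steps 2 and 3 (order statistics and deterministic approximation).} Next I would apply the Dvoretzky-Kiefer-Wolfowitz inequality to the empirical CDF of the uniform $u_i$: with $\epsilon=n^{-1/3}$ one gets $\sup_r |u_{(r)}-(1-r/n)|\le n^{-1/3}$ except with probability $2e^{-2n^{1/3}}$. Holder continuity of $\overline y^f$ then yields $|\overline y^f(u_{(r)})-\overline y^f(1-r/n)|\le C n^{-\alpha/3}$ uniformly in $r$; summing against $D(r)$ and dividing by $F(np)\sim np/\log n$ produces the $C p^{-1} n^{-\alpha/3}$ term, which is where the exponent $\alpha/3$ and the factor $p^{-1}$ originate. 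It remains to pass from $\sum_r \overline y^f(1-r/n)D(r)$ to the integral $\tilde N_D^f(n)=\int_1^n \overline y^f(1-s/n)D(s)\,ds$ and from $\sum_{r\le m}D(r)$ to $F(np)$; the Riemann-sum error, bounded using the monotonicity of $D$ and the Holder modulus of $\overline y^f$ (with separate care at $r=1$, where $D$ is largest), contributes an $O(n^{-1})$ term, which is the source of the cap at $1$ in $n^{-\min(\alpha/3,1)}$. A union bound over the three failure events, each of probability at most $O(e^{-n^{1/4}})$, gives the claim, with the combined constants absorbed into $5C$.

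The main obstacle I anticipate is the ratio structure: both $\mathrm{DCG}$ and $\mathrm{IDCG}$ are random and strongly correlated (indeed for a perfect ranker they coincide and $\mathrm{NDCG}=1$ identically), so a naive separate treatment of numerator and denominator is lossy. Controlling the ratio requires keeping $\mathrm{IDCG}$ bounded below on a high-probability event so that the bounded-difference argument and the first-order expansion of $1/\mathrm{IDCG}$ are valid, and it requires uniform control of the order statistics at the \emph{top} ranks, where the discount $D(r)$ is largest and the empirical-CDF deviation is relatively most damaging. Finally, matching the target failure probability $O(e^{-n^{1/4}})$ across all three steps is what dictates the specific choices $\epsilon=n^{-1/3}$ in the DKW bound and $|m-np|\le n^{5/8}$ in the Chernoff bound.
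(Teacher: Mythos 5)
Your proposal is correct and follows essentially the same route as the paper's proof: a uniform $n^{-1/3}$ bound on the order statistics of the uniformized scores (the paper uses per-rank Chernoff bounds plus a union bound where you invoke DKW), H\"{o}lder continuity to produce the $Cn^{-\alpha/3}$ term, a Riemann-sum comparison to $\tilde N_D^f(n)$, concentration of the labels conditioned on the scores, and a Chernoff bound on the number of positives to replace $\mathrm{IDCG}$ by $F(np)$. The one point where the paper's organization is cleaner is the ratio: rather than applying McDiarmid to $\mathrm{DCG}/\mathrm{IDCG}$ ``on the good event'' (not literally licensed, since bounded differences must hold pointwise), the paper applies Hoeffding to the numerator $\mathrm{DCG}$ alone, divides by the deterministic $F(np)$, and controls $|\mathrm{NDCG}_D - \mathrm{DCG}_D/F(np)|$ in a separate claim --- a routine repair of your step, not a gap.
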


\begin{proof}\textbf{of Theorem \ref{theorem:comparable_complete}}
That $f_0$ and $f_1$ are strictly distinguishable in expectation by
standard NDCG is straightforward from Lemma
\ref{lemma:lemma1_for_comparable} and Lemma
\ref{lemma:lemma2_for_comparable}. That $f_0$ and $f_1$ are strictly
distinguishable with high probability follows immediately from Lemma
\ref{lemma:lemma3_of_comparable}, Lemma
\ref{lemma:lemma2_for_comparable} and the observation that $\sum_{n
\ge N} e^{-n^{1/4}} \le O\left(N^{3/4}e^{-N^{1/4}}\right) \le
O\left(e^{-N^{1/5}}\right).$
\end{proof}

\section{Proofs of the Key Lemmas in Appendix \ref{Section:key_lemmas}}\label{Section:proof_key_lemmas}

In this section, we give proofs of the three key lemmas in Appendix \ref{Section:key_lemmas} (i.e., Lemma
\ref{lemma:lemma1_for_comparable}, Lemma
\ref{lemma:lemma2_for_comparable} and Lemma
\ref{lemma:lemma3_of_comparable}) used to prove Theorem
\ref{theorem:comparable} and Theorem \ref{theorem:comparable_complete}.

To prove the key lemmas, we need a few technical claims, whose
proofs will be given in Appendix \ref{section:tech_lemmas}. We first
give four claims that will be used in the proof of Lemma
\ref{lemma:lemma1_for_comparable}.

\begin{claim}\label{claim_uniform}
For any $s \in [0,1]$,
\begin{equation}\label{uniform}{
\sum_{r=1}^n \mathbb{P} \big[ \tilde{f}(x^f_{(r)}) = s \big] = n.}
\end{equation}
\end{claim}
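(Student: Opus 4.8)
The plan is to reduce the claim to a statement about order statistics of i.i.d.\ uniform random variables. First I would observe that, since $\tilde f$ preserves the order of $f$ (by the earlier lemma), ranking the objects $x_1,\dots,x_n$ by $f$ is the same as ranking them by $\tilde f$; hence $\tilde f(x^f_{(r)})$ is exactly the $r$-th largest among the values $U_i := \tilde f(x_i)$. By the earlier lemma, $\tilde f(X)$ is uniform on $[0,1]$, so $U_1,\dots,U_n$ are i.i.d.\ uniform $[0,1]$ variables and $\tilde f(x^f_{(r)})$ is their $r$-th (descending) order statistic. The quantity $\mathbb P[\tilde f(x^f_{(r)})=s]$ should therefore be read as the probability density of this order statistic at $s$, and the claim asserts that the densities of the $n$ order statistics sum to the constant $n$ on $[0,1]$.

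The cleanest route is a linearity-of-expectation / multiset argument that avoids computing individual order-statistic densities. For any measurable set $A \subseteq [0,1]$, the multiset $\{\tilde f(x^f_{(1)}),\dots,\tilde f(x^f_{(n)})\}$ equals $\{U_1,\dots,U_n\}$, so
\begin{equation*}
\sum_{r=1}^n \mathbf 1\big[\tilde f(x^f_{(r)}) \in A\big] = \sum_{i=1}^n \mathbf 1[U_i \in A].
\end{equation*}
Taking expectations and using that each $U_i$ is uniform gives $\sum_{r=1}^n \Pr[\tilde f(x^f_{(r)}) \in A] = n\,\Pr[U_1 \in A] = n\,|A|$, where $|A|$ is the Lebesgue measure of $A$. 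Since this holds for every $A$, the measure $\sum_r \Pr[\tilde f(x^f_{(r)}) \in \cdot\,]$ equals $n$ times Lebesgue measure, i.e.\ its density is the constant $n$, which is exactly \eqref{uniform}.

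As a cross-check I would also verify the identity directly: the $r$-th descending order statistic of $n$ i.i.d.\ uniforms has the Beta density $\frac{n!}{(r-1)!(n-r)!}\, s^{n-r}(1-s)^{r-1}$, and summing over $r$ and re-indexing collapses, by the binomial theorem, to $n\,(s+(1-s))^{n-1} = n$. This recovers the measure-theoretic computation and confirms \eqref{uniform} pointwise.

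I do not expect a serious obstacle here, since the content is elementary once the reduction is made. The only points requiring care are (i) justifying the convention that $\mathbb P[\tilde f(x^f_{(r)})=s]$ denotes a density, as the statement is vacuous if read as a point mass because $\tilde f(X)$ is continuous; and (ii) the tie-free assumption implicit in speaking of a well-defined $r$-th order statistic, which holds almost surely because the $U_i$ are uniform and hence distinct with probability one.
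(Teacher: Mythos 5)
Your proposal is correct and takes essentially the same route as the paper: the paper's own proof is exactly the multiset/reordering argument ($\sum_{r}\mathbb{P}[\tilde f(x^f_{(r)})=s]=\sum_{i}\mathbb{P}[\tilde f(x_i)=s]=n$), which you have simply rendered more carefully by interpreting $\mathbb{P}[\cdot=s]$ as a density and integrating indicators over measurable sets. The Beta-density cross-check is a nice extra but not a different argument.
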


\begin{claim}\label{claim_dcg_expectation}
Recall that the DCG ranking measure with respect to discount $D(\cdot)$
was defined as
\begin{equation}{
\mathrm{DCG}_D(f,S_n)=\sum_{r=1}^{n} y^f_{(r)}D(r).}
\end{equation}
Let $D(r) = \frac{1}{\log(1+r)}$, and $\overline{y}^f(s) = \Pr[Y=1 |
\tilde{f}(X)=s]$. Then
\begin{equation}\label{dcg_expectation_eq1}{
\mathbb{E}\left[\mathrm{DCG}_D(f,S_n)\right] = \sum_{r=1}^{n}
\frac{1}{\log(1+r)}\int_0^1 \mathbb{P}\left[\tilde{f}(x^f_{(r)}) =
1-s \right]\overline{y}^f(1-s) \ud s.}
\end{equation}
\end{claim}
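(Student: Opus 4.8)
The plan is to evaluate the expectation rank by rank using linearity, and then to show that for each $r$ the probability that the object at position $r$ is relevant equals the average of $\overline{y}^f$ against the density of that object's canonical score. Since $y^f_{(r)} \in \{0,1\}$ (the binary case $\mc{Y}=\{0,1\}$), linearity of expectation gives
\[
\mathbb{E}\left[\mathrm{DCG}_D(f,S_n)\right] = \sum_{r=1}^{n} D(r)\,\mathbb{E}\big[y^f_{(r)}\big] = \sum_{r=1}^{n} \frac{1}{\log(1+r)}\,\Pr\big[y^f_{(r)}=1\big],
\]
so it suffices to prove that $\Pr[y^f_{(r)}=1] = \int_0^1 \Pr[\tilde f(x^f_{(r)})=1-s]\,\overline{y}^f(1-s)\,\ud s$, where $\Pr[\tilde f(x^f_{(r)})=t]$ is read as the probability density of the random variable $\tilde f(x^f_{(r)})$ (this reading is already implicit in Claim \ref{claim_uniform}).

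The heart of the argument is the conditional identity
\[
\Pr\big[y^f_{(r)}=1 \,\big|\, \tilde f(x^f_{(r)})=t\big] = \overline{y}^f(t).
\]
To establish it I would condition on the entire score vector $(\tilde f(x_1),\ldots,\tilde f(x_n))$. Because $\tilde f$ preserves the order of $f$, the ranking and hence the random index $\pi(r)$ with $x^f_{(r)}=x_{\pi(r)}$ is a deterministic (measurable) function of this vector; ties occur with probability zero since $\tilde f(X)$ is uniform on $[0,1]$ by the lemma on canonical versions. Since the pairs $(\tilde f(x_j),y_j)$ are i.i.d., conditioning on all scores makes the labels independent, and the label of the selected object satisfies $\Pr[y_{\pi(r)}=1 \mid \text{scores}] = \Pr[Y=1\mid \tilde f(X)=\tilde f(x^f_{(r)})] = \overline{y}^f(\tilde f(x^f_{(r)}))$. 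Crucially this depends on the scores only through $\tilde f(x^f_{(r)})$, so taking the conditional expectation given $\tilde f(x^f_{(r)})=t$ collapses the right-hand side to $\overline{y}^f(t)$.

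With the identity in hand I would finish by the law of total probability against the density of $\tilde f(x^f_{(r)})$, obtaining $\Pr[y^f_{(r)}=1] = \int_0^1 \overline{y}^f(t)\,\Pr[\tilde f(x^f_{(r)})=t]\,\ud t$, and then applying the change of variable $t=1-s$ to reach the stated form; substituting back into the sum over $r$ yields eq.(\ref{dcg_expectation_eq1}).

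The main obstacle is making the conditional-independence step rigorous: the same randomness (the $\tilde f$-scores) that determines which object lands at rank $r$ must be shown not to introduce any dependence on that object's label beyond its own canonical score. The cleanest route is exactly the one above, pushing all ranking randomness into the score vector, arguing the selecting permutation is measurable with respect to it, and invoking the i.i.d. structure so that the label attaches to the selected object only through $\overline{y}^f$. The continuity and uniformity of $\tilde f(X)$ on $[0,1]$ is what lets me discard ties and legitimately treat $\Pr[\tilde f(x^f_{(r)})=t]$ as a genuine density.
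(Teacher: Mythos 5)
Your proposal is correct and follows essentially the same route as the paper's proof: linearity of expectation over ranks, the tower property conditioning on $\tilde f(x^f_{(r)})$, and integration against its density with the substitution $t=1-s$. The only difference is that you spell out the justification of the key identity $\mathbb{E}\big[y^f_{(r)}\mid\tilde f(x^f_{(r)})=t\big]=\overline{y}^f(t)$ (via conditioning on the full score vector and the measurability of the selecting permutation), which the paper simply asserts.
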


\begin{claim}\label{claim_concentration}
For any positive integer $n$, define $E_{n,r} =
[\frac{r}{n}-n^{-1/3},\frac{r}{n}+n^{-1/3}]$ ($r \in [n]$). Then for
any $r \in [n]$,
\begin{equation}{
\Pr\big[1- \tilde{f}(x^f_{(r)}) \in E_{n,r} \big] \ge
1-2e^{-n^{1/3}}.}
\end{equation}
\end{claim}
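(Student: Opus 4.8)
The plan is to reduce the claim to a standard concentration bound for uniform order statistics. By Lemma~1 the canonical version $\tilde{f}$ preserves the order of $f$ and $\tilde{f}(X)$ is uniformly distributed on $[0,1]$; hence, setting $U_i = \tilde{f}(x_i)$, the variables $U_1,\ldots,U_n$ are i.i.d.\ uniform on $[0,1]$, and ranking by $f$ coincides with ranking by $\tilde{f}$. Consequently $\tilde{f}(x^f_{(r)})$ is exactly the $r$-th largest of $U_1,\ldots,U_n$, so $1-\tilde{f}(x^f_{(r)})$ is the $r$-th smallest among the i.i.d.\ uniform variables $W_i := 1-U_i$. Denoting this order statistic by $W_{(r)}$, its mean is $r/(n+1)\approx r/n$, which is precisely the center of the target interval $E_{n,r}$, so the claim is a concentration statement for $W_{(r)}$ around its mean at scale $n^{-1/3}$.

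First I would record the classical identity relating order-statistic tail events to binomial tail events: for any threshold $t\in[0,1]$, the event $\{W_{(r)}\le t\}$ coincides with $\{N_t\ge r\}$, where $N_t = \#\{i : W_i\le t\}$ is a $\mathrm{Binomial}(n,t)$ random variable, since each $W_i\le t$ with probability exactly $t$. This converts both tails of $W_{(r)}$ into deviation events for a sum of independent Bernoulli variables, to which Hoeffding's inequality applies directly.

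For the left tail I would set $t_- = \tfrac{r}{n}-n^{-1/3}$; if $t_-<0$ the event $\{W_{(r)}<t_-\}$ is empty and nothing needs to be shown, so assume $t_-\ge 0$. Then $\Pr[W_{(r)}\le t_-]=\Pr[N_{t_-}\ge r]$, and since $\mathbb{E}[N_{t_-}] = n t_- = r-n^{2/3}$, this is a deviation of $n^{2/3}$ above the mean, so Hoeffding would give $\Pr[N_{t_-}-\mathbb{E}N_{t_-}\ge n^{2/3}]\le \exp(-2n^{4/3}/n)=\exp(-2n^{1/3})$. Symmetrically, for the right tail set $t_+=\tfrac{r}{n}+n^{-1/3}$ (with the event empty when $t_+>1$); here $\Pr[W_{(r)}> t_+]=\Pr[N_{t_+}\le r-1]$, and as $\mathbb{E}[N_{t_+}]=r+n^{2/3}$ the value $r-1$ lies at least $n^{2/3}$ below the mean, so Hoeffding again yields a bound of $\exp(-2n^{1/3})$. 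A union bound over the two tails then gives $\Pr[1-\tilde{f}(x^f_{(r)})\notin E_{n,r}]\le 2\exp(-2n^{1/3})\le 2\exp(-n^{1/3})$, which is the desired inequality.

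The argument is essentially routine, so there is no serious obstacle; the only points requiring care are the correct scaling — a half-width of $n^{-1/3}$ in the order statistic corresponds to a count deviation of $n^{2/3}$, which is exactly what makes Hoeffding's exponent $\Theta(n^{1/3})$ and matches the bound $2e^{-n^{1/3}}$ — together with the off-by-one in the right-tail event ($r-1$ versus $r$, which only helps) and the truncation at the boundaries $t_-<0$ and $t_+>1$, where the respective tail events vanish.
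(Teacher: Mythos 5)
Your proposal is correct and follows the same route as the paper's proof, which simply observes that $\tilde{f}(x^f_{(r)})$ is the $r$-th largest of $n$ i.i.d.\ uniform variables on $[0,1]$ and invokes a Chernoff bound; you have merely written out the standard reduction from order-statistic tails to binomial tails and the Hoeffding computation in full. The scaling check (half-width $n^{-1/3}$ in the order statistic $\leftrightarrow$ count deviation $n^{2/3}$, giving exponent $2n^{1/3}\ge n^{1/3}$) and the boundary/off-by-one caveats are all handled correctly.
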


\begin{claim}\label{prop_dcg_NDCG}
Let $\mathcal Y=\{0,1\}$. Assume $D(r) = \frac{1}{\log (1+r)}$. Let
$F(t) = \int_1^t{D(s)\ud s}$. Assume also $p= \Pr[Y=1] > 0$. Then
for every sufficiently large $n$, with probability
$(1-2e^{-2n^{1/3}})$ the following inequality holds.
\begin{equation}{
\Bigg| \mathrm{NDCG}_D(f,S_n) -
\frac{\mathrm{DCG}_D(f,S_n)}{F(np)}\Bigg| \leq
O\left(n^{-1/3}\right).} \label{dcg_NDCG_approx}
\end{equation}
\end{claim}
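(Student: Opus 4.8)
The plan is to reduce the statement to a concentration estimate for the Ideal DCG, since by definition $\mathrm{NDCG}_D(f,S_n) = \mathrm{DCG}_D(f,S_n)/\mathrm{IDCG}_D(S_n)$. First I would write
\[
\left| \mathrm{NDCG}_D(f,S_n) - \frac{\mathrm{DCG}_D(f,S_n)}{F(np)} \right|
= \mathrm{NDCG}_D(f,S_n)\cdot \frac{\left|\mathrm{IDCG}_D(S_n) - F(np)\right|}{F(np)}
\le \frac{\left|\mathrm{IDCG}_D(S_n) - F(np)\right|}{F(np)},
\]
where the last inequality uses $\mathrm{NDCG}_D(f,S_n)\le 1$. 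Thus it suffices to show that the relative error of $\mathrm{IDCG}_D(S_n)$ against $F(np)$ is $O(n^{-1/3})$ on an event of probability $1-2e^{-2n^{1/3}}$.

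Next I would exploit the binary-relevance structure. When $\mc Y = \{0,1\}$, the optimal ranking simply places all relevant items ahead of the irrelevant ones, so $\mathrm{IDCG}_D(S_n) = \sum_{r=1}^m D(r)$, where $m = \sum_{i=1}^n y_i$ is the number of relevant items in $S_n$. Since the $y_i$ are i.i.d.\ Bernoulli$(p)$, Hoeffding's inequality gives $\Pr[|m-np|\ge n^{2/3}]\le 2e^{-2n^{1/3}}$, which matches exactly the failure probability in the statement. I would condition on the complementary event $|m-np|\le n^{2/3}$ for the remainder of the argument.

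On this event the estimate is deterministic. Because $D$ is positive and decreasing, comparing the sum with its integral gives $\left|\sum_{r=1}^m D(r) - F(m)\right|\le D(1) = O(1)$, and monotonicity gives $|F(m)-F(np)| \le D(\min(m,np))\,|m-np| = O(n^{2/3}/\log n)$, using $\min(m,np)\ge np-n^{2/3}$ so that $D(\min(m,np)) = O(1/\log n)$. Combining these, $|\mathrm{IDCG}_D(S_n)-F(np)| = O(n^{2/3}/\log n)$. Finally I would invoke the logarithmic-integral asymptotics $F(np) = \int_1^{np}\frac{\ud s}{\log(1+s)} = \Theta(np/\log n)$; dividing yields relative error $O(n^{2/3}/\log n)\big/\Theta(np/\log n) = O(n^{-1/3})$, since $p$ is a fixed positive constant.

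The main obstacle is the bookkeeping in this last deterministic step: one must verify that both the $O(1)$ sum-versus-integral gap and the $O(n^{2/3}/\log n)$ shift from $F(m)$ to $F(np)$ are genuinely dominated after normalization, and this hinges on the growth estimate $F(np) = \Theta(np/\log n)$. The delicate point is precisely that $F$ grows slowly (like $np/\log n$ rather than linearly), so the $n^{2/3}/\log n$ absolute error only becomes an $n^{-1/3}$ relative error because the $\log n$ factors cancel; getting the asymptotics of $F$ right, rather than the concentration of $m$, is where the real care is needed.
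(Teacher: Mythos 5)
Your proposal is correct and follows essentially the same route as the paper: concentrate the number $l$ of relevant items via a Chernoff/Hoeffding bound with the same $1-2e^{-2n^{1/3}}$ guarantee, identify $\mathrm{IDCG}_D(S_n)$ with $\sum_{r=1}^{l}D(r)\approx F(l)$, and convert the $O(n^{2/3}/\log n)$ perturbation into an $O(n^{-1/3})$ relative error using $F(np)=\Theta(np/\log n)$. The only differences are cosmetic: you factor out $\mathrm{NDCG}\le 1$ where the paper factors out $\mathrm{DCG}\le F(n)$ and Taylor-expands $1/F$, and you are in fact slightly more explicit than the paper about the $O(1)$ sum-versus-integral gap between $\mathrm{IDCG}$ and $F(l)$.
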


Now we are ready to prove Lemma \ref{lemma:lemma1_for_comparable}.
\\

\begin{proof} \textbf{of Lemma \ref{lemma:lemma1_for_comparable}.}
By the definition of $\tilde{N}^f_D(n)$ (see Definition
\ref{definition:pseudo_expectation}) and eq.(\ref{uniform}), we have
\begin{equation}\label{proof_keylemma1_1}{
\tilde N_D^{f}(n) =
n\int_{\frac{1}{n}}^1{\frac{\overline{y}^f(1-s)\ud s}{\log(1+ns)}} =
\sum_{r=1}^n{\int_{\frac{1}{n}}^1{\frac{\overline{y}^f(1-s)\mathbb{P}[\tilde
f(x^f_{(r)})=1-s]}{\log(1+ns)}\ud s}}.}
\end{equation}
By eq. (\ref{dcg_expectation_eq1}) in Claim
\ref{claim_dcg_expectation} and eq.(\ref{proof_keylemma1_1}), and
note that $\overline{y}^f(s) \le 1$, we obtain
\begin{eqnarray}\label{approx_eq3}
&& \Big|\mathbb E[\mathrm{DCG}_D(f,S_n)] - \tilde N_D^f(n)\Big|
\nonumber \\
& \leq &
\sum_{r=1}^n{\Bigg|\int_{\frac{1}{n}}^1{\overline{y}^f(1-s)\mathbb{P}\big[\tilde
f(x^f_{(r)})=1-s\big]\left(\frac{1}{\log(1+r)} -
\frac{1}{\log(1+ns)}\right)\ud s}\Bigg|} + \frac{1}{n}\sum_{r=1}^n \frac{1}{\log(1+r)} \nonumber \\
& \leq & \sum_{r=1}^n{\Bigg|\int_{[\frac{1}{n},1]\backslash
E_{n,r}}{\mathbb{P}\big[\tilde
f(x^f_{(r)})=1-s\big]\left|\frac{1}{\log(1+r)} -
\frac{1}{\log(1+ns)}\right|\ud s}}  \nonumber \\
&+& \sum_{r=1}^n{\int_{E_{n,r}\cap [\frac{1}{n},1]}{\Bigg|\frac{1}{\log(1+r)} -
\frac{1}{\log(1+ns)}\Bigg|\ud s}} + O\left(\frac{1}{\log n}\right).
\end{eqnarray}

We next bound the two terms in the RHS of the last inequality of
(\ref{approx_eq3}) separately. By Claim \ref{claim_concentration},
the first term can be upper bounded by
\begin{equation}{
2e^{-n^{1/3}} \sum_{r=1}^n \sup_{s \in [\frac{1}{n},1]\backslash
E_{n,r}}\Bigg|\frac{1}{\log(1+r)} - \frac{1}{\log(1+ns)}\Bigg|  \leq
\frac{2}{\log 2}ne^{-2n^{1/3}}.} \label{approx_eq5}
\end{equation}
For the second term in the RHS of the last inequality of
(\ref{approx_eq3}), it is easy to check that the following two
inequalities hold:
\begin{equation}{
\forall r>n^{2/3}, ~\sup_{s\in E_{n,r}\cap
[\frac{1}{n},1]}{\Bigg|\frac{1}{\log(1+r)} -
\frac{1}{\log(1+ns)}\Bigg|} \leq \frac{n^{2/3}}{(1+r)\log^2(1+r)} +
o\left(\frac{n^{2/3}}{(1+r)\log^2(1+r)}\right).} \label{approx_eq6}
\end{equation}
\begin{equation}{
\forall r\leq n^{2/3}, \sup_{s\in E_{n,r}\cap
[\frac{1}{n},1]}{\Bigg|\frac{1}{\log(1+r)} -
\frac{1}{\log(1+ns)}\Bigg|} \leq \frac{1}{\log 2}.}
\label{approx_eq7}
\end{equation}
Combining (\ref{approx_eq3}), (\ref{approx_eq5}), (\ref{approx_eq6})
and (\ref{approx_eq7}), we obtain
\begin{equation}{
\Big|\mathbb E[\mathrm{DCG}_D(f,S_n)] - \tilde N_D^f(n)\Big| \leq
\frac{2ne^{-2ne^{1/3}}}{\log 2} + O\left(\frac{n^{2/3}}{\log 2} +
\sum_{r=n^{2/3}}^n{\frac{n^{2/3}}{(1+r)\log^2(1+r)}}\right) \leq \tilde
O\left(n^{2/3}\right).}
\end{equation}

Finally, observe that $F(np) = \li(1+np)$, where $\li$ is the offset
logarithmic integral function. By Claim \ref{prop_dcg_NDCG} and the
well-known fact $\li (n)\sim \frac{n}{\log n}$, we have the
following inequality and this completes the proof.
\begin{equation}{
\Bigg|\mathbb E[\mathrm{NDCG}_D(f,S_n)] - \frac{\mathbb E[
\mathrm{DCG}_D(f,S_n)]}{\li(1+np)}\Bigg| \leq \tilde
O\left(n^{-1/3}\right) + O\left(e^{-2n^{1/3}}\right).}
\label{approx_eq8}
\end{equation}
\end{proof}

We next turn to prove Lemma \ref{lemma:lemma2_for_comparable}. We
need the following three claims.

\begin{claim}
For sufficiently large $n$,
\begin{equation}{
\int_0^{\frac{2}{n}}{\log^k x\ud x} = O\left(\frac{\log^k
n}{n}\right).} \label{eq_int1}
\end{equation}
\label{prop_int1}
\end{claim}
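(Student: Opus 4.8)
The plan is to remove the apparent singularity at the lower endpoint via the substitution $x = e^{-t}$, which converts the integral into a gamma-type tail that can be evaluated in closed form, and then to read off the leading power of $\log n$. First I would note that the left-hand side is a convergent improper integral: near $x=0$ the integrand grows only polylogarithmically, and $\int_0^{\epsilon}|\log x|^k\,\ud x$ is finite for every fixed $k$, so the quantity is well defined despite the blow-up of $\log^k x$ at the origin.

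Applying $x = e^{-t}$ (so that $\log x = -t$, $\ud x = -e^{-t}\,\ud t$, and the range $x\in(0,2/n)$ becomes $t\in(\log(n/2),\infty)$) gives
\begin{equation*}
\int_0^{2/n}\log^k x\,\ud x = (-1)^k\int_{\log(n/2)}^{\infty} t^k e^{-t}\,\ud t.
\end{equation*}
The tail integral on the right is an incomplete gamma function, which I would evaluate by the standard integration-by-parts recurrence $\int_a^\infty t^k e^{-t}\,\ud t = a^k e^{-a} + k\int_a^\infty t^{k-1} e^{-t}\,\ud t$, iterating down to $k=0$ to obtain the finite closed form
\begin{equation*}
\int_a^\infty t^k e^{-t}\,\ud t = e^{-a}\sum_{j=0}^{k}\frac{k!}{j!}\,a^j.
\end{equation*}
Because $k$ is a fixed constant, the coefficients $k!/j!$ are absolute constants independent of $n$.

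Finally, setting $a = \log(n/2)$, so that $e^{-a} = 2/n$ and $a^j = \log^j(n/2) = O(\log^j n)$, the expression becomes $\frac{2(-1)^k}{n}\sum_{j=0}^{k}\frac{k!}{j!}\log^j(n/2)$, whose magnitude is controlled by its top term. There is no serious obstacle here: both computations are elementary, and the only point needing justification is that in the $(k+1)$-term sum the lower powers $\log^j n$ with $j<k$ are absorbed into the leading $\log^k n$ for large $n$. This is immediate since the number of terms and the coefficients do not depend on $n$, yielding the claimed bound $O\!\left(\frac{\log^k n}{n}\right)$.
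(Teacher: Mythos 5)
Your proof is correct and is essentially the paper's argument in a different variable: the paper integrates $\log^k x$ by parts directly to get the antiderivative $k!\sum_{j=0}^{k}(-1)^{k-j}\frac{x\log^j x}{j!}$, which evaluated at $x=2/n$ is exactly your incomplete-gamma expression $\frac{2(-1)^k}{n}\sum_{j=0}^{k}\frac{k!}{j!}\log^j(n/2)$ obtained after the substitution $x=e^{-t}$. Both then conclude by noting the fixed number of terms is dominated by the $j=k$ term, so the approaches coincide.
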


\begin{claim}
Fix an integer $k\in\mathbb N^* = \{0\} \cup \mathbb N$. For
sufficiently large $n$,
\begin{equation}{
\int_{\frac{2}{n}}^1{\frac{\Big|\log^k x\Big|\ud
x}{(\log(nx))^{k+1}}} \leq O\left(\frac{1}{\log^{k+1} n}\right).}
\label{eq_int2}
\end{equation}
\label{prop_int2}
\end{claim}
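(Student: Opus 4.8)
The plan is to reduce the integral to a standard Gamma-type integral in which the $\log n$ scaling is made explicit, and then to control the one dangerous endpoint $x\approx 2/n$ through the Jacobian. First I would substitute $w = \log(nx)$, i.e.\ $x = e^w/n$ and $\ud x = (e^w/n)\,\ud w$, which maps $x\in[2/n,1]$ to $w\in[\log 2,\log n]$ and gives $\log x = w - \log n$. Since $w\le \log n$ throughout, $|\log x| = \log n - w$, and the integral becomes $\frac{1}{n}\int_{\log 2}^{\log n}\frac{(\log n - w)^k}{w^{k+1}}e^{w}\,\ud w$. Next I would set $v = \log n - w$, so that $e^{w} = n e^{-v}$ and the prefactor $1/n$ cancels, yielding
\[
\int_{2/n}^{1}\frac{|\log^k x|\,\ud x}{(\log(nx))^{k+1}}
 = \int_0^{\log(n/2)}\frac{v^k e^{-v}}{(\log n - v)^{k+1}}\,\ud v
 = \frac{1}{\log^{k+1} n}\int_0^{\log(n/2)}\frac{v^k e^{-v}}{(1-v/\log n)^{k+1}}\,\ud v .
\]
The prefactor $1/\log^{k+1} n$ is exactly the target rate, so it remains only to show that the residual integral is $O(1)$.

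To bound that integral I would split the range at $v = \frac{1}{2}\log n$. On $[0,\frac{1}{2}\log n]$ we have $1 - v/\log n \ge \frac{1}{2}$, so the integrand is at most $2^{k+1} v^k e^{-v}$, whose integral over $[0,\infty)$ equals $2^{k+1}\,k! = O(1)$. On the tail $[\frac{1}{2}\log n,\log(n/2)]$ I would use the crude bounds $e^{-v}\le n^{-1/2}$, $v^k\le \log^k n$, and $(1-v/\log n)^{-(k+1)}\le (\log n/\log 2)^{k+1}=O(\log^{k+1} n)$, the last because the factor $1-v/\log n$ is minimized at the right endpoint $v=\log(n/2)$, where it equals $\log 2/\log n$. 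Multiplying these by the interval length $\le \log n$ gives a tail contribution of $O\!\left(n^{-1/2}\log^{2k+2} n\right) = o(1)$. Hence the residual integral is $O(1)$ and the claim follows.

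The main obstacle, and essentially the only delicate point, is the behavior near $x = 2/n$, where the denominator $\log(nx)$ is only of constant size $\log 2$ while the numerator $|\log^k x|$ grows like $\log^k n$, so that the integrand itself is as large as $\log^k n/(\log 2)^{k+1}$. Read naively this region threatens the bound, but it occupies a set of measure only $O(1/n)$, and after the change of variables this smallness resurfaces as the exponential weight $e^{-v}$ coming from the Jacobian $e^{w}$; over the tail this weight decays like $n^{-1/2}$ and defeats the polylogarithmic growth of the rest of the integrand. The remaining work is purely bookkeeping, namely confirming via the explicit tail estimate above that the exponential suppression dominates.
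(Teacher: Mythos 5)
Your proof is correct and is essentially the paper's own argument in a rescaled variable: your substitution $v=\log n-\log(nx)$ is the paper's $x=n^{-t}$ with $v=t\log n$, and both proofs split at the midpoint, bound the main part by a Gamma integral $\int_0^\infty v^k e^{-v}\,\ud v = k!$, and kill the tail near $x=2/n$ with the $e^{-v}\le n^{-1/2}$ suppression against polylogarithmic growth. No gaps.
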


\begin{claim}
$\mathrm{span}\left(\{\log^k x\}_{k \ge 0}\right)$, is dense in
$L^2[0,1]$. \label{prop_sep}
\end{claim}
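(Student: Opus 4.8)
The plan is to reduce the claim to the completeness of the polynomials in a weighted $L^2$ space on the half-line via a logarithmic change of variables, and then to establish that completeness by a Fourier-uniqueness argument.

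First I would introduce the substitution $x = e^{-t}$, which maps $(0,1]$ onto $[0,\infty)$ and sends $\log x$ to $-t$. Writing $(Ug)(t) = g(e^{-t})$, a direct computation with $\mathrm dx = -e^{-t}\,\mathrm dt$ gives $\int_0^1 |g(x)|^2\,\mathrm dx = \int_0^\infty |g(e^{-t})|^2 e^{-t}\,\mathrm dt$, so $U$ is a unitary map from $L^2[0,1]$ onto the weighted space $L^2(\mathbb R_+, e^{-t}\,\mathrm dt)$. Since $U$ carries $\log^k x$ to $(-t)^k$, it carries $\Span(\{\log^k x\}_{k\ge 0})$ onto the space of polynomials in $t$. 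Because $U$ is a surjective isometry, density of $\Span(\{\log^k x\})$ in $L^2[0,1]$ is equivalent to density of the polynomials in $L^2(\mathbb R_+, e^{-t}\,\mathrm dt)$, so it suffices to prove the latter.

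To establish density of the polynomials, I would show that the only $h \in L^2(\mathbb R_+, e^{-t}\,\mathrm dt)$ orthogonal to every monomial is $h=0$. Assuming $\int_0^\infty h(t)\,t^k e^{-t}\,\mathrm dt = 0$ for all $k \ge 0$, I define the Laplace-type transform $G(z) = \int_0^\infty h(t)\,e^{-t} e^{zt}\,\mathrm dt$. Writing $u(t):=h(t)e^{-t/2}\in L^2(\mathrm dt)$ and applying Cauchy--Schwarz, the integral converges absolutely and $G$ is holomorphic on the half-plane $\{\,\mathrm{Re}\,z < 1/2\,\}$, which contains a neighborhood of the origin. Differentiating under the integral, the orthogonality hypotheses give $G^{(k)}(0)=0$ for all $k$, so $G$ vanishes identically on its connected domain by the identity theorem. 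Restricting to the imaginary axis yields $G(i\xi)=\int_0^\infty h(t)e^{-t}e^{i\xi t}\,\mathrm dt = 0$ for all real $\xi$; since $g(t):=h(t)e^{-t}\mathbf 1_{t\ge 0}$ lies in $L^1(\mathbb R)$ (again by Cauchy--Schwarz), this says its Fourier transform is identically zero, whence $g=0$ a.e. and therefore $h=0$ a.e.

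The main obstacle is the analytic step: verifying that $G$ is well defined and holomorphic in a full neighborhood of $0$, which is precisely where the finite exponential moments of the weight $e^{-t}$ enter, since $\int_0^\infty e^{2(\sigma-1/2)t}\,\mathrm dt < \infty$ for $\sigma < 1/2$ is what places the origin inside the half-plane of holomorphy and legitimizes differentiation under the integral sign. Once analyticity is secured, the identity theorem together with injectivity of the $L^1$ Fourier transform finishes the argument routinely. Equivalently, one may simply invoke the classical fact that the Laguerre polynomials form a complete orthogonal system in $L^2(\mathbb R_+, e^{-t}\,\mathrm dt)$, which delivers the density directly.
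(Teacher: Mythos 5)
Your proof is correct and follows essentially the same route as the paper: the substitution $t=-\log x$ converts the claim into completeness of the polynomials in $L^2(\mathbb{R}_+, e^{-t}\,\mathrm{d}t)$, which the paper settles by citing the completeness of the Laguerre polynomials. The only difference is that you additionally supply a self-contained proof of that classical fact (holomorphy of $G(z)=\int_0^\infty h(t)e^{-t}e^{zt}\,\mathrm{d}t$ on $\{\mathrm{Re}\,z<1/2\}$, the identity theorem, and injectivity of the $L^1$ Fourier transform), and the estimates justifying each of those steps are correct.
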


Now we are ready to prove Lemma \ref{lemma:lemma2_for_comparable}.
\\

\begin{proof} \textbf{of Lemma \ref{lemma:lemma2_for_comparable}.}
Let $\Delta y(s) = \overline{y}^{f_0}(s) - \overline{y}^{f_1}(s)$.
By the definition of normalized pseudo expectation (see definition
\ref{definition:pseudo_expectation}) and the fact that $|\Delta y(s)|\leq 1$, we
have
\begin{eqnarray}
\nonumber N_D^{f_0}(n) - N_D^{f_1}(n) &=& \frac{n}{\li (1+np)}\int_{\frac{1}{n}}^1{\frac{\Delta y(1-s)\ud s}{\log (1+ns)}}\\
&=& \frac{n}{\li (1+np)}\int_{\frac{2}{n}}^1{\frac{\Delta y(1-s)\ud
s}{\log(1+ns)}} + O\left(\frac{1}{\li (n)}\right).
\label{separate_eq1}
\end{eqnarray}
Expanding $\frac{1}{\log(1+ns)}$ at the point $ns$, we obtain
\begin{equation}{
\Bigg|\int_{\frac{2}{n}}^1{\frac{\Delta y(1-s)\ud s}{\log(1+ns)}} -
\int_{\frac{2}{n}}^1{\frac{\Delta y(1-s)\ud s}{\log n + \log
s}}\Bigg| \leq \int_{\frac{2}{n}}^1{\frac{\ud s}{ns\log^2(ns)}} \leq
O\left(\frac{\log n}{n}\right).} \label{separate_eq2}
\end{equation}
Expanding $\frac{1}{\log n + \log s}$ at point $\log n$, we have
that for all $m\in \mathbb N^*$, the following holds:
\begin{eqnarray}
\nonumber && \Bigg|\int_{\frac{2}{n}}^1{\frac{\Delta y(1-s)\ud
s}{\log n+\log s}} - \sum_{j=1}^m{\frac{(-1)^{j-1}}{\log^j
n}\int_{\frac{2}{n}}^1{\Delta y(1-s)\log^{j-1} s~\ud s}}\Bigg| \\
&=& \Bigg|\int_{\frac{2}{n}}^1{\frac{\Delta y(1-s)\log^m s\ud
s}{(\log n + \xi_{n, s})^{m+1}}}\Bigg| \leq
\int_{\frac{2}{n}}^1{\frac{\Big|\Delta y(1-s)\log^ms\Big|\ud
s}{(\log n+\log s)^{m+1}}} \le O\left(\frac{1}{\log^{m+1} n}\right).
\label{separate_eq3}
\end{eqnarray}
Note in above derivation that $\xi_{n,s}\in (\log s, 0)$), and the
last inequality is due to Claim \ref{prop_int2}.

Furthermore, by Claim \ref{prop_sep}, unless $\Delta y(s) = 0$ a.e.,
there exist constants $k\in\mathbb N^*$ and $a \neq 0$ such that
\begin{equation}{
(-1)^k\int_0^1{\Delta y(1-s)\log^k s~\ud s} = a.}
\label{separate_eq4}
\end{equation}
Let $K$ be the smallest integer $k$ that Eq. (\ref{separate_eq4})
holds. Combining (\ref{separate_eq1}), (\ref{separate_eq2}),
(\ref{separate_eq3}), and (\ref{separate_eq4}) and noting Claim \ref{prop_int1}, we have the
following and this completes the proof.

\begin{equation*}{
\Bigg| N_D^{f_0}(n) - N_D^{f_1}(n) - \frac{a}{\log^K n}\Bigg| \leq
O\left(\frac{\log^K n}{n}\right) +
O\left(\frac{1}{\log^{K+1}n}\right).} \label{separate_eq5}
\end{equation*}

\end{proof}

To prove the last key lemma, we need the following claim.

\begin{claim}\label{Claim:for_comparable_lemma}
Let $D(r)=\frac{1}{\log (1+r)}$. Let $F(t) = \int_1^t D(r) \ud r$.
Assume $\overline{y}^f(s)$ is H\"{o}lder continuous with constants
$\alpha$ and $C$. Then
\begin{equation}{
\Big|\sum_{r=1}^n{\overline{y}^f(1-r/n)D_r} - \tilde N^f_D(n) \Big|
\le C n^{-\alpha / 3} F(n) + D(1) + |D'(1)|.}
\end{equation}
\end{claim}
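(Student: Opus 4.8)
The statement is a Riemann-sum-versus-integral estimate for the integrand $g(s) := \overline{y}^f(1-s/n)\,D(s)$, since the left-hand sum is $\sum_{r=1}^n g(r)$ while $\tilde N_D^f(n)=\int_1^n g(s)\,\ud s$ by Definition \ref{definition:pseudo_expectation}. The plan is to split the integral as $\int_1^n g = \sum_{r=1}^{n-1}\int_r^{r+1} g(s)\,\ud s$, pair each lattice value $g(r)$ for $1\le r\le n-1$ with its block $\int_r^{r+1} g(s)\,\ud s$, and retain the leftover endpoint value $g(n)$. This rewrites the quantity to bound as $g(n)+\sum_{r=1}^{n-1}\int_r^{r+1}\big(g(r)-g(s)\big)\,\ud s$, so everything reduces to controlling the local error $g(r)-g(s)$ for $s$ within distance $1$ of the lattice point $r$.

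First I would decompose the local error into a part driven by the variation of $\overline{y}^f$ and a part driven by the variation of the discount,
$$g(r)-g(s)=\big[\overline{y}^f(1-r/n)-\overline{y}^f(1-s/n)\big]D(s)+\overline{y}^f(1-r/n)\big[D(r)-D(s)\big].$$
The reason for this two-way split is exactly the hypothesis of the claim: $\overline{y}^f$ is only assumed H\"{o}lder continuous, not differentiable, so one cannot bound $g'$ in a single stroke; instead each factor is controlled by the tool suited to it.

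For the first (``$\overline{y}$-variation'') piece, H\"{o}lder continuity with constants $\alpha,C$ gives $|\overline{y}^f(1-r/n)-\overline{y}^f(1-s/n)|\le C|(r-s)/n|^{\alpha}\le C n^{-\alpha}$ whenever $|r-s|\le 1$; summing the blocks and integrating yields a contribution of at most $C n^{-\alpha}\int_1^n D(s)\,\ud s=C n^{-\alpha}F(n)$, which is $\le C n^{-\alpha/3}F(n)$ since $n\ge 1$ (the statement writes $\alpha/3$ only for uniformity with Lemma \ref{lemma:lemma3_of_comparable}). For the second (``$D$-variation'') piece, I would use $0\le \overline{y}^f\le 1$ together with the monotone decrease of $D(r)=1/\log(1+r)$: for $s\in[r,r+1]$ one has $|D(r)-D(s)|\le D(r)-D(r+1)$, so $\int_r^{r+1}|D(r)-D(s)|\,\ud s\le D(r)-D(r+1)$ and the sum telescopes to $D(1)-D(n)$. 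Adding the endpoint term $g(n)=\overline{y}^f(0)D(n)\le D(n)$, the two $D(n)$ contributions cancel, leaving a clean $O(1)$ boundary cost of $D(1)$. Altogether this gives $C n^{-\alpha/3}F(n)+D(1)$, which is within the claimed bound $C n^{-\alpha/3}F(n)+D(1)+|D'(1)|$ (the extra $|D'(1)|$ is harmless slack, or can be produced by estimating the first block $[1,2]$, where $D$ varies fastest, through a first-order mean-value bound).

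I do not expect a genuine obstacle here, only careful bookkeeping; the two delicate points to watch are (i) that $\overline{y}^f$ is merely H\"{o}lder, which is what forces the split above and forbids a one-shot derivative estimate on $g$, and (ii) the blow-up of $D$ at $s=0$, which is precisely why both the sum and the integral begin at the index $1$ so that no correction near the origin is required.
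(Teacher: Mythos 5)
Your proposal is correct and follows essentially the same route as the paper: the same block decomposition $\int_1^n g = \sum_{r=1}^{n-1}\int_r^{r+1}g$ with the leftover endpoint term $\overline{y}^f(0)D(n)$, the same two-way split of the local error into a H\"{o}lder-controlled $\overline{y}^f$-variation piece (bounded by $Cn^{-\alpha}F(n)\le Cn^{-\alpha/3}F(n)$) and a monotonicity-controlled $D$-variation piece. The only cosmetic difference is that you telescope $\sum_r\bigl(D(r)-D(r+1)\bigr)=D(1)-D(n)$ directly, whereas the paper bounds each block by $|D'(r)|$ and then compares the sum to $\int_1^n|D'|$, which is exactly where its extra $|D'(1)|$ term comes from; your version gives the marginally cleaner constant $D(1)$ and, as you note, the $|D'(1)|$ in the statement is harmless slack.
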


Now we prove the last key lemma.\\

\begin{proof} \textbf{of Lemma \ref{lemma:lemma3_of_comparable}.}
Let $x_1,\cdots,x_n$ be instances i.i.d. drawn according to $P_{X}$.
Let $\tilde x_{(r)} = \tilde f(x_{(r)}^f)$ and by definition $\tilde
x_{(1)}\geq \tilde x_{(2)}\geq\cdots \geq \tilde x_{(n)}.$ By
Chernoff bound, for every $r$ with probability $2e^{-2n^{1/3}}$ we
have $|\tilde x_{(r)} - (1-r/n)| > n^{-1/3}$. A union bound over $r$
then yields
\begin{equation}{
\Pr\left[\forall r\in[n], \Big|\tilde x_{(r)} -
\left(1-\frac{r}{n}\right)\Big| \leq n^{-1/3}\right] \geq
1-2ne^{-2n^{1/3}}.} \label{mnclose_eq1}
\end{equation}

Since $y^f$ is H\"{o}lder continuous with constants $\alpha$ and
$C$, eq. (\ref{mnclose_eq1}) implies
\begin{equation}{
\Pr\left[\Big|\sum_{r=1}^n{\overline{y}^f(\tilde x_{(r)})D(r)} -
\sum_{r=1}^n{\overline{y}^f(1-r/n)D(r)}\Big| \leq
Cn^{-\alpha/3}\cdot \sum_{r=1}^n{D(r)}\right] \geq
1-2ne^{-2n^{1/3}}.} \label{mnclose_eq1.5}
\end{equation}

Combining Claim \ref{Claim:for_comparable_lemma} and eq.
(\ref{mnclose_eq1.5}), and note that $|D'(1)|+D(1) \le 10$ we have
\begin{equation}{
\Pr\left[\Big|\sum_{r=1}^n{\overline{y}^f(\tilde x_{(r)})D(r)} -
\tilde N_D^f(n)\Big| \leq 2Cn^{-\alpha/3}\cdot F(n)+10 \right] \geq
1-2ne^{-2n^{1/3}}.} \label{mnclose_eq2}
\end{equation}

Fix $x_1, \ldots, x_n$. Let $x^f_{(1)},\ldots, x^f_{(n)}$ be the
induced ordered sequence. Also let $\tilde x_{(r)} = \tilde
f(x^f_{(r)})$. Recall that $\overline{y}^f(s)= \mathbb{E}[Y|\tilde f
(X) =s]$. Thus $\sum_{r=1}^n \overline{y}^f(\tilde x_{(r)}) D(r)$ is
the expectation of $\mathrm{DCG}_D(f,S_n)=\sum_{r=1}^n y^f_{(r)}
D(r)$ conditioned on the fixed values $\tilde x_{(1)},\ldots, \tilde
x_{(n)}$. Also observe that conditioning on $\tilde x_{(1)},\ldots,
\tilde x_{(n)}$, $y^f_{(r)}$ ($r=1,\ldots,n$) are independent. By
Hoeffding's inequality and taking into consideration that
$x_1,\ldots,x_n$ are arbitrary and $(D(r))^2 \le D(r)$ for all $r$,
we have for every $\epsilon
> 0$
\begin{equation}{
\Pr\left[\Bigg|\mathrm{DCG}_D(f,S_n) -
\sum_{r=1}^n{\overline{y}^f(\tilde x_{(r)})D(r)}\Bigg| \geq
\epsilon\right] \leq 2\exp\left(-\frac{2\epsilon^2}{F(n)}\right).}
\label{mnclose_eq3}
\end{equation}
Set $\epsilon = F(n)^{2/3}$ in eq. (\ref{mnclose_eq3}) and combine
eq. (\ref{mnclose_eq2}), we have
\begin{equation}{
\Pr\left[\Big|\mathrm{DCG}_D(f,S_n) - \tilde N_D^f(n)\Big| >
2Cn^{-\alpha/3}F(n)+2F(n)^{2/3}\right] \leq 2ne^{-2n^{1/3}} +
2e^{-2F(n)^{1/3}}.} \label{mnclose_eq4}
\end{equation}
Simple calculations yields
\begin{equation}{
\Pr\left[\Bigg|\frac{\mathrm{DCG}_D(f,S_n)}{F(np)} - N_D^f(n)\Bigg|
> 4Cp^{-1}n^{-\min(\alpha/3,1)}\right] \leq 2ne^{-2n^{1/3}} +
2e^{-2F(n)^{1/3}}.} \label{mnclose_eq5}
\end{equation}
Combining eq. (\ref{dcg_NDCG_approx}) and (\ref{mnclose_eq5}) The
lemma follows.
\end{proof}


\section{Proof of the Technical Claims in Appendix \ref{Section:proof_key_lemmas}} \label{section:tech_lemmas}
Here we give proofs of the technical claims by which we prove the
three key lemmas in Section \ref{Section:proof_key_lemmas}.\\

\begin{proof} \textbf{of Claim \ref{claim_uniform}.}

Recall that for each $i \in [n]$, $\tilde{f}(x_i)$ is uniformly
distributed on $[0,1]$; and $x^f_{(1)},\ldots,x^f_{(n)}$ are just
reordering of $x_1,\ldots,x_n$. Thus
\begin{equation*}
\sum_{r=1}^n \mathbb{P}\big[ \tilde{f}(x^f_{(r)}) = s \big] =
\sum_{i=1}^n \mathbb{P}\big[ \tilde{f}(x_i) = s \big] = n.
\end{equation*}
\end{proof}

\begin{proof} \textbf{of Claim \ref{claim_dcg_expectation}.}

We have
\begin{eqnarray}
\mathbb{E}\left[\mathrm{DCG}_D(f,S_n)\right] &=&  \sum_{r=1}^n
D(r) \mathbb{E}\left[y^f_{(r)}\right] \nonumber \\
& = & \sum_{r=1}^n \frac{1}{\log(1+r)}
\mathbb{E}\bigg[\mathbb{E}\big[y^f_{(r)}| \tilde{f}(x^f_{(r)}) \big]
\bigg] \nonumber \\
& = & \sum_{r=1}^n \frac{1}{\log(1+r)}\int_0^1
\mathbb P\left[\tilde{f}(x^f_{(r)}) = s \right]\overline{y}^f(s) \ud s.
\end{eqnarray}
\end{proof}

\begin{proof} \textbf{of Claim \ref{claim_concentration}.}
Just observe that $\tilde{f}(x^f_{(r)})$ is the $r$-th order
statistic ($r$-th largest) of $n$ uniformly distributed random
variables on $[0,1]$. Chernoff bound yields the result.
\end{proof}

\begin{proof} \textbf{of Claim \ref{prop_dcg_NDCG}.}

Let $l = \sum_{(x,y)\in S_n}{\mathbb{I}[y=1]}$ be the number of
$y=1$ in $S_n$. Since $S_n$ is sampled i.i.d. and $\Pr[Y = 1] = p$,
by Chernoff bound we have
\begin{equation}
\Pr\left[\Big|l/n - p\Big| > n^{-1/3}\right] \leq 2e^{-2n^{1/3}}.
\label{prop_eq_1}
\end{equation}
Thus with probability at least $1-2e^{-2n^{1/3}}$
\begin{eqnarray*}
&& \Bigg|\mathrm{NDCG}_D(f,S_n) -
\frac{\mathrm{DCG}_D(f,S_n)}{F(np)}\Bigg| \nonumber \\
& = & \Bigg| \frac{\mathrm{DCG}_D(f,S_n)}{l}-
\frac{\mathrm{DCG}_D(f,S_n)}{F(np)}\Bigg| \nonumber \\ & \le &
\mathrm{DCG}_D(f,S_n)\cdot
\max\Bigg(\left|\frac{1}{F(n(p-n^{-1/3}))} - \frac{1}{F(np)}
\right|, \left|\frac{1}{F(n(p+n^{-1/3}))} - \frac{1}{F(np)} \right|
\Bigg). \label{prop_eq_3}
\end{eqnarray*}

Recall that $F(t) = \int_1^t \frac{1}{\log(1+r)} \ud r$, $p>0$; and
observe that $\mathrm{DCG}_D(f,S_n) \le F(n)$. Taylor expansion of
$\frac{1}{F((p\pm n^{-1/3})n)}$ at $np$ and some simple calculations
yields the result.

\end{proof}

\begin{proof} \textbf{of Claim \ref{prop_int1}.}

Integration by part we have,
\begin{equation}
\int{\log^kx\ud x} = k!\sum_{j=0}^k{(-1)^{k-j}\frac{x\log^jx}{j!}} +
C. \label{eq_int1}
\end{equation}
The claim follows.
\end{proof}

\begin{proof} \textbf{of Claim \ref{prop_int2}.}

Changing variable by letting $x=n^{-t}$ we have

\begin{eqnarray}\label{int2_eq1}
& & \int_{\frac{2}{n}}^{1}{\frac{\Big|\log^k x\Big|\ud
x}{(\log(nx))^{k+1}}}
\nonumber \\
& = & \int_{0}^{1-\frac{\log 2}{\log n}}
\frac{t^k}{(1-t)^{k+1}}e^{-t\log n} \ud t \nonumber \\
& = & \int_{0}^{1/2} \frac{t^k}{(1-t)^{k+1}}e^{-t\log n} \ud t +
\int_{1/2}^{1-\frac{\log 2}{\log n}}
\frac{t^k}{(1-t)^{k+1}}e^{-t\log n} \ud t.
\end{eqnarray}

Now we upper bound the two terms in the last line of eq.
(\ref{int2_eq1}) separately. For the first term we have
\begin{eqnarray}\label{int2_eq2}
& & \int_{0}^{1/2} \frac{t^k}{(1-t)^{k+1}}e^{-t\log n} \ud t
\le 2^{k+1} \int_{0}^{1/2} t^k e^{-t\log n} \ud t \nonumber \\
& \le & \frac{2^{k+1}}{(\log n)^{k+1}} \int_0^{\infty} \tau ^k
e^{-\tau} \ud \tau
\le \frac{2^{k+1} \Gamma(k+1)}{(\log n)^{k+1}} \nonumber \\
& = & O \Bigg(\frac{1}{(\log n)^{k+1}} \Bigg),
\end{eqnarray}
where $\Gamma$ is the gamma function, and the last inequality is due
to that $k$ is a fixed integer.

For the second term we have
\begin{eqnarray}\label{int2_eq3}
& & \int_{1/2}^{1-\frac{\log 2}{\log n}}
\frac{t^k}{(1-t)^{k+1}}e^{-t\log n} \ud t \le \Bigg(\frac{\log
n}{\log 2} \Bigg)^{k+1} \int_{1/2}^{1} e^{-t \log n} \ud t \nonumber
\\ &\le & \frac{1}{2} \cdot \frac{1}{\sqrt{n}} \cdot \Bigg(\frac{\log
n}{\log 2} \Bigg)^{k+1}  = \tilde{O}\bigg( \frac{1}{\sqrt{n}}\bigg),
\end{eqnarray}
where in $\tilde O$ we hide the $\mathrm{polylog}(n)$ terms.

Combining (\ref{int2_eq2}) and (\ref{int2_eq3}) we complete the
proof.

\end{proof}

\begin{proof} \textbf{of Claim \ref{prop_sep}.}

We only need to show that for any $f \in L^2[0,1]$, if
\begin{equation}\label{sep_eq_1}
\int_0^1 f(x) \log^k x \ud x =0,~~~~~k=0,1,\ldots
\end{equation}
then $f=0$ a.e. on $[0,1]$.

Let $t=-\log x$, then eq.(\ref{sep_eq_1}) becomes
\begin{equation*}
\int_0^{\infty} f(e^{-t})t^k e^{-t} \ud t =0,~~~~~k=0,1,\ldots
\end{equation*}
Note that Laguerre polynomials form a complete basis of
$L^2[0,\infty)$
(cf. \citep{orthogonalfunctions}, p.349) , thus $\{t^k\}_{k \ge 0}$
is complete in $L^2[0, \infty)$ with respect to measure $e^{-t}$.
The claim follows.
\end{proof}

\begin{proof} \textbf{of Claim \ref{Claim:for_comparable_lemma}.}
\begin{eqnarray*}
& &\Big|\sum_{r=1}^n{\overline{y}^f(1-r/n)D(r)} - \tilde N^f_D(n) \Big| \\
&=&\Big|\sum_{r=1}^n{\overline{y}^f(1-r/n)D(r)} - \int_1^n{\overline{y}^f(1-s/n)D(s)\ud s}\Big|\\
&=& \Big|\sum_{r=1}^{n-1}{\int_{r}^{r+1}{\left(\overline{y}^f(1-r/n)D(r) - \overline{y}^f(1-s/n)D(s)\right)\ud s}}\Big| + \overline{y}^f(0)D(n)\\
&\leq&
\Big|\sum_{r=1}^{n-1}{\int_r^{r+1}{\overline{y}^f(1-s/n)(D(r)-D(s))\ud
s}}\Big| \\ & &+
\sum_{r=1}^{n-1}{\int_r^{r+1}{\Big|\overline{y}^f(1-r/n)-\overline{y}^f(1-s/n)\Big|D(r)\ud
s}} + \overline{y}^f(0)D(n)\\
&\leq& \sum_{r=1}^{n-1}{\int_r^{r+1}{\Big|D(r)-D(s)\Big|\ud s}} + Cn^{-\alpha/3}\sum_{r=1}^{n-1}{D(r)} + D(n)\\
&\leq& \sum_{r=1}^{n-1}{|D'(r)|} + Cn^{-\alpha/3}F(n) + D(n)\\
&\leq& Cn^{-\alpha/3}F(n) + |D'(1)| + \sum_{r=2}^n |D'(r)| + D(n) \\
&\leq& Cn^{-\alpha/3}F(n) + |D'(1)| + D(1) - D(n) + D(n)\\
& = & Cn^{-\alpha/3}F(n) + |D'(1)| + D(1).
\end{eqnarray*}
Note that the sixth and the seventh line are both because $|D'(r)|$
is monotone decreasing; and second line from bottom is because
$D(r)$ is monotone decreasing.
\end{proof}

\section{Proof of the Convergence
Theorems}\label{section:proof_feasible_discount}

In this section we give the proof of the theorems considering
convergence of NDCG with various discount and cut-off.

First we give the proof of Theorem
\ref{theorem:converge_2_1}, i.e., the standard NDCG converges to $1$ almost surely for every ranking function.\\

\begin{proof}\textbf{of Theorem \ref{theorem:converge_2_1}.}
For notational simplicity we only prove for the case $\mathcal
Y=\{0,1\}$. Generalization is straightforward. Recall that $S_n =
\{(x_1,y_1),\cdots,(x_n,y_n)\}$ consists of $n$ i.i.d.
instance-label pairs drawn from an underlying distribution $P_{XY}$.
Let $p = \Pr(Y=1)$. Also let $l=\sum_{i=1}^{n}y_i$. If $p=0$, the theorem trivially holds. Suppose $p > 0$, by Chernoff
bound we have
\begin{equation*}
\Pr\left(\left|\frac{l}{n}-p\right|> n^{-1/3}  \right) \le
2e^{-2n^{1/3}}.
\end{equation*}
For fixed $n$, conditioned on the event that
$\left|\frac{l}{n}-p\right| \le n^{-1/3}$, by the definition of
NDCG, it is easy to see that
\begin{eqnarray}\label{NDCG_as_converge_1}
\mathrm{NDCG}_D(f,S_n) & = & \frac{\sum_{r=1}^n y^f_{(r)}
\frac{1}{\log (1+r)}}{\sum_{r=1}^l\frac{1}{\log (1+r)}} \nonumber \\
& \ge & \frac{\sum_{r=n-l+1}^n
\frac{1}{\log (1+r)}}{\sum_{r=1}^l\frac{1}{\log (1+r)}} \nonumber \\
& \ge &
\frac{\li(n+1)-\li(n(1-p+n^{-1/3})+1)}{\li(n(p+n^{-1/3})+1)}-o(1) \nonumber \\
& \ge & 1 - o(1),
\end{eqnarray}
where $ \li(t) = \int_2^t \frac{\ud \tau}{\log \tau}$ is the offset
logarithmic integral function; and the last step in
eq.(\ref{NDCG_as_converge_1}) is due to the well-known fact that
$\li(t) \sim \frac{t}{\log t}$. Thus for any $\epsilon > 0$, and for
any sufficiently large $n$, conditioned on the event that
$\left|\frac{1}{n}\sum_{i=1}^{n}y_i - p\right| \le n^{-1/3}$, we
have
\begin{equation*}
\mathrm{NDCG}_D(f,S_n) \ge 1-\epsilon.
\end{equation*}
Also recall that $\mathrm{NDCG}_D(f,S_n) \le 1$. We have, for any
$\epsilon > 0$ and every sufficiently large $n$
\begin{equation*}
\Pr\left(\left| \mathrm{NDCG}_D(f,S_n) - 1 \right| \ge \epsilon
\right) \le 2e^{-2n^{1/3}}.
\end{equation*}
Since $\sum_{n\ge 1}2e^{-2n^{1/3}} < \infty$, by Borel-Cantelli
lemma $\mathrm{NDCG}_D(f,S_n)$ converges to $1$ almost surely.
\end{proof}





Next we give details of the other feasible discount functions as
well as the cut-off versions. In particular, we provide proofs of
Theorems \ref{theorem:r^-alpha}, \ref{theorem:r^-1},
\ref{theorem:kon}, \ref{theorem:kcn_NDCG}, \ref{theorem:kcn_poly}.
The proofs of these five theorems are quite similar. We only prove
Theorem \ref{theorem:r^-alpha} to illustrate the ideas. The proof of
the other four theorems require only minor modifications.

The proof of Theorem \ref{theorem:r^-alpha} relies on the following
lemma, which is similar to Lemma \ref{lemma:lemma3_of_comparable}.

\begin{lemma}\label{lemma:mnclose}
Let $D(r)=r^{-\beta}$ for some $\beta \in (0,1)$. Assume that
$p=\Pr(Y=1)>0$. If the ranking function $f$ satisfies that
$\overline{y}^f(s)=\Pr(Y=1|\tilde{f}(X)=s)$ is continuous, then for
every $\epsilon>0$ the following inequality holds for all
sufficiently large $n$:
\begin{equation*}
\Pr\left[\Big|\mathrm{NDCG}_D(f,S_n)-N^f_D(n)\Big| \ge
5p^{-1}\epsilon \right] \le o(1).
\end{equation*}
\end{lemma}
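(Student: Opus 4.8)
The plan is to follow the proof of Lemma \ref{lemma:lemma3_of_comparable} almost verbatim, with one crucial difference: here $\overline{y}^f$ is assumed only \emph{continuous} rather than H\"older continuous, so wherever that proof used a quantitative H\"older bound of the form $Cn^{-\alpha/3}$ I would instead invoke the \emph{uniform} continuity of $\overline{y}^f$ on the compact interval $[0,1]$. This is exactly why the statement reads ``for every $\epsilon>0$'': uniform continuity supplies, for each $\epsilon$, a $\delta>0$ with $|\overline{y}^f(s)-\overline{y}^f(s')|<\epsilon$ whenever $|s-s'|<\delta$, and every error term is then controlled by a fixed multiple of $\epsilon$ rather than by an explicit decay rate. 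As in the binary setting underlying Definition \ref{definition:pseudo_expectation}, I would fix $x_1,\dots,x_n$, write $\tilde x_{(r)}=\tilde f(x^f_{(r)})$, and bound $|\mathrm{DCG}_D(f,S_n)-\tilde N_D^f(n)|$ by inserting the two intermediate quantities $\sum_r \overline{y}^f(\tilde x_{(r)})D(r)$ (the conditional mean of the DCG) and $\sum_r \overline{y}^f(1-r/n)D(r)$ (a deterministic Riemann sum), leaving three gaps to control.

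For the first gap, conditioning on $x_1,\dots,x_n$ makes the labels $y^f_{(r)}$ independent Bernoulli variables with means $\overline{y}^f(\tilde x_{(r)})$, so Hoeffding's inequality applies with variance proxy $\sum_r D(r)^2 \le \sum_r D(r) \le F(n)+D(1)=O(n^{1-\beta})$, using $D(r)=r^{-\beta}\le 1$. Taking the deviation threshold proportional to $\epsilon F(n)$ makes the exponent $\Omega(\epsilon^2 n^{1-\beta})\to\infty$, so this gap is at most $\epsilon F(n)$ off an $o(1)$-probability event. For the second gap I would apply the order-statistic concentration of Claim \ref{claim_concentration} with a union bound to get $|\tilde x_{(r)}-(1-r/n)|\le n^{-1/3}$ for all $r$ simultaneously with probability $1-2ne^{-n^{1/3}}=1-o(1)$; once $n^{-1/3}<\delta$, uniform continuity gives $|\overline{y}^f(\tilde x_{(r)})-\overline{y}^f(1-r/n)|<\epsilon$ and hence a bound of $\epsilon\sum_r D(r)=O(\epsilon n^{1-\beta})$.

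The third gap is the continuous-$\overline{y}^f$ analogue of Claim \ref{Claim:for_comparable_lemma}: writing the difference between the Riemann sum and $\int_1^n \overline{y}^f(1-s/n)D(s)\ud s$ as a sum of integrals over $[r,r+1]$ and separating the variation of $D$ from that of $\overline{y}^f$, the $D$-variation contributes $\sum_r |D'(r)|\le |D'(1)|+D(1)=O(1)$ by monotonicity of $D$, while the $\overline{y}^f$-variation is bounded by $\epsilon\sum_r D(r)=O(\epsilon F(n))$ via uniform continuity; the leftover endpoint term $\overline{y}^f(0)D(n)=O(n^{-\beta})$ is negligible against $F(n)\sim n^{1-\beta}/(1-\beta)$. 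Finally I would pass from $\mathrm{DCG}$ to $\mathrm{NDCG}$ by replacing $\mathrm{IDCG}_D(S_n)=\sum_{r=1}^{l}D(r)$ with $F(np)$, using that the number $l$ of relevant items concentrates at $np$ by Chernoff, exactly as in Claim \ref{prop_dcg_NDCG} (which needs only the obvious modification to the polynomial discount). Dividing the three gaps by $F(np)$ and using $F(n)/F(np)\to p^{-(1-\beta)}\le p^{-1}$ assembles the three $O(\epsilon)$ contributions into the claimed bound $5p^{-1}\epsilon$ on an event of probability $1-o(1)$.

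The main obstacle is precisely the weakening from H\"older continuity to mere continuity: without a modulus, none of the three gaps can be assigned an explicit rate, so the entire argument must be organized so that each error is a \emph{fixed} multiple of $\epsilon$ (after normalization by $F(np)$) on a $1-o(1)$ event, and one must verify that the constants genuinely combine to something bounded by $5p^{-1}\epsilon$ rather than accumulating. The remaining, purely discount-specific checks — the growth orders of $F(n)$, $\sum_r D(r)$, $\sum_r D(r)^2$, $D(n)$, and $|D'(1)|$ for $D(r)=r^{-\beta}$ — are routine, but they must be tracked carefully to confirm that $\sum_r D(r)^2$ is dominated by $\sum_r D(r)$ (so that Hoeffding still concentrates) and that the Riemann-sum error stays $O(1)+O(\epsilon F(n))$.
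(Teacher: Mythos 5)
Your proposal is correct and follows essentially the same route as the paper, which likewise proves this lemma by rerunning the argument of Lemma \ref{lemma:lemma3_of_comparable} with the H\"older bound $Cn^{-\alpha/3}$ replaced, via uniform continuity of $\overline{y}^f$ on $[0,1]$, by an arbitrary $\epsilon>0$ for all sufficiently large $n$, and by noting that Claim \ref{Claim:for_comparable_lemma} and Claim \ref{prop_dcg_NDCG} carry over to $D(r)=r^{-\beta}$ since they only use monotonicity of $D$ and $|D'|$. Your three-gap decomposition and the final normalization by $F(np)$ match the paper's (largely omitted) details.
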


\begin{proof} \textbf{of Theorem \ref{theorem:r^-alpha}.} The theorem follows from Lemma
\ref{lemma:mnclose} and simple calculations of $\lim_{n \rightarrow
\infty} N^f_D(n)$. We omit the details.
\end{proof}

\begin{proof} \textbf{of Lemma \ref{lemma:mnclose}.}
The proof is simple modification of the proof of Lemma
\ref{lemma:lemma3_of_comparable}. Note that the difference of Lemma
\ref{lemma:mnclose} from Lemma \ref{lemma:lemma3_of_comparable} is
that here we do not assume $y^f$ is H\"{o}lder continuous. We only
assume it is continuous.

Next observe that Claim \ref{Claim:for_comparable_lemma} holds for
$D(r)=r^{-\beta}$ ($0<\beta<1$) as well. Because in the proof of
Claim \ref{Claim:for_comparable_lemma}, we only use two properties
of $D(r)$. That is, $D(r)$ is monotone decreasing and $|D'(r)|$ is
monotone decreasing. Clearly $D(r)=r^{-\beta}$ satisfies these
properties. But here $\overline{y}^f(s)$ is merely continuous rather
than H\"{o}lder continuous. Thus we have a modified version of Claim
\ref{Claim:for_comparable_lemma}. That is, for every $\epsilon > 0$,
the following holds for all sufficiently large $n$:
\begin{equation*}
\Big|\sum_{r=1}^n{\overline{y}^f(1-r/n)D(r)} - \tilde N^f_D(n) \Big|
\le \epsilon F(n) + D(1) +|D'(1)|.
\end{equation*}
The rest of the proof are almost identical to Lemma
\ref{lemma:lemma3_of_comparable}. We omit the details.
\end{proof}

Finally, we give the proof of Theorem \ref{prop_unbound}, i.e., if
the discount decays substantially faster than $r^{-1}$, then the
NDCG measure does not converge. Moreover, every pair of ranking
functions are not strictly distinguishable with high probability by
the measure.\\

\begin{proof}\textbf{of Theorem \ref{prop_unbound}.}
For notational simplicity we give a proof for $|\mathcal{Y}|=2$ and
$\mathcal{Y}=\{0,1\}$. It is straightforward to generalize it to
other cases.

In fact, we only need to show that for every ranking function $f$,
there are constants $a, b, c>0$ with $a>b$, such that for all
sufficiently large $n$,
\begin{equation*}
\Pr[\mathrm{NDCG}_D(f, S_n) \ge a] \ge c
\end{equation*}
and
\begin{equation*}
\Pr[\mathrm{NDCG}_D(f, S_n) \le b] \ge c
\end{equation*}
both hold. Once we prove this, by definition the ranking measure
does not converge (in probability). Also, it is clear that for every
pair of ranking functions, there is at least a constant probability
that the ranking measure of the two functions are ``overlap''.
Therefore distinguishability is not possible.

For sufficiently large n, fix any $x_1,\ldots,x_n$. According to the
assumption, the probability that the top-ranked $m$ data all have
label $1$ is at least $(\delta/2)^m$, where $m$ is the minimal
integer such that
\begin{equation*}
\sum_{r=1}^{m}D(r) \ge \frac{2}{3}\sum_{r=1}^{\infty}D(r).
\end{equation*}
Clearly we have
\begin{equation*}
\Pr\Bigg(\mathrm{NDCG}_D(f,S_n) \ge
\frac{2}{3}~\bigg|~x_1,\ldots,x_n\Bigg) \ge (\delta/2)^m.
\end{equation*}

On the other hand, the probability that the top-ranked $m$ elements
all have label $0$ and there are at least $m$ elements in the list
that have label $1$ is at least $(\delta/2)^{2m}$. Note that
\begin{equation*}
\frac{\sum_{r=m+1}^{n}D(r)}{\sum_{r=1}^{m}D(r)} \le \frac{1}{2}.
\end{equation*}
Thus we have
\begin{equation*}
\Pr[\mathrm{NDCG}_D(f,S_n) \le \frac{1}{2}~|~x_1,\ldots,x_n] \ge
(\delta/2)^{2m}.
\end{equation*}
Since $x_1,\ldots,x_n$ are arbitrary, the theorem follows.
\end{proof}

\section{Proof of Distinguishability for NDCG with $r^{-\beta}$ ($\beta \in (0,1)$) Discount}\label{section:proof_comparable_r^beta}

Here we give the proof of Theorem \ref{theorem:comparable_r^beta},
i.e., NDCG with $r^{-\beta}$ ($0<\beta<1$) discount has the power of
distinguishability. \\

\begin{proof}\textbf{of Theorem \ref{theorem:comparable_r^beta}.}
The proof of distinguishability for polynomial discount is much
easier than that of the logarithmic discount, because in the former
case the pseudo-expectation has very simple form. If $f_0$ and $f_1$
satisfy the first condition $\int_0^1 \Delta y(s) (1-s)^{-\beta} \ud
s \neq 0$, then the theorem is trivially true since
$\mathrm{NDCG}(f_0,S_n)$ and $\mathrm{NDCG}(f_1,S_n)$ converge to
different limits. So we only need to prove the theorem assuming that
$\int_0^1 \Delta y(s) (1-s)^{-\beta} \ud s = 0$ and the second
condition holds. The proof is similar to Theorem
\ref{theorem:comparable} by using the pseudo-expectation. We have
the the next two lemmas for discount $D(r) = r^{-\beta}$, $\beta \in
(0,1)$.

\begin{lemma}\label{lemma:separation_r^-beta}
Let $D(r) = r^{-\beta}$, $\beta \in (0,1)$. Suppose that
$\overline{y}^{f_0}(s)$ and $\overline{y}^{f_1}(s)$ are continuous.
Also assume that $\int_0^1 \Delta y(s) (1-s)^{-\beta} \ud s = 0$ and
$\Delta y (1) \neq 0$. Then we have
\begin{equation}
\Big|N_D^{f_0}(n) - N_D^{f_1}(n)\Big| \geq \Big|\frac{\Delta
y(1)}{2p^{1-\beta}}\Big|\cdot n^{-(1-\beta)}.
\end{equation}
\end{lemma}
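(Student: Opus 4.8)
The plan is to work directly with the normalized pseudo-expectation $N_D^{f}(n) = \tilde N_D^f(n)/F(np)$ from Definition \ref{definition:pseudo_expectation}, exploiting the fact that for the polynomial discount everything has a clean closed form. First I would compute the normalizer explicitly: since $D(r)=r^{-\beta}$, we have $F(np) = \int_1^{np} s^{-\beta}\,\ud s = \frac{(np)^{1-\beta}-1}{1-\beta}$, which is asymptotic to $\frac{(np)^{1-\beta}}{1-\beta}$. For the numerator, substituting $D(ns)=n^{-\beta}s^{-\beta}$ into the second expression for $\tilde N_D^f(n)$ gives $\tilde N_D^{f_0}(n)-\tilde N_D^{f_1}(n) = n^{1-\beta}\int_{1/n}^1 \Delta y(1-s)\, s^{-\beta}\,\ud s$, where $\Delta y(s)=\overline y^{f_0}(s)-\overline y^{f_1}(s)$.

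The key idea is to use the first assumption to cancel the bulk of the integral. After the change of variable $u=1-s$, the hypothesis $\int_0^1 \Delta y(s)(1-s)^{-\beta}\,\ud s = 0$ says precisely that $\int_0^1 \Delta y(1-s)\, s^{-\beta}\,\ud s = 0$. Therefore the truncated integral equals its missing tail with a sign flip: $\int_{1/n}^1 \Delta y(1-s)\,s^{-\beta}\,\ud s = -\int_0^{1/n}\Delta y(1-s)\,s^{-\beta}\,\ud s$. This is the crucial step — it turns a quantity that naively looks to have no leading term into one governed entirely by the behavior of $\Delta y$ near $s=1$, where the hypothesis $\Delta y(1)\neq 0$ bites.

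Next I would estimate the small tail integral using only continuity of $\Delta y$ at $1$. Given any $\varepsilon>0$ there is a neighborhood of $1$ on which $|\Delta y(1-s)-\Delta y(1)|<\varepsilon$; for $n$ large enough the whole interval $[0,1/n]$ lies inside it. Hence $\int_0^{1/n}\Delta y(1-s)\,s^{-\beta}\,\ud s = \Delta y(1)\int_0^{1/n} s^{-\beta}\,\ud s + E_n$, where $\int_0^{1/n}s^{-\beta}\,\ud s = \frac{n^{-(1-\beta)}}{1-\beta}$ and $|E_n|\le \varepsilon\,\frac{n^{-(1-\beta)}}{1-\beta}$. Multiplying back by $n^{1-\beta}$, the numerator difference equals $-\frac{\Delta y(1)}{1-\beta}$ up to a relative error of size $\varepsilon$.

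Finally I would assemble the ratio: dividing the numerator estimate $-\frac{\Delta y(1)}{1-\beta}(1+o(1))$ by $F(np)\sim\frac{(np)^{1-\beta}}{1-\beta}$ yields $N_D^{f_0}(n)-N_D^{f_1}(n) = -\frac{\Delta y(1)}{p^{1-\beta}}\,n^{-(1-\beta)}(1+o(1))$, so its magnitude is $\frac{|\Delta y(1)|}{p^{1-\beta}}\,n^{-(1-\beta)}(1+o(1))$. Choosing $\varepsilon$ small and $n$ large forces the $(1+o(1))$ factor to exceed $\tfrac12$ in magnitude, which gives the claimed bound $\big|N_D^{f_0}(n)-N_D^{f_1}(n)\big|\ge \big|\frac{\Delta y(1)}{2p^{1-\beta}}\big|\,n^{-(1-\beta)}$. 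The only delicate point is the bookkeeping of the two lower-order corrections — the replacement of $F(np)$ by its leading power $\frac{(np)^{1-\beta}}{1-\beta}$ (where the $-1$ and the common factor $1-\beta$ must be seen to cancel cleanly) and the continuity estimate on $[0,1/n]$ — but since both are $o(1)$ relative to the leading term, the factor $\tfrac12$ comfortably absorbs them for all sufficiently large $n$.
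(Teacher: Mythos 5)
Your proposal is correct and follows essentially the same route as the paper's own proof: both use the hypothesis $\int_0^1 \Delta y(s)(1-s)^{-\beta}\,\ud s = 0$ to flip the truncated integral into $-\int_0^{1/n}\Delta y(1-s)s^{-\beta}\,\ud s$, then approximate $\Delta y(1-s)$ by $\Delta y(1)$ near $s=0$ via continuity and compute $\int_0^{1/n}s^{-\beta}\,\ud s$ explicitly, with the factor $\tfrac12$ absorbing the error. If anything, you are slightly more careful than the paper about the asymptotic replacement of $F(np)$ by $(np)^{1-\beta}/(1-\beta)$, which the paper treats as an exact identity.
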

\begin{proof}
\begin{eqnarray*}
N_D^{f_0}(n) - N_D^{f_1}(n) &=& \frac{n}{F(np)}\int_{1/n}^1{\Delta y(1-s)\cdot (ns)^{-\beta}\ud s}\\
&=& \frac{1-\beta}{p^{1-\beta}}\int_{1/n}^1{\Delta y(1-s)\cdot s^{-\beta}\ud s}\\
&=& -\frac{1-\beta}{p^{1-\beta}}\int_0^{1/n}{\Delta y(1-s)\cdot
s^{-\beta}\ud s}.
\end{eqnarray*}

Since $\Delta y$ is continuous, for any $\delta > 0$ there exists
$\epsilon > 0$ such that for all $x\in [1-\epsilon,1]$, $|\Delta
y(x)-\Delta y(1)|\leq\delta$. Consequently, for sufficiently large
$n$,
\begin{equation*}
\Big|\int_0^{1/n}{\Delta y(1-s)\cdot s^{-\beta}\ud s} - \Delta
y(1)\cdot\int_0^{1/n}{s^{-\beta}\ud s}\Big| \leq
\delta\cdot\int_0^{1/n}{s^{-\beta}\ud s}.
\end{equation*}
Let $\delta = \Delta y(1)/2$, we then have
\begin{equation*}
\Big|\int_0^{1/n}{\Delta y(1-s)\cdot s^{-\beta}\ud s} - \frac{\Delta
y(1)}{1-\beta}\cdot n^{-(1-\beta)}\Big| \leq \frac{\Delta
y(1)}{2(1-\beta)}\cdot n^{-(1-\beta)}.
\end{equation*}
The lemma follows.
\end{proof}

\begin{lemma}\label{lemma:concentration_r^-beta}
Let $D(r) = r^{-\beta}$, $\beta \in (0,1)$. Assume that
$p=\Pr(Y=1)>0$. If the ranking function $f$ satisfies that
$\overline{y}^f(s)=\Pr(Y=1|\tilde{f}(X)=s)$ is H\"{o}lder continuous
with constants $\alpha>0$ and $C>0$ That is,
$|\overline{y}^f(s)-\overline{y}^f(s')| \le C|s-s'|^{\alpha}$ for
all $s,s' \in [0,1]$. Then
\begin{equation*}{
\Pr\bigg[\Big|\mathrm{NDCG}_D(f,S_n)-N^f_D(n)\Big| \ge
5Cp^{-1}n^{-\min(\alpha/3,1)} \bigg] \le
O\left(e^{-n^{(1-\beta)/3}}\right).}
\end{equation*}

\end{lemma}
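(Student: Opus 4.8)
The plan is to reproduce the proof of Lemma~\ref{lemma:lemma3_of_comparable} almost verbatim, the single structural change being that the normalizer is now $F(n)=\int_1^n r^{-\beta}\,\ud r=\frac{n^{1-\beta}-1}{1-\beta}=\Theta(n^{1-\beta})$ in place of the almost-linear $\li(n)$. As there, I would bound $\big|\mathrm{DCG}_D(f,S_n)-\tilde N_D^f(n)\big|$ through three successive approximations and then pass to $\mathrm{NDCG}$ by dividing by $F(np)$. The three approximations replace, in turn, the random order statistics by their deterministic targets, the discrete weighted sum by the integral pseudo-expectation, and the random labels by their conditional means.

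The first two approximations transfer with no real change. Setting $\tilde x_{(r)}=\tilde f(x^f_{(r)})$, Claim~\ref{claim_concentration} together with a union bound gives $|\tilde x_{(r)}-(1-r/n)|\le n^{-1/3}$ for all $r$ with probability at least $1-2ne^{-2n^{1/3}}$; H\"older continuity then yields $\big|\sum_r\overline y^f(\tilde x_{(r)})D(r)-\sum_r\overline y^f(1-r/n)D(r)\big|\le Cn^{-\alpha/3}\sum_rD(r)=Cn^{-\alpha/3}F(n)+O(1)$. For the second approximation, Claim~\ref{Claim:for_comparable_lemma} applies unchanged, since its proof invokes only the monotone decrease of $D(r)$ and of $|D'(r)|$, both of which $r^{-\beta}$ satisfies; this bounds $\big|\sum_r\overline y^f(1-r/n)D(r)-\tilde N_D^f(n)\big|$ by $Cn^{-\alpha/3}F(n)+D(1)+|D'(1)|$. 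The label step is where the polynomial discount first makes itself felt: conditioning on $\tilde x_{(1)},\dots,\tilde x_{(n)}$ the labels are independent, and since $D(r)\le 1$ we have $\sum_rD(r)^2\le\sum_rD(r)=F(n)+O(1)$, so Hoeffding with $\epsilon=F(n)^{2/3}$ produces a tail $2\exp(-2F(n)^{1/3})=O(e^{-n^{(1-\beta)/3}})$. This is exactly the source of the stated failure probability, and it dominates the order-statistics tail $2ne^{-2n^{1/3}}$ because $(1-\beta)/3<1/3$.

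It then remains to normalize. Here I would establish the $r^{-\beta}$ analogue of Claim~\ref{prop_dcg_NDCG}: a Chernoff bound on $l=\sum_i\mathbb I[y_i=1]$ gives $l=np(1\pm n^{-1/3})$, whence $\mathrm{IDCG}_D(S_n)=\sum_{r\le l}D(r)=F(l)+O(1)$, and Taylor expanding $1/F$ around $np$ (using $F'(t)=t^{-\beta}$ and $\mathrm{DCG}_D(f,S_n)\le F(n)$) yields $\big|\mathrm{NDCG}_D(f,S_n)-\mathrm{DCG}_D(f,S_n)/F(np)\big|=O(n^{-1/3})$. Dividing the DCG-level bounds by $F(np)=\Theta(n^{1-\beta})$ and collecting terms would finish the argument.

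The step I expect to be the real obstacle is controlling the normalized label fluctuation. Because $F$ grows only like $n^{1-\beta}$, the crude choice $\epsilon=F(n)^{2/3}$ normalizes to $\epsilon/F(np)=\Theta(n^{-(1-\beta)/3})$, which is far coarser than in the logarithmic case (where $F$ is nearly linear) and in particular exceeds the target $n^{-\alpha/3}$ whenever $\alpha>1-\beta$. To reach a bound compatible with $n^{-\min(\alpha/3,1)}$ one must replace $\sum_rD(r)^2\le F(n)$ by the exact $\sum_{r\le n}r^{-2\beta}$---of order $n^{1-2\beta}$, $\log n$, or $1$ according as $\beta<\tfrac12$, $\beta=\tfrac12$, or $\beta>\tfrac12$---and retune $\epsilon$ to balance the tail probability against the fluctuation. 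This balancing is delicate precisely because the top ranks, where $D(r)$ is largest, carry an intrinsic fluctuation of order $n^{-(1-\beta)}$ in $\mathrm{NDCG}$, the same order as the separation guaranteed by Lemma~\ref{lemma:separation_r^-beta}; the concentration must therefore be pushed to, and its interplay with the separation used in Theorem~\ref{theorem:comparable_r^beta} checked at, this tight scale, which is where I expect the bulk of the care to be required.
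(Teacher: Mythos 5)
Your approach is the same as the paper's: the paper's entire proof of this lemma is the single sentence that it is ``almost the same as the proof of Lemma~\ref{lemma:lemma3_of_comparable}'', with the tail $e^{-n^{(1-\beta)/3}}$ arising, exactly as you say, from Hoeffding with $\epsilon=F(n)^{2/3}$ and $F(n)=\Theta(n^{1-\beta})$. But the obstacle you flag at the end is not a delicate step requiring care --- it is a genuine gap, and it sits in the paper, not only in your write-up. Running the adaptation verbatim gives a concentration radius $2F(n)^{2/3}/F(np)+2Cn^{-\alpha/3}F(n)/F(np)+O(n^{-1/3})=\Theta(n^{-(1-\beta)/3}+n^{-\alpha/3})$, which matches the stated $5Cp^{-1}n^{-\min(\alpha/3,1)}$ only when $\alpha\le 1-\beta$. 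Worse, no retuning of $\epsilon$ or sharpening of $\sum_r D(r)^2$ can reach the stated radius in the regime that matters. Conditioned on $\tilde x_{(1)},\dots,\tilde x_{(n)}$, the top label $y^f_{(1)}$ is Bernoulli with parameter $\overline y^f(\tilde x_{(1)})\to\overline y^f(1)$, carries weight $D(1)=1$, and is independent of the remaining terms; so if $\overline y^f(1)\in(0,1)$, then $\mathrm{DCG}_D(f,S_n)$ lies outside any fixed interval of length $1/2$ with probability bounded below by roughly $\min\bigl(\overline y^f(1),1-\overline y^f(1)\bigr)$. After dividing by $\mathrm{IDCG}_D(S_n)=\Theta(n^{1-\beta})$, this means $\mathrm{NDCG}_D(f,S_n)$ deviates from \emph{every} deterministic center, in particular from $N^f_D(n)$, by $\Omega(n^{-(1-\beta)})$ with constant probability. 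Hence the claimed radius $n^{-\min(\alpha/3,1)}$ is unattainable whenever $\min(\alpha/3,1)>1-\beta$, which is exactly the condition $\alpha>3(1-\beta)$ under which Theorem~\ref{theorem:comparable_r^beta} needs the lemma, and exactly the scale $|\Delta y(1)|n^{-(1-\beta)}/(2p^{1-\beta})$ of the separation in Lemma~\ref{lemma:separation_r^-beta}.

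So you have faithfully reproduced the paper's argument and correctly located its weak point; what neither you nor the paper supplies is a way around it. Any repair has to abandon concentration of each $\mathrm{NDCG}_D(f_i,S_n)$ around a deterministic value at scale $o(n^{-(1-\beta)})$ and instead work with the difference $\mathrm{NDCG}_D(f_0,S_n)-\mathrm{NDCG}_D(f_1,S_n)$ on the common sample $S_n$ (hoping the $O(1)$ top-rank label fluctuations cancel), or impose extra hypotheses such as $\overline y^{f_i}(1)\in\{0,1\}$. As written, your blind reproduction inherits the paper's error rather than committing a new one.
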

\begin{proof}
The proof is almost the same as the proof of Lemma
\ref{lemma:lemma3_of_comparable}
\end{proof}

The theorem follows immediately from Lemma
\ref{lemma:separation_r^-beta} and Lemma
\ref{lemma:concentration_r^-beta}.
\end{proof}



\end{document}